\newcommand{\AS}{{\fontfamily{lmtt}\selectfont AS}\xspace}
\newcommand{\AH}{{\fontfamily{lmtt}\selectfont Moscat}\xspace}
\newcommand{\AHs}{{\fontfamily{lmtt}\selectfont Moscat$\ast$}\xspace}
\newcommand{\ah}[1]{{\fontfamily{lmtt}\selectfont #1-Moscat}}
\newcommand{\func}[2][]{%
    {\text{\fontfamily{lmtt}\selectfont #1}\left(#2\right)}}
\newcommand{\hedge}{H_{\operatorname{edge}}^{\G}}
\newcommand{\hclass}{H_{\operatorname{class}}^{\G}}
\newcommand{\hnode}[1][]{%
    \def\temp{#1}%
    \ifx\temp\empty
        {H_{\operatorname{node}}^{\G}}%
    \else
        {H_{\operatorname{node}}^{\G}}_{#1}%
    \fi
}
\def\1{\mathbbm{1}}
\newcommand{\logit}{\zeta}
\newcommand{\G}[1][]{%
    \def\temp{#1}%
    \ifx\temp\empty
        {\mathcal{G}}%
    \else
        {\mathcal{G}_{#1}}%
    \fi
}
\newcommand{\V}[1][]{%
    \def\temp{#1}%
    \ifx\temp\empty
        {\mathcal{V}}%
    \else
        {\mathcal{V}_{#1}}%
    \fi
}
\newcommand{\E}[1][]{%
    \def\temp{#1}%
    \ifx\temp\empty
        {\mathcal{E}}%
    \else
        {\mathcal{E}_{#1}}%
    \fi
}
\newcommand{\N}[1][]{%
    \def\temp{#1}%
    \ifx\temp\empty
        {\mathcal{N}}%
    \else
        {\mathcal{N}}_{#1}%
    \fi
}
\newcommand{\X}[1][]{%
    \def\temp{#1}%
    \ifx\temp\empty
        {\mathbf{X}}%
    \else
        {\mathbf{X}^{\left(#1\right)}}%
    \fi
}
\newcommand{\x}[1][]{%
    \def\temp{#1}%
    \ifx\temp\empty
        {\boldsymbol{x}}%
    \else
        {\boldsymbol{x}_{#1}}%
    \fi
}
\newcommand{\Y}[1][]{%
    \def\temp{#1}%
    \ifx\temp\empty
        {\mathbf{Y}}%
    \else
        {\mathbf{Y}^{\left(#1\right)}}%
    \fi
}
\newcommand{\y}[1][]{%
    \def\temp{#1}%
    \ifx\temp\empty
        {\boldsymbol{y}}%
    \else
        {\boldsymbol{y}_{#1}}%
    \fi
}
\newcommand{\p}[1][]{%
    \def\temp{#1}%
    \ifx\temp\empty
        {\boldsymbol{p}}%
    \else
        {\boldsymbol{p}_{#1}}%
    \fi
}
\newcommand{\Asym}[1][]{%
    \def\temp{#1}%
    \ifx\temp\empty
        {\widetilde{\mathbf{A}}}%
    \else
        {\widetilde{\mathbf{A}}_{#1}}%
    \fi
}
\newcommand{\Arw}[1][]{%
    \def\temp{#1}%
    \ifx\temp\empty
        {\widehat{\mathbf{A}}}%
    \else
        {\widehat{\mathbf{A}}_{#1}}%
    \fi
}
\newcommand{\A}[1][]{%
    \def\temp{#1}%
    \ifx\temp\empty
        {\mathbf{A}}%
    \else
        {\mathbf{A}}_{#1}%
    \fi
}
\newcommand{\D}[1][]{%
    \def\temp{#1}%
    \ifx\temp\empty
        {\mathbf{D}}%
    \else
        {\mathbf{D}}_{#1}%
    \fi
}
\newcommand{\I}[1][]{%
    \def\temp{#1}%
    \ifx\temp\empty
        {\mathbf{I}}%
    \else
        {\mathbf{I}}_{#1}%
    \fi
}
\newcommand{\Hx}[1][]{%
    \def\temp{#1}%
    \ifx\temp\empty
        {\mathbf{H}}%
    \else
        {{\mathbf{H}}^{\left(#1\right)}}%
    \fi
}
\newcommand{\hx}[1][]{%
    \def\temp{#1}%
    \ifx\temp\empty
        {\boldsymbol{h}}%
    \else
        {\boldsymbol{h}}_{\operatorname{#1}}%
    \fi
}
\newcommand{\W}[1][]{%
    \def\temp{#1}%
    \ifx\temp\empty
        {\mathbf{W}}%
    \else
        {\mathbf{W}}_{\operatorname{#1}}%
    \fi
}
\newcommand{\paren}[1]{\left( #1 \right)}
\newcommand{\hL}{\widehat{\gL}}
\newcommand{\hLg}{\hL^{\gamma}}
\newcommand{\risk}{\gL^0}
\def\gL{{\mathcal{L}}}
\def\rvf{{\mathbf{f}}}
\def\vmu{{\bm{\mu}}}
\theoremstyle{plain}
\newtheorem{theorem}{Theorem}[section]
\newtheorem{lemma}[theorem]{Lemma}
\theoremstyle{definition}
\newtheorem{definition}[theorem]{Definition}
\newtheorem{assumption}[theorem]{Assumption}
\theoremstyle{remark}
\newcommand{\flickr}{{\fontfamily{lmtt}\selectfont Flickr}}
\newcommand{\amazon}{{\fontfamily{lmtt}\selectfont amazon-ratings}}
\newcommand{\genius}{{\fontfamily{lmtt}\selectfont genius}}
\newcommand{\squirrel}{{\fontfamily{lmtt}\selectfont Squirrel}}
\newcommand{\chameleon}{{\fontfamily{lmtt}\selectfont Chameleon}}
\newcommand{\penn}{{\fontfamily{lmtt}\selectfont Penn94}}
\newcommand{\arxiv}{{\fontfamily{lmtt}\selectfont arxiv-year}}
\newcommand{\patents}{{\fontfamily{lmtt}\selectfont snap-patents}}
\newcommand{\pubmed}{{\fontfamily{lmtt}\selectfont PubMed}}
\newcommand{\ogbnarxiv}{{\fontfamily{lmtt}\selectfont ogbn-arxiv}}
\newcommand{\std}[1]{\footnotesize{\color{gray}$\pm$#1}}
\definecolor{LightGreen}{rgb}{0.55,0.9,0.55}
\definecolor{MediumGreen}{rgb}{0.3,0.85,0.3}
\definecolor{DarkGreen}{rgb}{0.,0.7,0.}
\definecolor{LightRed}{rgb}{1,0.6,0.6}
\definecolor{MediumRed}{rgb}{0.9,0.4,0.4}
\definecolor{DarkRed}{rgb}{0.85,0.1,0.1}
\definecolor{DarkGray}{rgb}{0.4,0.4,0.4}
\definecolor{customcyan}{RGB}{0, 158, 115} 
\definecolor{tealblue}{RGB}{0, 114, 178}
\definecolor{darkorange}{RGB}{213, 94, 0}
\title{Mixture of Scope Experts at Test: Generalizing Deeper Graph Neural Networks with Shallow Variants 
}
\author{%
  Gangda Deng \\
  University of Southern California \\
  Los Angeles, USA \\
  \texttt{gangdade@usc.edu} \\
  \And
  Hongkuan Zhou \\
  University of Southern California \\
  Los Angeles, USA \\
  \texttt{hongkuaz@usc.edu} \\
  \And
  Rajgopal Kannan \\
  DEVCOM ARL Army Research Office \\
  Los Angeles, USA \\
  \texttt{rajgopal.kannan.civ@army.mil} \\
  \And
  Viktor Prasanna \\
  University of Southern California \\
  Los Angeles, USA \\
  \texttt{prasanna@usc.edu} \\
}
\begin{document}

\maketitle

\begin{abstract}
Heterophilous graphs, where dissimilar nodes tend to connect, pose a challenge for graph neural networks (GNNs).
Increasing the GNN depth can expand the scope (\textit{i.e.}, receptive field), potentially finding homophily from the higher-order neighborhoods.
However, GNNs suffer from performance degradation as depth increases. Despite having better expressivity, state-of-the-art deeper GNNs achieve only marginal improvements compared to their shallow variants.
Through theoretical and empirical analysis, we systematically demonstrate a shift in GNN generalization preferences across nodes with different homophily levels as depth increases. This creates a disparity in generalization patterns between GNN models with varying depth.
Based on these findings, we propose to improve deeper GNN generalization while maintaining high expressivity by \underline{M}ixture \underline{o}f \underline{sc}ope experts \underline{a}t \underline{t}est (\AH). 
Experimental results show that \AH works flexibly with various GNNs across a wide range of datasets while significantly improving accuracy. Our code is available at 
\href{https://github.com/Hydrapse/moscat}{https://github.com/Hydrapse/moscat}.
\end{abstract}

\section{Introduction}
\label{sec: intro}



\begin{wrapfigure}{r}{0.58\textwidth}
    \vspace{-11.5mm}
    \begin{center}
        \includegraphics[width=0.58\columnwidth]{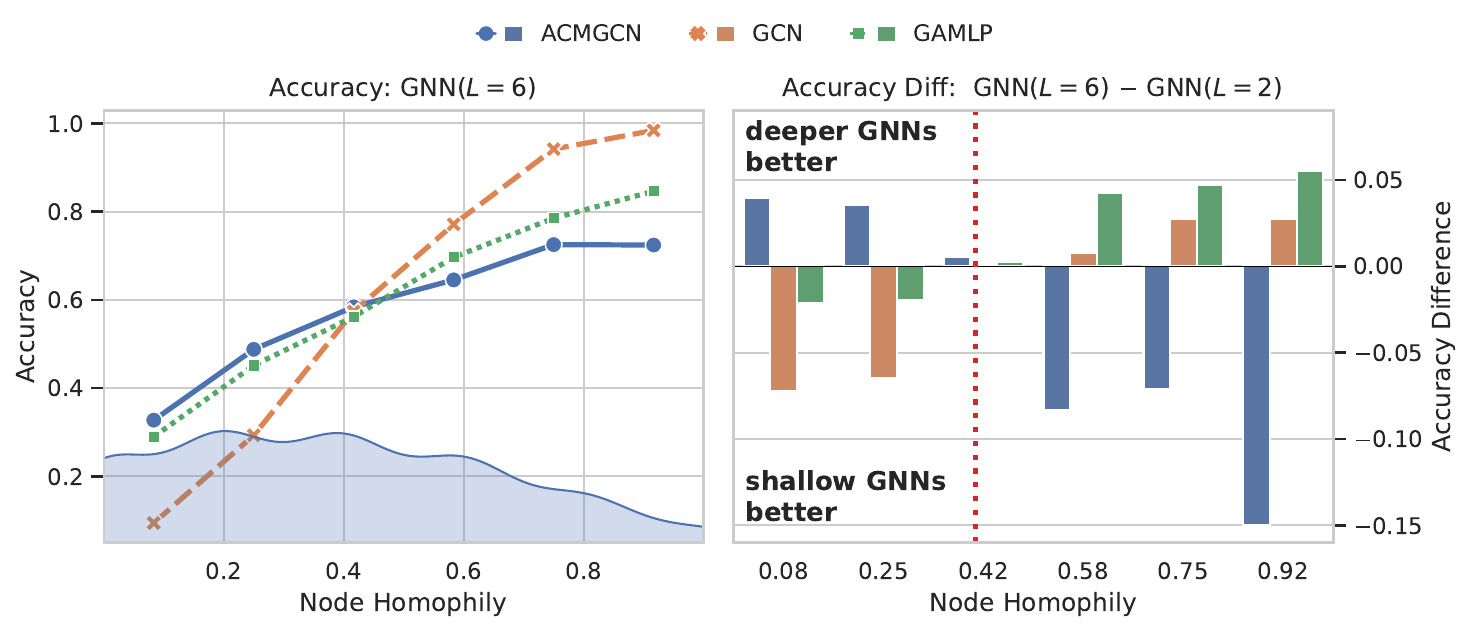}
    \end{center}
    \vspace{-2mm}
    \caption{
    Performance on {\amazon}. (Left) Deeper GNNs exhibit performance disparities across node subgroups with different homophily ratios, with the shaded area indicating the node distribution. (Right) Deeper GNNs and their shallow variants show a shift in generalization preference across homophily ratios, with the red dotted line indicating the average training set homophily (0.38).
    }
    \label{fig: intro_discrepency}
    \vspace{-8mm}
\end{wrapfigure}

Graph neural networks (GNNs) are emerging as powerful tools for graph mining applications, such as social recommendations~\cite{rs-survey}, traffic prediction~\cite{traffic-survey}, and fraud detection~\cite{fraud-survey}.
GNNs' superior performance is largely attributed to graph \textit{homophily}, where similar nodes tend to be connected.
However, some graphs exhibit \textit{heterophily}, where connected nodes tend to be dissimilar. 
When aggregating heterophilous information, GNNs tend to generate similar embeddings for nodes of different classes, leading to suboptimal performance.

To find homophily on graphs where heterophily dominates, GNNs need to search for neighbors from higher hops.
Typically, a GNN with $L$ layers of propagation can perceive the entire $L$-hop neighborhood for each node, making the \textit{scope} (i.e., receptive field) size tightly coupled with the GNN's depth. 
However, a persistent challenge is that GNNs experience performance degradation as they become deeper.
Studies have shown that deeper GNNs suffer from three key issues: reduced expressivity due to over-smoothing~\cite{over-smoothing, twosides}, model degradation from optimization challenges~\cite{degradation, training-matters}, and large generalization gaps caused by overfitting~\cite{revisit-oversmoothing}. 
While many techniques have been proposed to address the first two issues, these solutions often sacrifice generalization~\cite{provable}, resulting in limited overall performance improvements.


Existing theoretical analyses of deeper GNN generalization adopt a purely global perspective~\cite{provable, wang2024manifold}, which overlooks the diversity of local structural patterns (e.g., homophily) that appear in real‑world graphs. For example, Figure~\ref{fig: intro_discrepency} (left) shows that variations in homophily can induce substantial performance discrepancies for deeper GNNs.
Recent work has moved toward more realistic assumptions by modeling graphs as mixtures of node subgroups with varying homophily levels~\cite{demystifying, performance-discrepancy}. However, these theories are restricted to one- or two-layer GNNs and thus fail to reveal the generalization behavior of GNNs with increasing depths.

To address this gap, we derive a new PAC-Bayesian generalization bound, which reveals a \textit{shift} in model generalization preferences across nodes with different homophily levels as depth increases. 
This provides a new understanding of deeper GNNs' failure: while deeper GNNs achieve better generalization than their shallow variants in either homophily or heterophily region, they sacrifice generalization performance in the other region, thus limiting overall performance. Figure~\ref{fig: intro_discrepency} (right) demonstrates the generalization preference shift with increasing depth in real-world data.

\begin{wrapfigure}{r}{0.58\textwidth}
    \vspace{-6mm}
    \begin{center}
        \includegraphics[width=0.58\columnwidth]{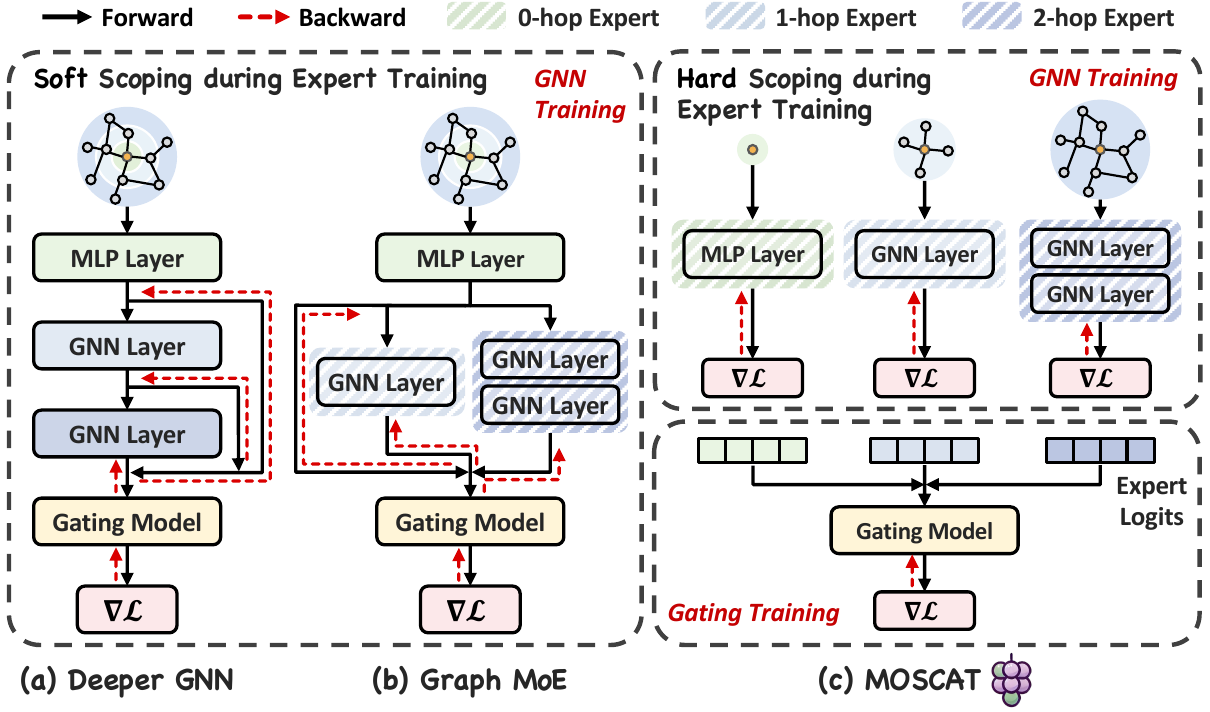}
    \end{center}
    \vspace{-3mm}
    \caption{
    The landscape of GNNs with scope mixing. 
    }
    \label{fig: intro_landscape}
    \vspace{-3mm}
\end{wrapfigure}

These insights motivate us to learn the generalization patterns of GNN experts with varying depths with a gating model, which can combine the advantages of both shallow and deeper GNNs.
We first examine existing approaches to handle GNN depth, which can be broadly categorized into two paradigms: “Deeper GNN” and “Graph MoE”
(Figure~\ref{fig: intro_landscape} left). “Deeper GNN” typically uses skip connections to mix embeddings from different scopes, while “Graph MoE” stacks varying numbers of GNN layers as experts for their corresponding scopes. These methods use gating mechanisms to mix knowledge from different scope experts and impose \textit{soft scoping}: while each expert encodes only its own scope during forward propagation, gradients from higher-hop information still flow back to the shallow experts through the gating model. Although this can be expressive, shallow experts tend to overfit to noise from higher-hop neighbors, resulting in suboptimal performance. For empirical evidence, please refer to Appendix~\ref{sec: append_soft_overfitting}.

To address this issue, we propose a new paradigm that decouples the gating module from GNN training. As illustrated in Figure~\ref{fig: intro_landscape} (right), we first train each expert independently. Then, a specialized gating model learns to node-adaptively combine predictions from shallow experts to enhance the generalization of the deeper expert. This enforces \textit{hard scoping}: each expert can only learn from the knowledge available in its specific scope. Notably, the gating model should be trained on a holdout set to accurately measure the generalization preference of experts.

Based on this paradigm, we develop a concrete method named \AH: \underline{M}ixture \underline{o}f \underline{sc}ope experts \underline{a}t \underline{t}est---a post-processing attention-based gating model that seamlessly integrates with various GNN architectures to exploit depth. 
We further develop two techniques: heterophily-biased sample filtering and scope-aware logit augmentation to address technical challenges in scope expert mixing (detailed in Section~\ref{sec: model}).
Our main contributions are summarized as follows:



\textbf{A novel paradigm:} We propose a novel decoupled expert-gating paradigm that learns generalization patterns of GNNs with different depths, supported by our theoretical analysis.


\textbf{Superior flexibility:} \AH detects various patterns of expert failure, enabling better gating that significantly improves performance across various GNN architectures.


\textbf{SOTA performance:} \AH outperforms state-of-the-art GNNs, Graph Transformers, and Graph MoEs across a wide range of datasets with varying sizes and homophily ratios.

\section{Preliminaries}

\label{sec: preli}
We denote a graph as $\G\paren{\V, \E}$ with node set $\V = \{ v_i \}^n_{i=1}$, edge set $\E\subseteq\V\times \V$. Nodes are associate with node feature matrix $\X\in\mathbb{R}^{|\V|\times F}$ and one-hot class label matrix $\mathbf{Y}\in\mathbb{R}^{|\V|\times C}$.
By $\N[v]$, we denote the neighbors of $v$, which is the set of nodes directly connected to $v$. 
Let $\A$ be the adjacency matrix,
$\D$ be the diagonal degree matrix and $\Asym=\widehat{\D}^{-\frac{1}{2}}\widehat{\A}\widehat{\D}^{-\frac{1}{2}}$ be the normalized adjacency matrix, where $\widehat{\A} = \A + \I$, $\widehat{\D}=\D + \I$, and $\I$ is the identity matrix. 
Further preliminary details and related work are in Appendix~\ref{sec: append_preliminary} and~\ref{sec: append_related}.

\noindent\textbf{Node Homophily.}
Homophily metrics are widely used as graph properties to measure the probability of nodes with the same class connected to each other. 
In this paper, we mainly use a node-wise homophily metric called \textit{node homophily}~\cite{geomgcn}. It defines the fraction of neighbors that have the same class for each node $v$: ${\left|\left\{u \in \N[v]: y_u=y_v\right\}\right|}/{\left|\N[v]\right|}$.

\noindent\textbf{Node subgroup.} The nodes in a graph can be divided into non-overlapping node subgroups, with each subgroup containing nodes that share similar properties (e.g., node homophily). 


\section{Understanding Generalization Disparity across GNN Scope Experts}
\label{sec: theory}

\subsection{Unpacking the Depth Dilemma: Why Do GNNs Struggle with Generalization?}
\label{sec: depth_delemma}



Existing studies have proved that deeper GNNs are more expressive~\cite{oono2019graph, huang2020tackling, morris2019weisfeiler, chen2020can}. In practice, however, performance degradation is widely observed when going deep.
Unlike previous studies that attribute the failure of deeper GNNs to a single cause, we argue that this failure stems from multiple factors, varying based on GNN architectures.
To analyze each factor separately, we break down the error of deeper GNNs on unseen samples.
Let $\mathcal{F}$ denote the function class for a given deeper GNN architecture, and $f_S \in \mathcal{F}$ denote a function learned on the training set $S$. The population risk (true risk) of $f_S$ over the entire data distribution can be decomposed as:
\begin{equation}
\begin{aligned}
    R\left(f_S\right)
    =&\underbrace{\hat{R}_S\left(f_{S, E R M}\right)}_{\text{representation error}}
    +\underbrace{\hat{R}_S\left(f_S\right)-\hat{R}_S\left(f_{S, E R M}\right)}_{\text {optimization error }}
    +\underbrace{R\left(f_S\right)-\hat{R}_S\left(f_S\right)}_{\text {generalization error }},
    \label{eq:decomp}
\end{aligned}
\end{equation}
where $\hat{R}_S$ is the empirical risk (training error) and $f_{S, E R M}$ is the empirical risk minimizer. In the following, we discuss these errors and connect them with recent findings:
\textbf{(1) \textit{Representation error}} measures the minimum error over the function class $\mathcal{F}$ on $S$, evaluating the expressive power of the GNN architecture.
A major factor limiting deeper GNNs' expressivity is \textit{over-smoothing}~\cite{over-smoothing}, where node representations become indistinguishable after multiple GNN layers.
Recent studies~\cite{degradation} show that GNNs without non-linearity between propagation layers (e.g., SGC) tend to suffer from over-smoothing. 
\textbf{(2) \textit{Optimization error}} depicts how well $f_S$ trained by calculating the risk difference between the learned model $f_S$ with the empirical risk minimizer $f_{S, E R M}$.
Research shows that deeper GNNs encounter significant training difficulties~\cite{training-matters} and exhibit \textit{model degradation}~\cite{degradation} as depth increases, leading to decreased accuracy in both training and testing.
The primary cause is gradient-related issues such as vanishing gradients~\cite{deepgcns} and gradient instability~\cite{provable}. These problems can be mitigated through optimization tricks like skip-connections.
\textbf{(3) \textit{Generalization error}} measures how well the trained model $f_S$ generalizes from the training set $S$ to the true distribution.
The \textit{overfitting}~\cite{revisit-oversmoothing, provable} issue has been identified to explain why well-trained deeper GNNs with higher expressivity failed to outperform their shallow variants. To address this issue, existing works typically reduce GNN parameters or employ regularization techniques.

While useful, existing solutions face inherent trading-offs between these three types of errors (See Appendix~\ref{sec: append_depth_failure} for detailed analysis). Given these theoretical limitations and the ineffective use of depth in practice, a crucial question emerges: \textit{Is it possible to enhance the generalization capabilities of deeper GNNs without compromising their expressivity or increasing training difficulty?}



\subsection{Subgroup Generalization Bound for GNNs with Varying Scopes: A Data-Centric Perspective}\label{sec: theory_bound}

To investigate this question, we conduct a theoretical analysis of how GNNs generalize at different depths.
Rather than focusing on model architecture~\cite{provable}, we take a data-centric perspective to understand generalization across diverse local structural patterns.
Recent work~\cite{demystifying, performance-discrepancy} has shown that real-world graphs comprise node subgroups with varying homophily levels, causing GNNs to exhibit generalization discrepancies across these subgroups. 
However, these studies only focus on simplified GNNs with one or two layers, which fails to explain deeper GNNs' failure.
Though deeper GNNs have superior expressivity and may generalize better on specific subgroups, these improvements often get obscured by overall performance degradation. 
To rigorously analyze generalization disparity with respect to GNN depth and examine the role of homophily,
we derive a new generalization bound for multi-layer GNNs by extending~\cite{demystifying}'s non-i.i.d. PAC-Bayesian analysis on GNNs with one-hop aggregation.
Following the assumptions used in~\cite{demystifying}, we adopt the contextual stochastic block model (CSBM) with 2 classes for a controlled study, which is widely used for graph analysis~\cite{csbm:2, csbm:4}.

\begin{assumption}[CSBM-Subgroup dataset]~\label{ass:data}
The generated nodes consist of two disjoint sets $\mathcal{C}_1$ and $\mathcal{C}_2$ of the same size. The features for nodes belonging to $\mathcal{C}_1$ and $\mathcal{C}_2$ are sampled from $N(\vmu_1, \mathbf{I})$ and $N(\vmu_2, \mathbf{I})$, respectively. Each set consists of $M$ subgroups. Each subgroup $m$, appears with probability $\Pr(m)$, has probabilities of intra-class edge $p^{(m)}$ and inter-class edge $q^{(m)} = 1 - p^{(m)}$. The dataset can be denoted as $\mathcal{D} \big(\vmu_1, \vmu_2, \{(p_{m}, q_{m}); \Pr(m)\}_{m=1}^M\big)$.
\end{assumption}

Since the key distinction between deeper neural networks and deeper GNNs is the expansion of scopes, we use SGC as our GNN model. SGC decouples scope enlargement from the addition of linear transformations, allowing us to specifically analyze how varying scopes impact generalization.

\begin{assumption}[GNN model]~\label{ass:model}
We use the following architecture: $f^{L'}\big(g^L(\X, \G); \W^{(1)}, \W^{(2)}, \cdots , \W^{(L')}\big)$,  where $g^L$ denotes an $L$-hop mean aggregation function and $f^{L'}$ is a ReLU-activated $L'$-layer MLP with hidden dimension $d$.
\end{assumption}

The following theorem is based on the PAC-Bayes analysis with margin loss~\cite{subgroup-bound, pac3}. We aim to bound the generalization gap between the expected loss $\risk_m(\theta)$ of a test subgroup $m$ for 0 margins and the empirical loss $\hLg_S(\theta)$ on train subgroup $S$ for a margin $\gamma$. 

\begin{theorem}[GNN Subgroup generalization bound]\label{theorm:1}
Assume the aggregated features $g^L(\X, \G)$ share the same variance $\sigma^2\mathbf{I}$. Let $\theta$ be any classifier in the parameter set $\{\W^{(l)}\}_{l=1}^{L'}$ and $S$ denote the training set. For any test subgroup $m \in \{1, \cdots, M\}$ and large enough number of the training nodes $N_S=|\V_S|$, with probability at least $1-\delta$ over the sample $\{y_v\}_{v\in V_S}$, there exists $0<\alpha<\frac{1}{4}$ we have: 
\begin{align}
\label{eq:main}
&\risk_m(\theta)  - \hLg_S(\theta)
\le 
  \mathcal{O}\Bigl(
      \frac{\rho}{\sigma^2}
      \Bigl(
        \epsilon_m 
        \;+\; 
        \rho \,(p_{S}-p_{m}) \,\Gamma_{L-1}
      \Bigr)
  \Bigr)
  + \mathcal{O}\Bigl(
      \frac{\|W\|^2\,(\epsilon_m)^{2/L'}}{N_S^\alpha}
  \Bigr)
  + \mathcal{O}\Bigl(
      \frac{\ln(1/\delta)}{N_S^{2\alpha}}
  \Bigr),
\end{align}
where $\|W\|^2:=\sum_{l=1}^{L^{\prime}}\|\widetilde{W_l}\|_F^2$, $\rho:=\left\|\boldsymbol{\mu}_1-\boldsymbol{\mu}_2\right\|$ is feature distribution separability,
$\epsilon_m:=\max _{u \in  V_m} \min _{v \in V_{S}}\left\|g^L(\X, \G)_u-g^L(\X, \G)_v\right\|_2$ is the bound of the aggregated feature distance, and $\Gamma_{L-1}:=\mathbb{E}_{o \sim \operatorname{Pr}(o), o \in \{1, ..., M\}}\left[\left(p_o-q_o\right)^{L-1}\right]$ represents $L$-hop homophily coefficient.
\end{theorem}


Proofs and details are in Appendix~\ref{sec:proof}. 
With a sufficiently large training set $N_S$, the bound is dominated by the first term. It has the following properties.
\textbf{Prop.1:} The generalization error for each subgroup $m$ depends on the aggregated feature distance $\epsilon_m$ as well as the homophily ratio difference $p_{S}-p_{m}$.
\textbf{Prop.2:} Consider any two subgroups $i$ and $j$ where $p_{i} > p_{S}$ and $p_{j} < p_{S}$, respectively. 
Due to the decay of $|\Gamma_{L-1}|$ when increasing $L$, the minimum achievable generalization error occurs at different depth $L$ for these two subgroups.
\textbf{Prop.3:} Varying $L$ yields a larger disparity in generalization error on heterophilous graphs (where $\Gamma_{L-1} \in [-1,1]$) than on homophilous graphs (where $\Gamma_{L-1} \in [0,1]$).

Theorem~\ref{theorm:1} suggests that varying the depth shifts the GNN generalization pattern across subgroups. This creates a notable disparity in subgroup generalization between shallow and deeper GNNs, especially on heterophilous graphs. In particular, the generalization disparity emerges between test subgroups with low and high homophily---partitioned by the average homophily ratio of training set.




\subsection{Performance Disparity across Scope Experts on Real-World Datasets}\label{sec: empirical}

In this subsection, we empirically examine our theoretical analysis from multiple perspectives.
\textbf{First}, we use Jaccard Coefficient to calculate the overlapping ratio of correctly predicted test nodes between each pair of depths. 
Figure~\ref{fig:sc_overlap_and_ens} (Top) shows that all overlapping ratios are relatively small, indicating that each variant can correctly predict a unique subset of nodes.
\begin{wrapfigure}{r}{0.5\textwidth}
  \centering
  \begin{minipage}{\linewidth}
    \includegraphics[width=\linewidth]{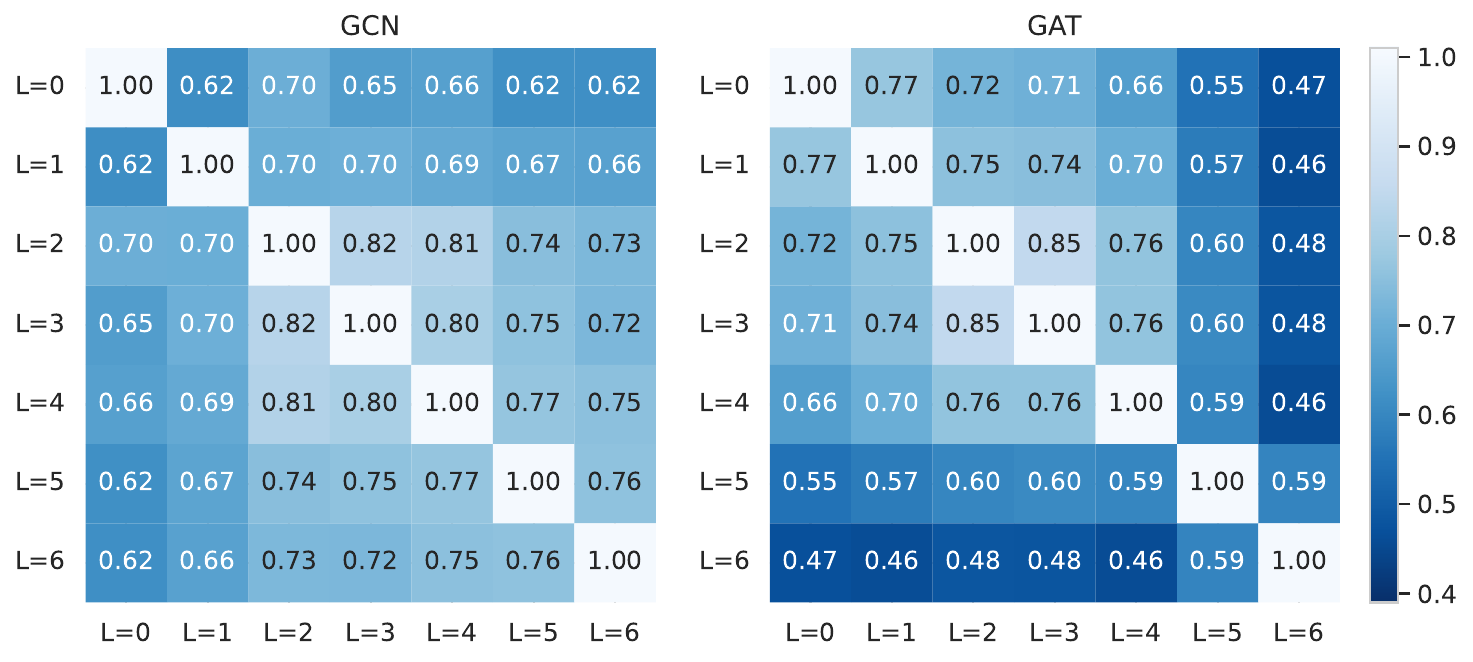}\\[2mm]
    \includegraphics[width=\linewidth]{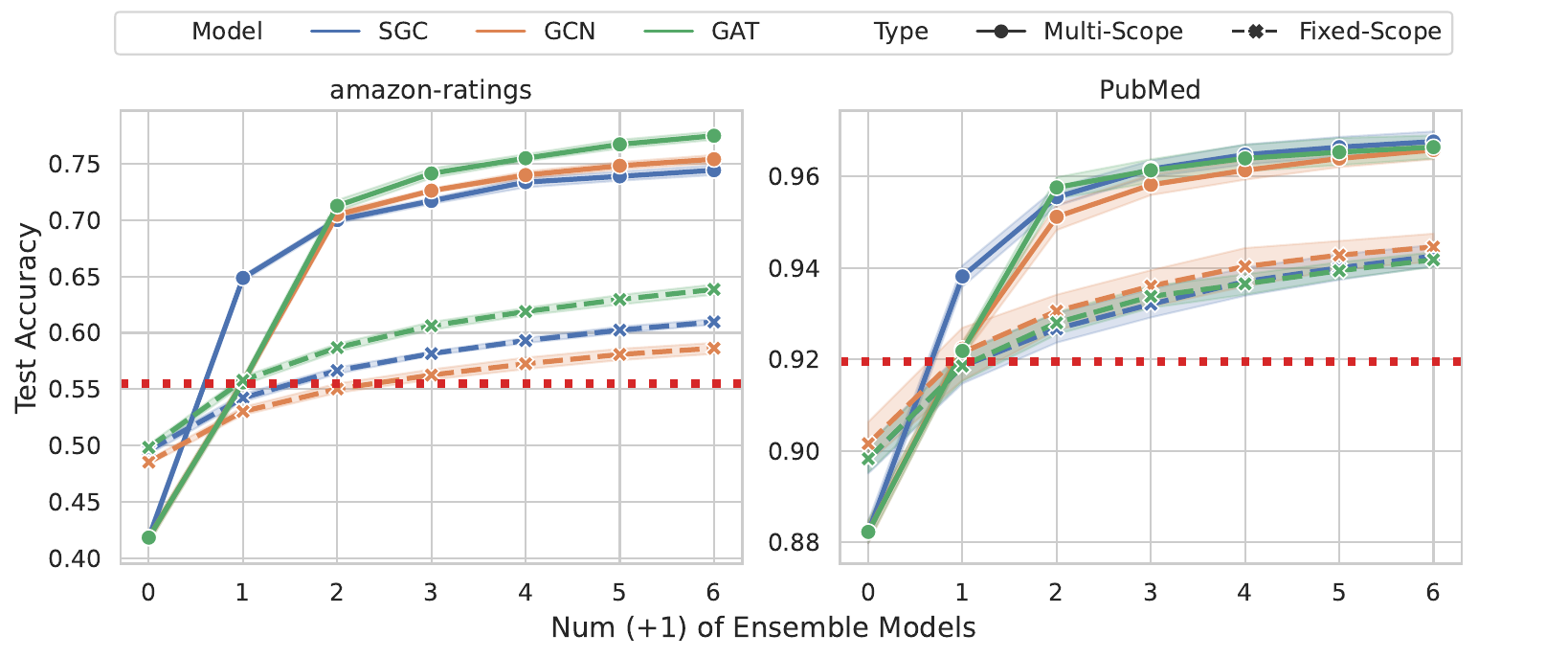}
  \end{minipage}
  \caption{%
    (Top) The overlapping ratio on {\penn}.  
    (Bottom) Test accuracy under \textit{Oracle} ensemble.  
    \textit{Multi-Scope} represents the ensemble of GNNs with depths ranging from $L=0$ (MLP) to $n$ ($n \leq 6$). \textit{Fixed-Scope} represents the ensemble of GNNs with identical depth $L=L_{\text{best}}$. 
    The horizontal red dotted line shows the SOTA GNN accuracy.
  }
  \label{fig:sc_overlap_and_ens}
  \vspace{-4mm}
\end{wrapfigure}%
The ratio between deeper and shallow variants is even lower. On {\penn}, GCN models with depth $L\ge2$ deviates from its shallow variants, while GAT shows deviation at greater depths ($L\ge5$).
\textbf{Second}, we further compare the performance of deeper and shallow GNNs across node subgroups with different homophily levels. 
As shown in Figure~\ref{fig: intro_discrepency} (Right), shallow and deeper GNN variants exhibit a significant generalization disparity between subgroups partitioned by the average homophily ratio of training set.
\textbf{Third}, we examine the significance of the observed generalization disparity. Since training randomness can lead to performance variations, we compare the \textit{union} of nodes correctly predicted by (1) models of different depths and (2) models with the same depth but different random seeds.
Figure~\ref{fig:sc_overlap_and_ens} (Bottom) demonstrates that the performance gains from increased depth significantly exceed those from training stochasticity, particularly on the heterophilous graph ({\amazon}) compared to the homophilous graph ({\pubmed}).
We defer additional empirical evidence and analysis to Appendix~\ref{sec: append_disparity}.


In summary, our theoretical and empirical analysis provides a new understanding of GNNs' depth dilemma:
Increasing GNN depth enhances generalization for certain subgroups but inevitably compromises generalization for others, leading to suboptimal overall performance.
This insight demonstrates the value of combining predictions from deeper and shallow GNN models during inference---an approach that improves overall generalization while maintaining expressivity and avoiding additional training complexity.

\section{Proposed Method: \ah{GNN}}
\label{sec: model}


Moving from the paradigm (Figure~\ref{fig: intro_landscape} right) to practical implementation, several technical challenges emerge:
\textbf{(1) Feature construction.} Although node homophily is a strong identifier for experts' generalization disparity (Section~\ref{sec: theory_bound}), it's not available during testing.
\textbf{(2) Training sample selection.} The training set for experts is noisy for training the gating model since experts can all achieve high accuracy on the training set but cannot generalize equality well.
\textbf{(3) Diverse expert architectures.} Expert failures may stem from complex reasons---such as over-smoothing, model degradation (underfitting), and overfitting (Section~\ref{sec: depth_delemma})---that vary across different expert architectures.

We introduce \AH, a post-processing gating model for scope experts that successfully addresses these challenges. Figure~\ref{fig:method_overview} presents an overview of our proposed method. \ref{sec: model_workflow} presents the overall model and \ref{sec: model_discuss} discuss the properties.

\subsection{The MoE Workflow}\label{sec: model_workflow}

\noindent\textbf{Scope Experts Training.}
Different depth GNN models serving as experts of their corresponding scope is a hot topic in recent Graph MoE research~\cite{graphmoe, mowst, damoe}.
For decoupled GNNs like SGC, we consider the number of propagation layers as depth while keeping the transformation layers fixed.
Formally, given a GNN architecture $\mathcal{M}$ and the maximum number of layers $L_{\text{max}}$, we independently train $L_{\text{max}}+1$ models $\{\mathcal{M}^{0}, ..., \mathcal{M}^{L_{\text{max}}}\}$ with depth from 0 to $L_{\text{max}}$, where $\mathcal{M}^{0}$ is an MLP.

\noindent\textbf{Holdout Set for Gating Model.}
Let $\V_{\text{exp}}$ denote the training set for the experts. To understand how experts generalize, we reserve a holdout set $\V_{\text{hold}}$ from the labeled data for training the gating model $\phi$, ensuring $\V_{\text{exp}} \cap \V_{\text{hold}}=\varnothing$. To expand the gating model's training data, we may optionally include a subset of $\V_{\text{exp}}$ to form the final gating training set $\V_{\text{gate}} \subseteq \V_{\text{exp}} \cup \V_{\text{hold}}$.

\noindent\textbf{Heterophily-Biased Sample Filtering.}
However, samples in $\V_{\text{exp}}$ can be noisy for gating model $\phi$ training,
which fall into two categories:
(1) Samples that are \textit{underfitted} or properly fitted by all experts are ideal for $\phi$ training, as they accurately reflect each expert's performance.
(2) Samples that are overfitted by some experts become problematic for $\phi$ training since $\phi$ will mistakenly assign high weights to the overfitted experts for that subgroup.
Importantly, we notice that experts tend to overfit to heterophilous nodes $\V_{\text{exp-het}}$. This is because heterophily nodes exhibit more diverse neighborhood label patterns than homophily nodes, making them more challenging for experts to generalize (See Appendix~\ref{sec: append_hetero_filter}).
To address these cases, we introduce a hyperparameter $\gamma\in[0, 1]$ that randomly filters out samples in $\V_{\text{exp-het}}$. We also add a binary hyperparameter to determine whether to use $\V_{\text{exp}}$ for gating training. We denote the node sample dropped from $\V_{\text{exp}}$ as $\V_{\text{exp-drop}}\in\{\V_{\text{exp}}, \V_{\text{exp-het}}^{(\gamma)} \}$.

Additionally, certain nodes present a particular challenge where all experts fail to make correct predictions. These difficult samples $\V_{\text{all-wrong}} \subseteq \V_{\text{exp}} \cup \V_{\text{hold}}$, found primarily in the heterophily region, fall into two categories:
(1) Cases where individual experts make incorrect predictions, but their logits still contain valuable structural patterns that contribute to task prediction.
(2) Cases where all experts generate meaningless predictions, either due to inherent dataset noise or architectural limitations.
To address both scenarios, we introduce a binary hyperparameter to control the masking of these samples $\V_{\text{wr-drop}}\in\{\V_{\text{all-wrong}}, \varnothing\}$. 
The final training set for the gating model $\phi$ is:
\begin{equation}
    \V_{\text{gate}} := \left( \V_{\text{hold}} \cup \left( \V_{\text{exp}} \setminus  \V_{\text{exp-drop}} \right) \right) \setminus \V_{\text{wr-drop}}
\end{equation}

\begin{figure*}[t]
    \centering 
    \includegraphics[width=1\textwidth]{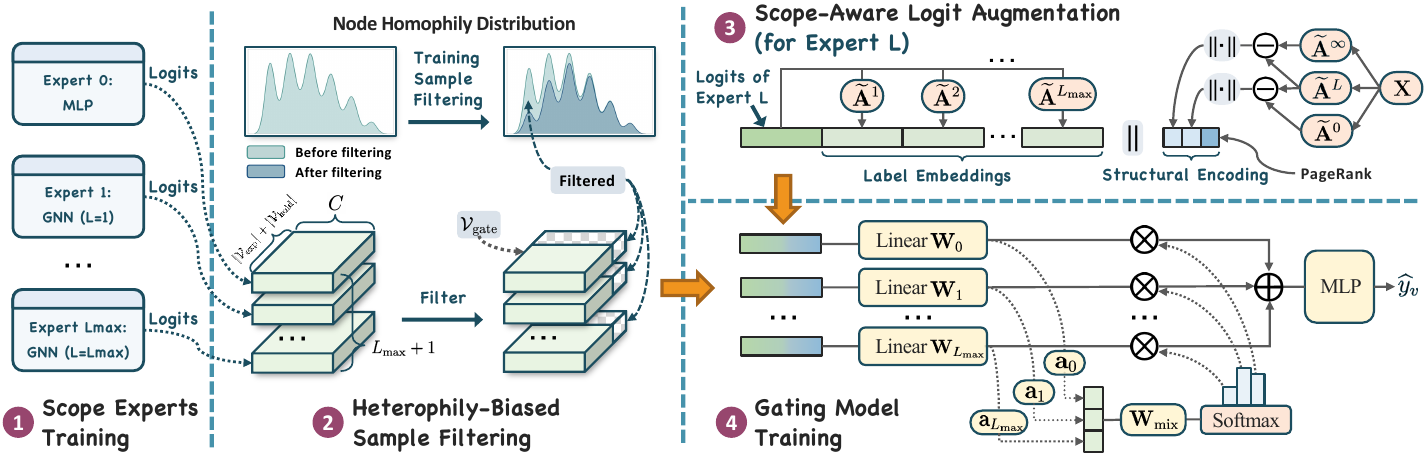}
    \caption{
    Overview of \AH. 
    (1) Different-depth GNN models serve as scope experts (with MLP as 0-hop), each trained independently.
    (2) Collect logits from each expert by running inference on the expert-training set $\V_{\text{exp}}$ and a holdout set $\V_{\text{hold}}$, then perform heterophily-biased filtering to form the gating-training set $\V_{\text{gate}}$.
    (3) Enhance logits with label embeddings and structural encoding. (4) Train the gating model using node labels, with learnable parameters shown in yellow blocks.
    }
     \label{fig:method_overview}
    \vspace{-5mm}
\end{figure*}

\noindent\textbf{Scope-Aware Logit Augmentation.}
Expert logits serve as a crucial input for the gating model, encoding both structural information and the expert's confidence. However, experts may suffer from \textit{overfitting}, showing strong confidence despite poor generalization. 
Theorem~\ref{theorm:1} shows that experts' generalization capability across subgroups can be identified by node homophily, which is defined as the similarity between a node's own label distribution and its neighborhood's label distribution. We extend this concept to higher-hop neighbors and approximate it using pseudo-label distribution. Let $\mathbf{Z}^{(L)}$ denote the logits for expert $\mathcal{M}^{L}$. We propose \textit{label embeddings} to augment $\mathcal{M}^{L}$'s logits with pseudo neighborhood label distribution from 1 to $L_{\text{max}}$ hops: $\bm{\xi}_{\text{label}}^{(L)} = \left[\widetilde{\A}^{1}\mathbf{Z}^{(L)}~\|~ \cdots~\|~\widetilde{\A}^{L{\text{max}}}\mathbf{Z}^{(L)}\right]$.

When an expert lacks expressivity, it can suffer from \textit{over-smoothing} as depth increases.
To help identify when over-smoothing happens in experts, we augment their logits with \textit{structural encoding}, incorporating node smoothness and centrality. Let $\mathbf{X}^{(L)}=\widetilde{\A}^{L} \X$ represent node features smoothed within the scope size $L$. We measure its distance from both the original feature $\X^{(0)}=\X$ and the final smoothed feature $\mathbf{X}^{(\infty)}=\widetilde{\A}^{\infty} \X$, where $\widetilde{\mathbf{A}}_{u, v}^{\infty}={\left(d_u+1\right)^{\frac{1}{2}}\left(d_v+1\right)^{\frac{1}{2}}} / {\left(2 |\E|+|\V|\right)}$~\cite{over-smoothing}. For each node $v$, we calculate two distance scalars $\bar{\epsilon}^{(L)}_{v} =\| \X^{(L)}_{v} - \X^{(0)}_{v}\|_2$ and $\tilde{\epsilon}^{(L)}_{v} =\| \X^{(L)}_{v} - \X^{(\infty)}_{v}\|_2$ to measure smoothness. We also incorporate PageRank centrality $\pi_v$ to encode node position. The structural encoding can be represented as $\bm{\xi}_{\text{struc}}^{(L)} = \left[\bar{\bm{\epsilon}}^{(L)}~\|~\tilde{\bm{\epsilon}}^{(L)}~\|~\bm{\pi}\right]$. 
Finally, we combine these components to form the overall augmented logits for expert $\mathcal{M}^{L}$ as $\bm{\logit}^{(L)} = \left[\mathbf{Z}^{(L)}~\|~\bm{\xi}_{\text{label}}^{(L)} ~ \|~\bm{\xi}_{\text{struc}}^{(L)} \right]$, where $\bm{\logit}^{(L)} \in \mathbb{R}^{|\V| \times F_{\text{aug}}}$ and $F_{\text{aug}} = C +L_{\text{max}}C + 3$.

\noindent\textbf{Gating Model Training.} 
Taking the augmented logits from all experts as input $\{\bm{\logit}^{(0)}, ..., \bm{\logit}^{(L_{\text{max}})}\}$, we train the gating model $\phi$ on $\V_{\text{gate}}$, using node labels $\mathbf{Y}$.
Figure~\ref{fig:method_overview}-4 illustrates the architecture of our gating model $\phi$. 
\AH uses an attention-based gating mechanism to calculate weight $\bm{g}_{L,v}$ for expert $L$ on node $v$. $F_{\text{hid}}$ denotes the hidden dimension and $\mathbf{W}_{L}\in\mathbb{R}^{F_{\text{aug}}\times F_{\text{hid}}}, \mathbf{a}_L\in\mathbb{R}^{F_{\text{hid}}}, \mathbf{W}_{\text{mix}}\in\mathbb{R}^{(L_\text{max}+1)\times (L_\text{max}+1)}$ are learnable weights. 
Notably, it's crucial to separate transformation weights $\mathbf{W}_{L}$ and $\mathbf{a}_L$ for each expert.
Since experts may suffer from over-smoothing and overfitting to varying degrees, each expert favors a specific combination of predefined structural patterns in $\bm{\logit}^{(L)}$. Formally, 
\begin{gather} \label{eq:scope_pred}
    \widehat{\bm{g}}_{L} = \func[Sigmoid]{\mathbf{H}_{L}\mathbf{a}_{L}}, ~ \mathbf{H}_{L} = \func[ReLU]{\bm{\logit}^{(L)}\mathbf{W}_{L}}, ~ L \in \{0, ...,{L_{\text{max}}}\}, \notag\\
    \left[\bm{g}_{0} ;\cdots ;\bm{g}_{{L_{\text{max}}}}\right] = \func[Softmax]{\left[\widehat{\bm{g}}_{0} ;\cdots ;\widehat{\bm{g}}_{{L_{\text{max}}}}\right]\W[mix]}, 
\end{gather}
$\bm{g}_{L}\in\mathbb{R}^{|\V|}$ is the node-adaptive gating weights for expert $L$. Then, we use a classifier $f(\cdot)$ to combine the knowledge from all experts to make predictions 
$\widetilde{\mathbf{Y}} = f\left(\sum_{L=0}^{L_{\text{max}}} \func[diag]{\bm{g}_{L}} \mathbf{H}_{L}\right)$.
Empirically, we find out that using a simple MLP as $f(\cdot)$ performs best compared to more complex models like GNNs or Transformers.
For the gating model optimization, we employ the same loss function used to train the experts
In this paper, we focus on the node classification task, therefore, we adopt the cross entropy loss ($\text{\fontfamily{lmtt}\selectfont CE}$) to train the gating model: 
$\mathcal{L}=\mathbb{E}_{v \sim \widetilde{\V}_{\text{train}}} \func[CE]{\phi(\{\bm{\logit}^{(L)}_v\}_{L=0}^{L_{\text{max}}}), \Y_v}$.

\subsection{Discussion and Analysis} \label{sec: model_discuss}
\textbf{Runtime and space analysis.}
\AH is a lightweight module with relatively small training time and memory consumption.
Please refer to Appendix~\ref{sec: append_runtime} and~\ref{sec: space-analysis} for detailed analysis.

\textbf{Comparison with LLM MoEs.} Unlike LLM MoEs with joint training, \AH employs a separate "train-then-merge" strategy for GNN experts. 
Please refer to Appendix~\ref{sec: append_moe} for details.

\noindent\textbf{Comparison with ensemble methods.}
Ensemble methods typically require extensive tuning of models with varying hyperparameters and architectures. 
In contrast, \AH focuses on a controlled study of scope/depth effects by using identical architecture and hyperparameters across all experts.

\section{Experiments}
\label{sec: exp}

In this section, we aim to answer the following questions to verify the effectiveness of \AH.
\textbf{Q1}: How does \AH perform in improving GNNs with varying architectures? 
\textbf{Q2}: How does \AH compare to other techniques in enhancing deeper GNNs?
\textbf{Q3}: How does each component of \AH contribute to its overall performance? 
To understand why \AH is effective, we further conduct a case study at the end of this section and defer other in-depth experimental analyses to the Appendix: Appendix~\ref{sec: append_expert_train} investigates how expert training affects \AH performance, Appendix~\ref{sec: visual} provides a visual interpretation of how \AH enhances GNN generalization, and Appendix~\ref{sec: append_gating_weights} analyzes \ AH's learned gating weights of different scope experts.





\subsection{Experimental setup}
\label{sec: main_exp_setup}

\noindent\textbf{Datasets.}
We evaluate \AH on real-world datasets with varying node homophily ratios and graph sizes. We exclude commonly used small heterophilous graphs (fewer than thousands of nodes) due to their high variance and lack of statistical significance~\cite{critical}. Please see Appendix~\ref{sec: hyper_dataset} for details.

\noindent\textbf{Evaluation Setup.}
We assess model performance on node classification task using test accuracy (ROC-AUC on {\genius}).
We adopt two sample splits for our method:
\textbf{(1) \AHs.} We split the training set into two disjoint subsets---one for training the GNN experts $\V_{\text{exp}}$ and another for the holdout set $\V_{\text{hold}}$. The split ratio serves as a tunable hyperparameter, and we do not use the original validation set for expert/gate training.
\textbf{(2) \AH.} Following prior work~\cite{policy-gnn, wang2021confident}, we leverage the labeled data in the validation set. We use the complete training set as $\V_{\text{exp}}$ and sample 90\% of the validation set for $\V_{\text{hold}}$, reserving the remaining 10\% for gating model validation. While the validation set naturally serves as a holdout set for evaluating model generalization, other baselines can not fully utilize this advantage. Please refer to Appendix~\ref{sec: hyper_moscat} for details about these two setups.

\noindent\textbf{Baselines and Hyperparameters.}
We compare ours against popular homophilous GNNs, heterophilous GNNs, Graph Transformers, Graph MoEs, and also deeper GNNs with various skip-connections. 
For baseline hyperparameters, we follow their original papers when available; otherwise, we conduct hyperparameter search using Optuna~\cite{optuna_2019}. For \AH, we tune only the gating model's hyperparameters, while experts inherit the same settings as their baseline counterparts.
Detailed baseline descriptions and hyperparameters are listed in Appendix~\ref{sec: hyper_baselines} and Appendix~\ref{sec: hyper_hyper}.

\begin{table*}[t]
\caption{
\AH exhibits large improvements over various base GNN architectures. 
Each $\ast$-\AH($\ast$) combines predictions from 7 base GNN models with 0 (MLP) to 6 convolution layers. We report the best accuracy among these base GNN models as baselines.}
\label{tab:improv}
\centering
\resizebox{1\textwidth}{!}{
\begin{tabular}{l|cccccccc|c}
    \toprule
     & \squirrel & \patents & \arxiv & \flickr & \amazon & \penn &  \genius 
     & \ogbnarxiv &\multirow{4}{*}{\shortstack[l]{Avg.\%\\{Improv.}}} \\
    \#Nodes    & 2223  & 2.9M & 0.16M & 89250  & 24492 & 40000 & 0.4M & 
 0.16M &  \\
    \#Edges    & 46998 & 13M & 1.2M & 0.89M  & 93050 & 1.3M  & 1M    & 2.3M  &\\
    Node Homo. & 0.16 & 0.19 & 0.28  & 0.32 & 0.38  & 0.48   & 0.51 & 0.64  &\\
    \midrule
    \midrule
    MLP
    & 38.57 \std{1.99} 
    & 31.13 \std{0.07}
    & 37.25 \std{0.30} 
    & 47.48 \std{0.09} 
    & 41.85 \std{0.77} 
    & 74.63 \std{0.39} 
    & 86.80 \std{0.07} 
    & 55.68 \std{0.22}
    &
    \\
    \cmidrule(lr){1-10}
    SGC 
    & 40.04 \std{1.77} 
    & 48.71 \std{0.10}
    & 45.88 \std{0.32}
    & 52.07 \std{0.15}
    & 49.58 \std{0.55}  
    & 81.17 \std{0.40}
    & 88.01 \std{0.20}
    & 71.89 \std{0.10}
    & 
    \\
    \ah{SGC}$\ast$
    & 42.73 \std{2.06}
    & 55.45 \std{0.07}
    & 52.09 \std{0.27}
    & 53.78 \std{0.07}
    & 51.82 \std{0.29}
    & 84.75 \std{0.41}
    & 92.31 \std{0.06}
    & 73.31 \std{0.10}
    & $\Uparrow\textbf{6.05}\%$
    \\
    \ah{SGC}
    & 43.35 \std{1.97}
    & 55.16 \std{0.06}
    & 51.86 \std{0.30}
    & 54.40 \std{0.11}
    & 52.72 \std{0.53}
    & 84.80 \std{0.49}
    & 92.29 \std{0.09}
    & 73.94 \std{0.22}
    & $\Uparrow\textbf{6.53}\%$
    \\
    \cmidrule(lr){1-10}
    GCN
    & 41.33 \std{1.46}
    & 50.79 \std{0.16}
    & 48.68 \std{0.34}
    & 55.52 \std{0.36}
    & 48.55 \std{0.38}
    & 82.54 \std{0.43}
    & 90.22 \std{0.26}
    & 72.13 \std{0.17}
    & 
    \\
    \ah{GCN}$\ast$
    & 42.91 \std{1.96}
    & 55.29 \std{0.09}
    & 52.58 \std{0.15}
    & 57.53 \std{0.07}
    & 51.02 \std{0.50}
    & 85.51 \std{0.42}
    & 92.37 \std{0.06}
    & 73.37 \std{0.14}
    & $\Uparrow\textbf{4.25}\%$
    \\
    \ah{GCN}
    & 43.55 \std{2.08}
    & 55.29 \std{0.07}
    & 53.00 \std{0.18}
    & 57.75 \std{0.16}
    & 52.25 \std{0.69}
    & 85.76 \std{0.32}
    & 92.37 \std{0.06}
    & 74.09 \std{0.32}
    & $\Uparrow\textbf{4.96}\%$
    \\
    \cmidrule(lr){1-10}
    GAT
    & 39.36 \std{1.89}
    & 44.45 \std{0.33}
    & 52.77 \std{0.32}
    & 55.87 \std{0.28}
    & 49.78 \std{0.47}
    & 81.81 \std{0.62}
    & 88.44 \std{1.06}
    & 72.01 \std{0.22}
    & 
    \\
    \ah{GAT}$\ast$
    & 43.10 \std{1.98}
    & 54.60 \std{0.11}
    & 55.53 \std{0.15}
    & 58.04 \std{0.13}
    & 52.56 \std{0.40}
    & 85.41 \std{0.48}
    & 92.18 \std{0.30}
    
    & 73.53 \std{0.07}
    & $\Uparrow\textbf{6.29}\%$
    \\
    \ah{GAT}
    & 43.10 \std{2.13}
    & 54.77 \std{0.09}
    & 56.06 \std{0.28}
    & 58.52 \std{0.19}
    & 53.77 \std{0.61}
    & 85.35 \std{0.59}
    & 92.21 \std{0.31}
    & 74.13 \std{0.17}
    & $\Uparrow\textbf{6.90}\%$
    \\
    \cmidrule(lr){1-10}
    GCNII 
    & 42.48 \std{1.86}
    & 49.18 \std{0.23}
    & 51.75 \std{0.36}
    & 56.31 \std{0.27}
    & 52.45 \std{0.57}
    & 82.34 \std{0.51}
    & 90.41 \std{0.35}
    & 72.74 \std{0.17}
    & 
    \\
    \ah{GCNII}$\ast$
    & 43.61 \std{2.18}
    & 54.86 \std{0.13}
    & 54.60 \std{0.24}
    & 57.45 \std{0.21}
    & 53.99 \std{0.19}
    & 85.33 \std{0.36}
    & 92.36 \std{0.07}
    & 73.33 \std{0.09}
    & $\Uparrow\textbf{3.59}\%$
    \\
    \ah{GCNII}
    & 43.71 \std{1.88}
    & 54.55 \std{0.16}
    & 55.10 \std{0.45}
    & 57.95 \std{0.16}
    & 54.38 \std{0.59}
    & 85.66 \std{0.50}
    & 92.35 \std{0.08}
    & 74.13 \std{0.27}
    & $\Uparrow\textbf{4.05}\%$
    \\
    \cmidrule(lr){1-10}
    ACMGCN
    & 35.00 \std{2.56}
    & 48.90 \std{0.15}
    & 45.45 \std{0.37}
    & 54.51 \std{0.32}
    & 53.37 \std{0.42}
    & 83.38 \std{0.47}
    & 64.74 \std{5.55}
    & 71.85 \std{0.41}
    & 
    \\
    \ah{ACMGCN}$\ast$
    & 42.81 \std{2.09}
    & 55.04 \std{0.08}
    & 50.72 \std{0.22}
    & 56.40 \std{0.24}
    & 56.02 \std{0.50}
    & 85.79 \std{0.23}
    & 91.90 \std{0.14}    
    & 73.23 \std{0.07}
    & $\Uparrow\textbf{11.97}\%$
    \\
    \ah{ACMGCN}
    & 42.78 \std{2.00}
    & 54.84 \std{0.10}
    & 50.82 \std{0.15}
    & 56.81 \std{0.23}
    & 56.36 \std{0.52}
    & 86.03 \std{0.33}
    & 91.91 \std{0.15}
    & 73.79 \std{0.20}
    & $\Uparrow\textbf{12.28}\%$
    \\
    \bottomrule
\end{tabular}
}
\end{table*}

\begin{table*}[t]
\caption{
\AH achieves new state-of-the-art results with proper base GNNs.
For each dataset, \ah{GNN}(*) reports the highest accuracy obtained by selecting its base GNN from GAT, MixHop, GCNII and ACMGCN (see Table~\ref{tab:base_gnn}). Graph MoE baselines are likewise tuned over their respective supported GNN architectures.
The top \textbf{\textcolor{customcyan}{$\mathbf{1^{st}}$}}, \textbf{\textcolor{tealblue!90}{$\mathbf{2^{nd}}$}} and \textbf{\textcolor{darkorange!90}{$\mathbf{3^{rd}}$}} results are highlighted.
}
\label{tab:main}
\centering
\resizebox{1\textwidth}{!}{
\begin{tabular}{c|l|cccccccc}
    \toprule
     \textbf{Type} & \textbf{Model} 
     & \squirrel & \patents & \arxiv & \flickr & \amazon & \penn  & \genius 
     & \ogbnarxiv \\
    \midrule
    \midrule
    \multirow{7}{*}{\shortstack[c]{Heterophily\\GNN}}
    & H2GCN 
    & 35.10 \std{1.15} 
    & OOM
    & 49.09 \std{0.10}
    & 51.60 \std{0.20} 
    & 46.31 \std{0.44} 
    & 81.31 \std{0.60} 
    & OOM 
    
    & 72.80 \std{0.24} 
    \\
    & GPRGCN 
    & 38.95 \std{1.99} 
    & 40.19 \std{0.03} 
    & 45.07 \std{0.21} 
    & 53.23 \std{0.14}
    & 48.19 \std{0.92} 
    & 84.34 \std{0.29} 
    & 90.05 \std{0.31} 
    
    & 71.10 \std{0.12} 
    \\
    & FSGNN 
    & 35.92 \std{1.30} 
    & 45.44 \std{0.05} 
    & 45.99 \std{0.35} 
    & 51.30 \std{0.10}
    & 52.74 \std{0.83} 
    & 83.87 \std{0.98} 
    & 88.95 \std{1.51} 
    
    & \textbf{\textcolor{darkorange!90}{73.50}} \std{0.30} 
    \\
    & GAT
    & 39.36 \std{1.89}
    & 44.45 \std{0.33}
    & 52.77 \std{0.32}
    & 55.87 \std{0.28}
    & 49.78 \std{0.47}
    & 81.81 \std{0.62}
    & 88.44 \std{1.06}
    
    & 72.01 \std{0.22}
    \\
    & MixHop 
    & 41.92 \std{1.83} 
    & \textbf{\textcolor{darkorange!90}{52.16}} \std{0.09} 
    & 51.81 \std{0.17} 
    & 55.30 \std{0.13}
    & 52.74 \std{0.47} 
    & \textbf{\textcolor{darkorange!90}{84.86}} \std{0.40} 
    & 90.58 \std{0.16} 
    
    & 71.29 \std{0.29}
    \\
    & GCNII 
    & 42.48 \std{1.86}
    & 49.18 \std{0.23}
    & 51.75 \std{0.36}
    & \textbf{\textcolor{darkorange!90}{56.31}} \std{0.27}
    & 52.45 \std{0.57}
    & 82.34 \std{0.51}
    & 90.41 \std{0.35}
    
    & 72.74 \std{0.17}
    \\
    & ACMGCN
    & 35.00 \std{2.56}
    & 48.90 \std{0.15}
    & 45.45 \std{0.37}
    & 54.51 \std{0.32}
    & 53.37 \std{0.42}
    & 83.38 \std{0.47}
    & 64.74 \std{5.55}
    
    & 71.85 \std{0.41}
    \\
    \cmidrule(lr){1-10}
    \multirow{3}{*}{\shortstack[c]{Graph\\Transformer}}
    & GraphGPS
    & 39.81 \std{2.28}
    & OOM
    & OOM
    & OOM
    & 53.27 \std{0.66}
    & OOM
    & OOM
    
    & OOM
    \\
    & SGFormer
    & \textbf{\textcolor{darkorange!90}{42.65}} \std{2.41}
    & 47.74 \std{0.15}
    & 46.63 \std{0.20}
    & 53.48 \std{0.13}
    & 54.14 \std{0.62} 
    & 83.07 \std{0.49}
    & 88.47 \std{0.43}
    
    & 72.76 \std{0.33}
    \\
    & Polynormer
    & 40.17 \std{2.11}
    & OOM
    & \textbf{\textcolor{darkorange!90}{53.67}} \std{0.42}
    & 53.72 \std{1.21}
    & \textbf{\textcolor{darkorange!90}{54.96}} \std{0.22}
    & 84.60 \std{0.31}
    & OOM
    
    & 73.46 \std{0.16}
    \\
    \cmidrule(lr){1-10}
    \multirow{3}{*}{\shortstack[l]{Graph MoE}}
    & GMoE 
    & 35.49 \std{1.26} 
    & 51.19 \std{0.07} 
    & 49.87 \std{0.24} 
    & 53.03 \std{0.14} 
    & 53.47 \std{0.68}
    & 81.61 \std{0.27}  
    & 88.88 \std{0.55} 
    
    & 71.88 \std{0.32}
    \\
    & Mowst 
    & 37.75 \std{3.73}
    & 45.38 \std{9.56}
    & 52.56 \std{0.22}
    & 55.48 \std{0.32}
    & 49.13 \std{0.64}
    & 84.56 \std{0.31}
    & 84.80 \std{0.54}
    
    & 72.52 \std{0.07}
    \\
    & DA-MoE 
    & 36.66 \std{0.78}
    & 51.46 \std{0.45}
    & 47.99 \std{0.20}
    & 52.21 \std{0.75}
    & 50.67 \std{0.59}
    & 80.14 \std{0.70}
    & \textbf{\textcolor{darkorange!90}{91.36}} \std{0.18}
    
    & 71.96 \std{0.16}
    \\
    \cmidrule(lr){1-10}
    \multirow{2}{*}{Ours}
    &\ah{GNN}$\ast$
    & \textbf{\textcolor{tealblue!90}{43.61}} \std{2.18}
    & \textbf{\textcolor{customcyan}{55.39}} \std{0.07}
    & \textbf{\textcolor{tealblue!90}{55.53}} \std{0.15}
    & \textbf{\textcolor{tealblue!90}{58.04}} \std{0.13}
    & \textbf{\textcolor{tealblue!90}{56.02}} \std{0.50}
    & \textbf{\textcolor{tealblue!90}{86.63}} \std{0.36}
    & \textbf{\textcolor{customcyan}{92.36}} \std{0.07}
    & \textbf{\textcolor{tealblue!90}{73.53}} \std{0.07}
    \\
    &\ah{GNN}
    & \textbf{\textcolor{customcyan}{43.71}} \std{1.88}
    & \textbf{\textcolor{tealblue!90}{55.33}} \std{0.11}
    & \textbf{\textcolor{customcyan}{56.06}} \std{0.28}
    & \textbf{\textcolor{customcyan}{58.52}} \std{0.19}
    & \textbf{\textcolor{customcyan}{56.36}} \std{0.52}
    & \textbf{\textcolor{customcyan}{86.72}} \std{0.33}
    & \textbf{\textcolor{tealblue!90}{92.35}} \std{0.08}
    & \textbf{\textcolor{customcyan}{74.13}} \std{0.27}
    \\
    \bottomrule
\end{tabular}
}
\vspace{-5mm}
\end{table*}

\subsection{Performance comparison}
\textbf{Improvements over GNNs.} To address \textbf{Q1}, Table~\ref{tab:improv} shows that both \AH variants consistently yield substantial improvements across all base GNNs. \AHs shows comparable accuracy to \AH, validating that \ AH's effectiveness does not stem from utilizing more labeled data.
Notably, \AH achieves the lowest performance gains with GCNII and the highest with ACM-GCN. This is because GCNII aims to avoid overfitting, limiting \ AH's impact. Conversely, ACM-GCN is more expressive and prone to overfitting.
Table~\ref{tab:main} further shows that by selecting proper base GNNs as experts, \AH can outperform state-of-the-art methods by a large margin on all datasets. 
In contrast, Graph MoEs and Graph Transformers struggle on certain datasets, highlighting the superiority of our paradigm. 
Appendix~\ref{sec:append_leaderboard} compares with leaderboard results.

\begin{wraptable}{r}{0.41\textwidth}
\vspace{0mm}
\small
\caption{Accuracy comparison of deeper GCNs (set $L=6$). 
} 
\vspace{-1mm}
\label{tab: gen}
\centering
\resizebox{0.41\textwidth}{!}{
\begin{tabular}{c|l|cc}
    \toprule
    & \textbf{Model} & \amazon & \penn \\
    \midrule
    \midrule
    & GCN 
    & 46.68 \std{0.65} 
    & 70.08 \std{1.13} 
    \\
    &+ \AH 
    & 52.25 \std{0.69}
    & 85.76 \std{0.32}  
    \\
    \cmidrule(lr){1-4}
    \multirow{6}{*}{\shortstack[c]{Residual \\ Connections}}
    & GCN-Res
    & 47.89 \std{0.64}
    & 82.96 \std{0.61}
    \\
    & + \AH
    & 51.90 \std{0.49}
    & 85.71 \std{0.62} 
    \\
    \cmidrule(lr){2-4}  
    & GCN-II
    & 51.77 \std{0.43}
    & 80.52 \std{0.51}
    \\
    & + \AH
    & 54.38 \std{0.59}
    & 85.66 \std{0.50}
    \\
    \cmidrule(lr){2-4}
    & GCN-Cat
    & 54.46 \std{0.40}
    & 80.40 \std{0.60}
    \\
    & + \AH
    & 56.66 \std{0.57}
    & 84.56 \std{0.58}
    \\
    \cmidrule(lr){1-4}
    \multirow{6}{*}{\shortstack[c]{Learnable \\ Gating}}
    & GCN-JK
    & 49.45 \std{0.37}
    & 82.80 \std{0.48}
    \\
    & + \AH
    & 52.23 \std{0.88}
    & 85.69 \std{0.27}
    \\
    \cmidrule(lr){2-4}
    & GCN-Attn
    & 49.51 \std{0.69}
    & 82.12 \std{0.41}
    \\
    & + \AH
    & 52.70 \std{0.81}
    & 85.64 \std{0.55}
    \\
    \cmidrule(lr){2-4}
    & GCN-G$^2$
    & 49.06 \std{0.32}
    & 78.56 \std{1.36}
    \\
    & + \AH
    & 51.04 \std{0.54}
    & 83.78 \std{0.48}
    \\
    \bottomrule
\end{tabular}
}
\vspace{-4mm}
\end{wraptable}

\textbf{Compare with techniques for deeper GNNs.} 
To answer \textbf{Q2}, we compare \AH with skip-connection methods for deeper GCN. We examine (1) residual connections (Res: ResNet-like residual connection~\cite{resnet}, II: GCNII-like initial residual, Cat: GraphSAGE-like self concatenation~\cite{sage}) and (2) learnable gating mechanisms (JK: JKNet with concatenation layer aggregation, Attn: GAMLP-like layer aggregation, G$^2$: gradient gating framework~\cite{g2gnn}).
As shown in Table~\ref{tab: gen}, \ah{GCN} outperforms other skip-connection techniques on {\penn}. Additionally, \AH achieves significant improvements over GNNs with skip-connections on {\amazon}. However, skip-connections mix output from different scopes, causing experts to make similar predictions and limiting \ AH's effectiveness. 

\begin{figure*}[t]
    \vspace{-4mm}
    \centering \hspace*{0.5mm}
    \includegraphics[width=0.95\textwidth]{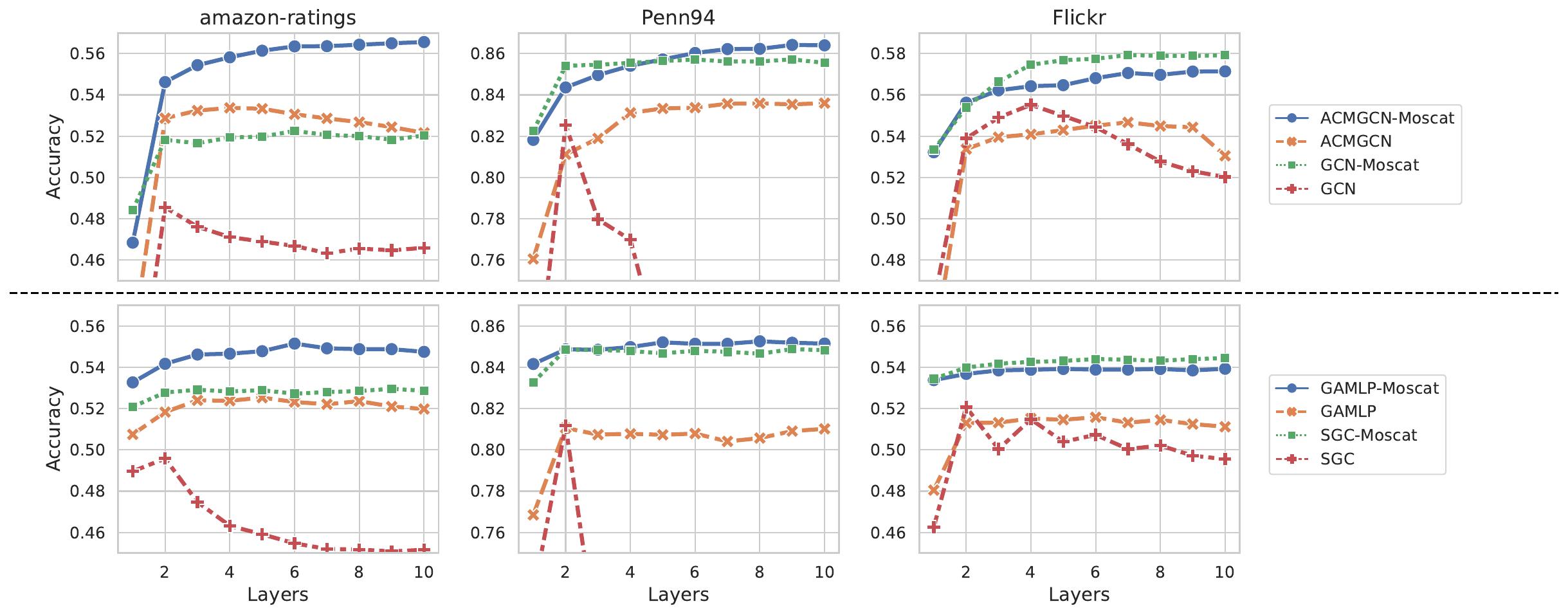}
    
    \caption{\AH outperforms classic GNNs (e.g., SGC, GCN) and soft-scoping GNNs (e.g., GAMLP, ACMGCN). GAMLP and ACMGCN adaptively learn the scope of SGC and GCN, respectively.}
     \label{fig:depth-compare-main}
    \vspace{-5mm}
\end{figure*}

\textbf{Performance variation with depths.} 
As depth increases, we further investigate how \AH improves two state-of-the-art deeper GNNs with cross-layer gating, ACMGCN~\cite{gamlp} and GAMLP~\cite{acmgcn}.
Figure~\ref{fig:depth-compare-main} shows that classic GNNs like GCN and SGC often degrade rapidly beyond 2 layers. With gating, GAMLP and ACM-GCN sustain gains up to 4-6 layers. 
In contrast, \ah{GNN} achieves the best results across all depths and continues to improve through 6–10 layers.
Notably, when increasing the maximum layers from 1–10, \ah{GNN} with message-passing architectures (GCN, ACMGCN) can derive 4–10\% accuracy improvements (Figure~\ref{fig:depth-compare-main}, top), whereas \ah{GNN} with decoupled architectures (SGC, GAMLP) yields only 1–2\% gains (Figure~\ref{fig:depth-compare-main}, bottom). This gap likely arises from the higher expressivity of message-passing models, which enables depth-varying experts to produce more diverse predictions. See Appendix~\ref{sec: depth-analysis} for additional experiments.

\subsection{Ablation study}\label{sec: ablation}

\begin{wraptable}{r}{0.55\textwidth}
\vspace{-4.1mm}
\small
\caption{Ablation study. \textit{Mean-Ensemble} combines all scope experts with uniform weights. For \AH, \textit{w/o Holdout-Set} trains the gating model on the same expert training set, \textit{w/o Multi-Scope} ensemble experts with the same best scope, \textit{w/o Hetero-Filter} removes heterophily-biased sample filtering, and \textit{w/o Scope-Augment} removes scope-aware logit augmentation.}\label{tab: ablation}
\centering
\resizebox{0.55\textwidth}{!}{
\begin{tabular}{l|cccc}
    \toprule
     & \squirrel & \amazon & \penn & \arxiv \\
    \midrule
    \midrule
    SGC
    & 40.04 \std{1.77}
    & 49.58 \std{0.55}  
    & 81.17 \std{0.40}
    & 45.88 \std{0.32}
    \\
    w/ Mean-Ensemble
    & 39.65 \std{1.76}
    & 50.56 \std{0.53}
    & 80.20 \std{0.77}
    & 46.68 \std{0.25}
    \\
    \cmidrule(lr){1-5}
    \ah{SGC}$\ast$
    & 42.73 \std{2.06} 
    & 51.82 \std{0.29}
    & 84.75 \std{0.41}
    & 52.09 \std{0.27}
    \\
    \cmidrule(lr){1-5}
    \ah{SGC}
    & 43.35 \std{1.97}
    & 52.72 \std{0.53}
    & 84.80 \std{0.49}
    & 51.86 \std{0.30}
    \\
    w/o Holdout-Set
    & 39.86 \std{2.31}
    & 49.56 \std{0.31}
    & 77.72 \std{0.95}
    & 49.80 \std{0.23}
    \\
    w/o Multi-Scope
    & 40.98 \std{1.72}
    & 51.09 \std{0.56}
    & 82.20 \std{0.45}
    & 46.63 \std{0.28}
    \\
    w/o Hetero-Filter
    & 42.48 \std{2.37}
    & 50.87 \std{0.88}
    & 84.64 \std{0.52}
    & -
    \\
    w/o Scope-Augment
    & 42.29 \std{2.06}
    & 52.34 \std{0.64}
    & 82.79 \std{0.49}
    & 47.10 \std{0.28}
    \\
    \bottomrule
\end{tabular}
}
\vspace{-4.1mm}
\end{wraptable}

In this subsection, we evaluate each component in \AH to answer \textbf{Q3}.
As shown in Table~\ref{tab: ablation}, using Mean-Ensemble to average GNN logits across different depths cannot guarantee accuracy improvements, primarily due to the poor performance of deeper GNNs.
By enabling node-level adaptation, \AH consistently achieves significant improvements over Mean-Ensemble.
We evaluate the effectiveness of critical components in \AH by removing them one at a time:
(1) When we remove the holdout set and train both the gating model and experts on the full training set, we observe a drastic accuracy drop compared to \AHs (which also restricts model training within the training set). This highlights the necessity of the holdout set.
(2) Multi-Scope experts prove critical for incorporating knowledge from different scopes.
(3) Heterophily-biased sample filtering is designed as an optional technique, where "-" denotes it not being used in our hyperparameter settings. We find it critical for sampling high-quality data for gating model training.
(4) Scope-aware logit augmentation shows particular effectiveness when combined with SGC, likely due to SGC's limited model expressivity.



\begin{wrapfigure}{r}{0.53\textwidth}
    \vspace{-6mm}
    \centering 
    \includegraphics[width=0.53\columnwidth]{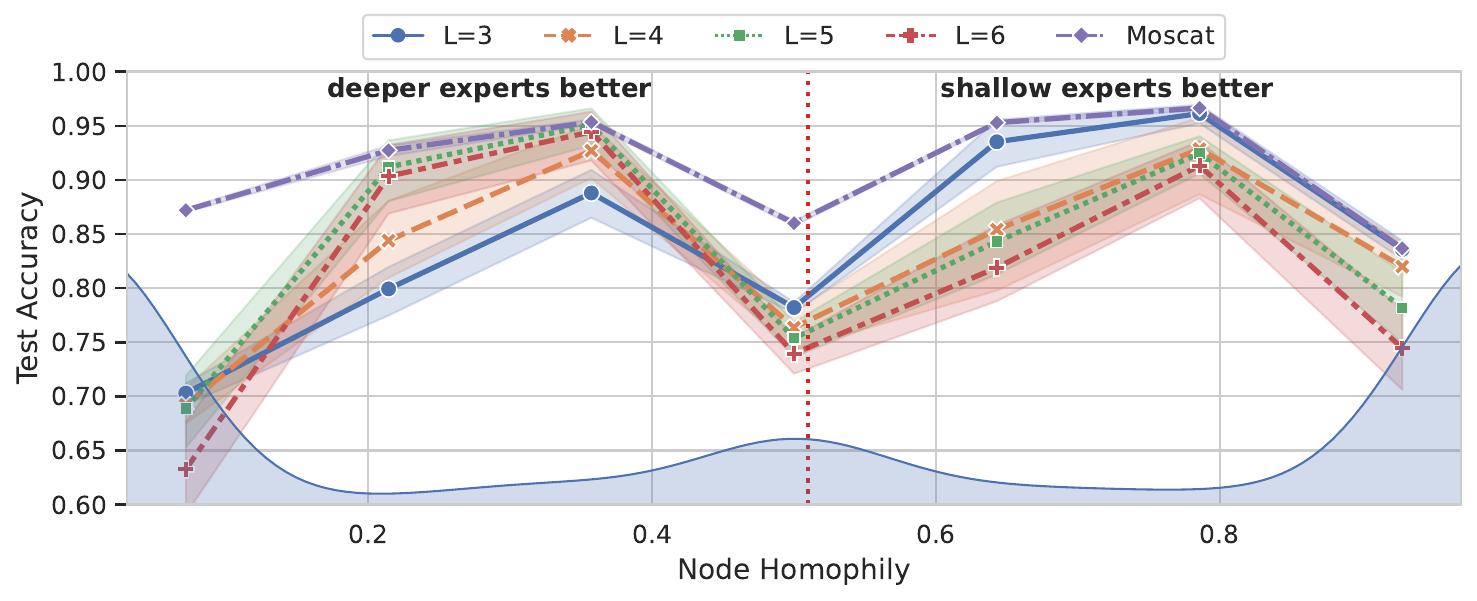}
    \caption{\AH over ACMGCN on {\genius}.}
     \label{fig:case_study}
    \vspace{-2mm}
\end{wrapfigure}

\subsection{Case Study: how \AH become effective?}\label{sec: case_study}
We examine the >40\% performance gain of \ah{ACMGCN} on {\genius} (Table~\ref{tab:improv}).
Dropping the MLP expert and using only six weak experts (1–6 layers ACMGCN) yields the same improvement, so the strong expert isn't driving this effect.
Figure \ref{fig:case_study} (layers 1–2 omitted) shows that:
(1) Even mistaken ACMGCN predictions carry structurally informative logits. 
(2) When there are sufficient training samples, \AH can learn proper expert mixtures for correct predictions. 
(3) With limited samples, \AH can still learn to select the best-performing expert.


\section{Conclusion}
In this paper, we investigate the challenges of applying deeper GNNs to heterophilous graphs. Our analysis reveals that GNNs exhibit shifting generalization preferences across nodes with different homophily levels as their depth increases. To address this, we propose \AH, which follows a novel decoupled expert-gating paradigm. Experiments show that \AH can improve GNN generalization, better exploit deeper GNNs, and adapt to diverse architectures.



\begin{ack}
This work is supported by DEVCOM ARL Army Research Office (ARO) under grants W911NF2220159 and W911NF2320186, and by National Science Foundation (NSF) under grant OAC-2209563. Distribution Statement A: Approved for public release. Distribution is unlimited.
\end{ack}

\bibliographystyle{plainnat}
\bibliography{reference}


\newpage
\appendix
\section{Related Work} \label{sec: append_related}

\subsection{Graph neural networks meet heterophily} 
GNNs were initially designed under the homophily assumption and have recently gained significant interest due to their superior performance and small parameterization. Various aspects of GNNs have been widely studied, including scalability~\cite{sage, saint, pprgo, influence}, expressivity~\cite{gin, what, lose}, and generalization~\cite{provable, pmlp, demystifying}. 

To extend GNNs to heterophilous graphs, existing works primarily focus on improving higher-order neighborhood utilization~\cite{tedgcn}. MixHop~\cite{mixhop} extracts features from multi-hop neighborhoods in each layer. GCNII~\cite{gcnii} prevents over-smoothing in deeper GCNs by proposing initial residual connections and identity mapping. To adapt to graphs with different label patterns, GPR-GNN~\cite{gprgnn} learns signed scalar weights for the propagated features with different propagation steps. 
Other works focus on mapping topology~\cite{linkx, global, acmgcn}, using global attentions~\cite{polynormer}, or exploiting edge directionality~\cite{dirgnn} to improve learning on heterophilous graphs.


\subsection{GNNs with personalized scoping} 

Personalized scoping aims to set a tailored receptive field to each node to restrict the length of feature propagation, which is able to extract essential long-range dependencies while reducing computational overhead. 
Although this idea has been around for a long time and has a fundamental impact on various GNN domains, few works summarize these advancements from the personalized scoping perspective.
Here, we classify these works into two categories based on whether or not each scope is learned. 

\noindent\textbf{Heuristics methods.} The heuristics for personalized scoping originate from works that generalize personalized PageRank (PPR) to GNNs. PPNP~\cite{appnp} first introduces PPR as the final propagation matrix for decoupled GNNs.
GBP~\cite{gbp} combines reverse push and random walks to approximate PPR propagation. 
NDLS~\cite{ndls} examines the smoothing effects in graph diffusion, noting that the level of smoothness should be node-specific.
NDM~\cite{nigcn} advances this by developing a unified diffusion kernel that extends PPR with the heat kernel and enables custom propagation steps following NDLS.
To generalize personalized scoping to non-decoupled GNNs, ShaDow~\cite{shadow} proposes a design principle that decouples the scope from the model depth. For each node, a shallow scope is constructed using its neighboring nodes with the top-k PPR.

\noindent\textbf{Learnable methods.} 
However, these heuristics assume homophily and heavily rely on topological information, often falling short on heterophilous graphs. Recent research has explored parameterized techniques to address this issue.
One line of work integrates personalized scoping into GNN architectures. We refer to these methods as soft personalized scoping since they typically learn node-dependent weights to control the scope. 
GeniePath~\cite{geniepath} proposes a gated unit as the scope controller. GAMLP~\cite{gamlp} uses the attention mechanism to enable personalized scoping on a decoupled GNN~\cite{sign}. NW-GNN~\cite{nw-gnn} further extends this to non-decoupled GNNs by proposing a node-wise architectural search. ACM-GCN~\cite{acmgcn} employs an alternative strategy by introducing additional identity channels beyond aggregation.

\subsection{Graph Mixture of Experts} 
Another line of work considers different depths of GNNs as different experts and develops a gating module to activate a small subset of experts for each input node. Policy-GCN~\cite{policy-gnn} uses reinforcement techniques and takes the average accuracy of different depth models as rewards. 
GraphMoE~\cite{graphmoe} and DA-MoE~\cite{damoe} proposes a top-$K$ sparse gating technique for mixing multiple GNN experts. 
Additionally, Mowst~\cite{mowst} proposes separating rich self-features from informative neighborhoods by using a mixture of weak MLP and strong GNN experts.
However, these methods require retraining the GNN models, which introduces significant overhead and may suffer from overfitting, potentially downgrading each model's performance.





\section{Proof} \label{sec:proof}
The following provides the proof for Theorem~\ref{theorm:1}. 
\begin{lemma}\label{lemma:1}
Under Assumption~\ref{ass:data}, Assumption~\ref{ass:model}, and assume the aggregated features $g^L(\X, \G)$ share the same variance $\sigma^2\mathbf{I}$. For any subgroup $i, j \in M$ and any nodes $u\in \V_i, v\in\V_j$ with aggregated features $\mathbf{f}_u=g^L(\X, \G)_u$ and $\mathbf{f}=g^L(\X, \G)_v$, we have:
\begin{equation}
    \begin{aligned}
        &\Pr(y_u=c_1|\mathbf{f}_u) - \Pr(y_v=c_1|\mathbf{f}_v) \\
        &\le \frac{2\left(\rho + k\sigma\right)}{\sigma^2} \left(\|\mathbf{f}_u - \mathbf{f}_v\|+\rho\left(p_{i}-p_{j}\right) \mathbb{E}_{\Pr(m)}\left[p_{m}-q_{m}\right]^{L-1}\right),
    \end{aligned}
\end{equation}
where $\rho = \| \vmu_1 - \vmu_2 \|$ and $k$ is a small constant.
\end{lemma}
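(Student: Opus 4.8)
The plan is to reduce the posterior difference to a difference of logistic functions evaluated at linear ``logits'', and then control those logits through two separable effects: the raw displacement $\|\mathbf{f}_u-\mathbf{f}_v\|$ between the aggregated features, and the subgroup-dependent shift of the Bayes-optimal decision boundary that $L$-hop aggregation induces.

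First I would compute the class posterior in closed form. Under Assumption~\ref{ass:data}, conditioned on class and subgroup, the aggregated feature $\mathbf{f}=g^L(\X,\G)$ is Gaussian with covariance $\sigma^2\mathbf{I}$ and a mean that is symmetric about the global center $\bar{\vmu}=(\vmu_1+\vmu_2)/2$. With equal class priors, Bayes' rule yields $\Pr(y=c_1\mid\mathbf{f})=\phi\!\big(\tfrac{1}{\sigma^2}(\vmu_1^{\mathrm{agg}}-\vmu_2^{\mathrm{agg}})^\top(\mathbf{f}-\bar{\vmu})\big)$, where $\phi(t)=1/(1+e^{-t})$ is the logistic function and $\vmu_1^{\mathrm{agg}},\vmu_2^{\mathrm{agg}}$ are the class-conditional aggregated means.

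The crux is determining $\vmu_1^{\mathrm{agg}}-\vmu_2^{\mathrm{agg}}$ for a node in subgroup $m$, which is the main obstacle and the multi-hop generalization of the one-hop computation in~\cite{demystifying}. I would set up a one-hop recursion on the ``signal'' $\boldsymbol{\sigma}_L(m):=\mathbb{E}[\mathbf{f}\mid c_1,m]-\bar{\vmu}$. Since a neighbor lies in class $c_1$ with probability $p_m$ and in class $c_2$ with probability $q_m$, and independently lands in a random subgroup $o\sim\Pr(o)$, averaging the $(L-1)$-hop signals of the neighbors gives $\boldsymbol{\sigma}_L(m)=(p_m-q_m)\,\mathbb{E}_{\Pr(o)}[\boldsymbol{\sigma}_{L-1}(o)]$. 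Iterating from the base case $\boldsymbol{\sigma}_0(o)=\tfrac{\vmu_1-\vmu_2}{2}$ gives $\mathbb{E}_{\Pr(o)}[\boldsymbol{\sigma}_{L-1}(o)]=(\mathbb{E}_{\Pr(o)}[p_o-q_o])^{L-1}\tfrac{\vmu_1-\vmu_2}{2}$, so the class gap in subgroup $m$ is $\vmu_1^{\mathrm{agg}}-\vmu_2^{\mathrm{agg}}=s_m(\vmu_1-\vmu_2)$ with $s_m=(p_m-q_m)(\mathbb{E}_{\Pr(o)}[p_o-q_o])^{L-1}$ and $|s_m|\le1$. This is precisely the source of the $\mathbb{E}_{\Pr(m)}[p_m-q_m]^{L-1}$ factor; the delicate part is handling the averaging over the random subgroup membership of neighbors at every hop and arguing that the self-loop/degree-normalization contributions are lower order.

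Finally, writing $a_u=\tfrac{s_i}{\sigma^2}(\vmu_1-\vmu_2)^\top(\mathbf{f}_u-\bar{\vmu})$ and $a_v=\tfrac{s_j}{\sigma^2}(\vmu_1-\vmu_2)^\top(\mathbf{f}_v-\bar{\vmu})$, I would use that $\phi$ is $\tfrac14$-Lipschitz to get $\Pr(y_u=c_1\mid\mathbf{f}_u)-\Pr(y_v=c_1\mid\mathbf{f}_v)\le\tfrac14|a_u-a_v|$ and then split
\begin{equation*}
a_u-a_v=\tfrac{1}{\sigma^2}\Big(s_i(\vmu_1-\vmu_2)^\top(\mathbf{f}_u-\mathbf{f}_v)+(s_i-s_j)(\vmu_1-\vmu_2)^\top(\mathbf{f}_v-\bar{\vmu})\Big).
\end{equation*}
The first term is at most $\tfrac{\rho}{\sigma^2}\|\mathbf{f}_u-\mathbf{f}_v\|$ by Cauchy--Schwarz and $|s_i|\le1$. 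For the second, $s_i-s_j=2(p_i-p_j)(\mathbb{E}_{\Pr(o)}[p_o-q_o])^{L-1}$ using $q=1-p$, and I would bound $|(\vmu_1-\vmu_2)^\top(\mathbf{f}_v-\bar{\vmu})|\le\rho(\tfrac{\rho}{2}+k\sigma)$ by decomposing $\mathbf{f}_v-\bar{\vmu}$ into its mean (of norm $\le\rho/2$) and a Gaussian fluctuation whose projection onto $\vmu_1-\vmu_2$ has standard deviation $\rho\sigma$, so that $k$ absorbs the typical fluctuation magnitude. Collecting the two terms and loosening the prefactors into a common $\rho+k\sigma$ yields the stated bound. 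The only remaining routine point is that the fluctuation estimate holds in the relevant typical-set/high-probability sense with $k$ a small absolute constant.
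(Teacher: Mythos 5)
Your proof is correct, and it reaches the same destination as the paper's by a genuinely different route for the core inequality. The shared backbone is the recursion on the class-conditional aggregated means: your signal recursion $\boldsymbol{\sigma}_L(m)=(p_m-q_m)\,\mathbb{E}_{\Pr(o)}[\boldsymbol{\sigma}_{L-1}(o)]$ is exactly the paper's computation of $\vmu_{1,i}^{(L)}-\vmu_{2,i}^{(L)}$ and $\vmu_{1,i}^{(L)}-\vmu_{1,j}^{(L)}$, and both arguments need the same high-probability absorption of the Gaussian fluctuation into the constant $k$. Where you diverge is the treatment of the posterior difference: the paper keeps the posteriors as ratios of Gaussian densities, puts the difference over a common denominator written as $\exp(A)$, invokes the Lagrange mean value theorem (using that both exponents are nonpositive so the intermediate derivative is at most $1$) to reduce to a difference of squared distances, and then factors that difference into sums of terms each bounded by $\epsilon_{ij}\le\rho+k\sigma$. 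You instead observe that with equal priors and shared covariance the posterior is the logistic function of a linear logit $a=\tfrac{s_m}{\sigma^2}(\vmu_1-\vmu_2)^\top(\mathbf{f}-\bar{\vmu})$ (which requires the observation, implicit in your write-up and worth stating, that the midpoint $\tfrac12(\vmu_{1,m}^{(L)}+\vmu_{2,m}^{(L)})=\bar{\vmu}$ is preserved under aggregation --- it follows by summing the two mean recursions), and then use the $\tfrac14$-Lipschitz property of the sigmoid plus the decomposition of $a_u-a_v$ into a displacement term and a boundary-shift term. Your version is cleaner and yields a slightly sharper constant ($\rho/(4\sigma^2)$ and $\rho(\rho+2k\sigma)/(4\sigma^2)$ versus the paper's common prefactor $2(\rho+k\sigma)/\sigma^2$), which dominates into the stated bound, and it makes the role of the decaying factor $(\mathbb{E}_{\Pr(o)}[p_o-q_o])^{L-1}$ as a decision-boundary rescaling more transparent; the paper's MVT route avoids computing the closed-form logit but pays for it with the difference-of-squares bookkeeping. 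One caveat applying equally to both arguments: the final inequality is stated without absolute values, so the sign conventions on $p_i-p_j$ and the expectation power are being used loosely in both.
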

\begin{proof}
To begin with, we recall that Assumption~\ref{ass:data} assumes every node feature follows the normal distribution. The aggregated features $\mathbf{F} = g^L(\X, \G)=\left(\D^{-1}\A\right)^{L}\X$ for different subgroups of different classes has the following distribution:
\begin{equation}
    \rvf_w \sim N\left(\vmu_{c,m}^{(L)}, \sigma^2\mathbf{I}\right), \text { for } w \in \mathcal{\V}^{c}_{m},~ c \in \{1, 2\}, m \in \{1, \cdots,M\},
\end{equation}
where $\V_{m}^{c}$ denotes the node subset with class label $c$ and belongs to subgroup $m$. $\vmu_{c,m}^{(L)}$ denotes the mean value of $L$-hop aggregated features of $\V_{m}^{c}$.

Next, we break down $\Pr(y_u=c_1|\mathbf{f}_u)$, which is the conditional probability of node $u\in \V_i$ classified as class $c_1$ given feature $\mathbf{f}_u$, with Bayes theorem
\begin{equation}
     \begin{aligned}
            &\Pr(y_u=c_1|\rvf_u) \\
            &= \frac{\Pr(\rvf_u|y_u=c_1)\Pr(y_u=c_1)}{\Pr_1(\rvf_u|y_u=c_1)\Pr(y_u=c_1)+\Pr_1(\rvf_u|y_u=c_2)\Pr(y_u=c_2)} \\
            &\underset{(a)}{=} \frac{\exp \left (\frac{(\rvf_u-\vmu_{1,i}^{(L)})^2}{-2\sigma^2} \right )}{\exp \left (\frac{(\rvf_u-\vmu_{1,i}^{(L)})^2}{-2\sigma^2} \right ) + \exp \left (\frac{(\rvf_u-\vmu_{2,i}^{(L)})^2}{-2\sigma^2} \right )},
        \end{aligned}
\end{equation}
where (a) utilize Assumption~\ref{ass:data} that different classes have the same number of samples and the PDF of normal distribution. 
Hence, we have

\begin{equation}
    \hspace*{-6mm}
     \begin{aligned}
            &\Pr(y_u=c_1|\rvf_u) - \Pr(y_u=c_1|\rvf_v) = \\
            &\frac
            {\exp \left (\frac{(\rvf_u-\vmu_{1,i}^{(L)})^2}{-2\sigma^2} \right )
            \exp \left (\frac{(\rvf_v-\vmu_{2,j}^{(L)})^2}{-2\sigma^2} \right ) -
            \exp \left (\frac{(\rvf_v-\vmu_{1,j}^{(L)})^2}{-2\sigma^2} \right )
            \exp \left (\frac{(\rvf_u-\vmu_{2,i}^{(L)})^2}{-2\sigma^2} \right )
            }
            {\left[
                \exp \left (\frac{(\rvf_u-\vmu_{1,i}^{(L)})^2}{-2\sigma^2} \right ) + \exp \left (\frac{(\rvf_u-\vmu_{2,i}^{(L)})^2}{-2\sigma^2} \right )
            \right]
            \left[
                \exp \left (\frac{(\rvf_v-\vmu_{1,j}^{(L)})^2}{-2\sigma^2} \right ) + \exp \left (\frac{(\rvf_v-\vmu_{2,j}^{(L)})^2}{-2\sigma^2} \right )
            \right]
            },
        \end{aligned}
\end{equation}
Note that the denominator can be bounded in $[0, 4]$ since each component $\exp \left (\frac{(\rvf_w-\vmu_{c,m}^{(L)})^2}{-2\sigma^2} \right )\in[0, 1]$. We can then denote the denominator as $\exp(A)$, where A is a constant. Hence, we have
\begin{equation}\label{csbm:1}
    \begin{aligned}
        &\Pr(y_u=c_1|\rvf_u) - \Pr(y_v=c_1|\rvf_v)  \\
        &=\exp\left(\frac{(\rvf_u-\vmu_{1,i}^{(L)})^2+(\rvf_v-\vmu_{2,j}^{(L)})^2}{-2\sigma^2} - A\right) \\
        &\quad\left. -\exp\left(\frac{(\rvf_v-\vmu_{1,j}^{(L)})^2+(\rvf_u-\vmu_{2,i}^{(L)})^2}{-2\sigma^2} - A\right) \right. \\
        &\underset{(a)}{\le} \frac{1}{2\sigma^2}\left[(\rvf_v-\vmu_{1,j}^{(L)})^2 - (\rvf_u-\vmu_{1,i}^{(L)})^2 + (\rvf_u-\vmu_{2,i}^{(L)})^2 - (\rvf_v-\vmu_{2,j}^{(L)})^2\right] \\
        &= \frac{1}{2\sigma^2}\left[
        \left((\rvf_v-\vmu_{1,j}^{(L)}) + (\rvf_u-\vmu_{1,i}^{(L)})\right)\left((\rvf_v - \rvf_u) + (\vmu_{1,i}^{(L)} - \vmu_{1,j}^{(L)})\right) \right. \\
        &\quad+ \left.\left((\rvf_v - \vmu_{2,j}^{(L)}) + (\rvf_u - \vmu_{2,i}^{(L)})\right)\left((\rvf_u - \rvf_v) + (\vmu_{2,j}^{(L)} - \vmu_{2,i}^{(L)})\right)
        \right]
    \end{aligned}
\end{equation}
(a) is derived from the Lagrange mean value theorem. Let $(\rvf_u-\vmu_{1,i}^{(L)})^2+(\rvf_v-\vmu_{2,j}^{(L)})^2=C$ and $(\rvf_v-\vmu_{1,j}^{(L)})^2+(\rvf_u-\vmu_{2,i}^{(L)})^2=D$. Given that $\Pr(y_u=c_1|\rvf_u)$ and $\Pr(y_v=c_1|\rvf_v)$ are probabilities that smaller than 1, we can derive 
\begin{equation}\label{cond:1}
    \frac{C}{-2\sigma^2} - A < 0 \text{ and } \frac{D}{-2\sigma^2} - A < 0. 
\end{equation}
From the Lagrange mean value theorem, we have
\begin{equation}
    \frac{\exp(x) - \exp(y)}{x-y} = \exp(\xi),~ \xi \in (x, y).
\end{equation}
Let $x = \frac{C}{-2\sigma^2} - A$ and $y = \frac{D}{-2\sigma^2} - A$. Given Equation ~\ref{cond:1}, we have $\exp(\xi) \leq 1$. Hence, 
\begin{equation}\label{cond:1}
    \exp\left(\frac{C}{-2\sigma^2} - A\right) - \exp\left(\frac{D}{-2\sigma^2}- A\right) \leq \frac{D-C}{2\sigma^2}.
\end{equation}
The proof for (a) is complete.

Equation~\ref{csbm:1} shows that $\Pr(y_u=c_1|\rvf_u) - \Pr(y_v=c_1|\rvf_v)$ can be bounded by three terms: 
\begin{enumerate}
    \item The maximum distance between any node feature to the mean value of any subgroup
        \begin{equation}
            \epsilon_{ij} = \max_{w\in \V_i \cup \V_j, c \in \{1, 2\}, m \in \{i, j\}} \| \rvf_w - \vmu_{c,m}^{(L)}\|.
        \end{equation}
    \item The feature distance $\| \rvf_u - \rvf_v \|$ between node $u$ and $v$.
    \item The mean value difference between subgroup $i$ and $j$: $\vmu_{1,i}^{(L)} - \vmu_{1,j}^{(L)}$ and $\vmu_{2,i}^{(L)} - \vmu_{2,j}^{(L)}$.
\end{enumerate}

We first bound term (1) $\epsilon_{ij}$. Recall that $\rvf_w$ follows the normal distribution with variance $\sigma^2$. For the mean value of the aggregated feature of node set $\V_i^1$ and $\V_i^2$, we have 
\begin{equation} \label{mu:1}
    \vmu_{1,i}^{(L)} = p_i \mathbb{E}_{\Pr(m)}\left[\vmu_{1,m}^{(L-1)}\right] + q_i \mathbb{E}_{\Pr(m)}\left[\vmu_{2,m}^{(L-1)}\right],
\end{equation}
and
\begin{equation} \label{mu:2}
    \vmu_{2,i}^{(L)} = q_i \mathbb{E}_{\Pr(m)}\left[\vmu_{1,m}^{(L-1)}\right] + p_i \mathbb{E}_{\Pr(m)}\left[\vmu_{2,m}^{(L-1)}\right],
\end{equation}
which reveals that for every $L$ we have 
\begin{equation}
    \min(\vmu_1, \vmu_2) \leq \vmu_{1,i}^{(L)}, \vmu_{2,i}^{(L)} \leq \max(\vmu_1, \vmu_2)    
\end{equation}
 since every $p_i, q_i, \Pr(m)$ lie in $[0, 1]$. Therefore, $\epsilon_{ij}$ can be bounded by 
 \begin{equation}
     \epsilon_{ij} \leq \|\vmu_1 - \vmu_2\| + k\sigma,
 \end{equation}
where k is a small constant. When $k > 3$, this equation holds with a probability close to 1.

To bound the term (3), we derive the following equation based on Equation~\ref{mu:1} and Equation~\ref{mu:2}
\begin{equation}
    \begin{aligned}
        &\vmu_{1,i}^{(L)} - \vmu_{2,i}^{(L)} \\
        &= (p_i - q_i) \cdot \mathbb{E}_{\Pr(m)}\left[\vmu_{1,m}^{(L-1)}\right] - (p_i - q_i) \cdot \mathbb{E}_{\Pr(m)}\left[\vmu_{2,m}^{(L-1)}\right] \\
        &= (p_i - q_i) \cdot \mathbb{E}_{\Pr(m)}\left[\vmu_{1,m}^{(L-1)} - \vmu_{2,m}^{(L-1)}\right] \\
        &= (p_i - q_i) \cdot \mathbb{E}_{\Pr(m)}\left[
                (p_m - q_m) \cdot \mathbb{E}_{\Pr(o)}\left[\vmu_{1,o}^{(L-2)} - \vmu_{2,o}^{(L-2)}\right]
            \right] \\
        &= (p_i - q_i) \cdot \mathbb{E}_{\Pr(m)}\left[p_m - q_m\right]^{1} \cdot \mathbb{E}_{\Pr(m)}\left[\vmu_{1,m}^{(L-2)} - \vmu_{2,m}^{(L-2)}\right] \\
        &= (p_i - q_i) \cdot \mathbb{E}_{\Pr(m)}\left[p_m - q_m\right]^{L-1} \cdot (\vmu_1 - \vmu_2).
    \end{aligned}
\end{equation}
We also have
\begin{equation}
    \begin{aligned}
        &\vmu_{1,i}^{(L)} - \vmu_{1,j}^{(L)} \\
        &= (p_i - p_j) \cdot \mathbb{E}_{\Pr(m)}\left[\vmu_{1,m}^{(L-1)}\right] + (q_i - q_j) \cdot \mathbb{E}_{\Pr(m)}\left[\vmu_{2,m}^{(L-1)}\right] \\
        &\underset{(a)}{=} (p_i - p_j) \cdot \mathbb{E}_{\Pr(m)}\left[\vmu_{1,m}^{(L-1)} - \vmu_{2,m}^{(L-1)}\right] \\
        &= (p_i - p_j) \cdot \mathbb{E}_{\Pr(m)}\left[
                (p_m - q_m) \cdot \mathbb{E}_{\Pr(o)}\left[p_o - q_o\right]^{L-2} (\vmu_1 - \vmu_2)
            \right] \\
        &= (p_i - p_j) \cdot \mathbb{E}_{\Pr(m)}\left[p_m - q_m\right]^{L-1} (\vmu_1 - \vmu_2),
    \end{aligned}
\end{equation}
where (a) utilizes the definition of $p_i + q_i = 1$. Follow the same proof, we can derive $\vmu_{2,i}^{(L)} - \vmu_{2,j}^{(L)} = - (\vmu_{1,i}^{(L)} - \vmu_{1,j}^{(L)})$.

Next, we substitute terms (1), (2), and (3) to Equation~\ref{csbm:1} to continue the proof. Let $\rho = \| \vmu_1 - \vmu_2 \|$, we have
\begin{equation}
    \begin{aligned}
        &\Pr(y_u=c_1|\rvf_u) - \Pr(y_v=c_1|\rvf_v)  \\
        &\leq \frac{1}{2\sigma^2}\left(
        (\rvf_v-\vmu_{1,j}^{(L)}) + (\rvf_u-\vmu_{1,i}^{(L)}) + (\rvf_v - \vmu_{2,j}^{(L)}) + (\rvf_u - \vmu_{2,i}^{(L)})\right) \\
        &\quad\cdot \left(\|\rvf_u - \rvf_v\| + (p_i - p_j) \cdot \mathbb{E}_{\Pr(m)}\left[p_m - q_m\right]^{L-1} (\vmu_1 - \vmu_2)\right)
         \\
        &\leq\frac{2\left(\rho + k\sigma\right)}{\sigma^2}\left(
            \|\rvf_u - \rvf_v\| + \rho(p_i - p_j) \cdot \mathbb{E}_{\Pr(m)}\left[p_m - q_m\right]^{L-1}
        \right).
    \end{aligned}
\end{equation}
The proof for Lemma~\ref{lemma:1} is complete.
\end{proof}

\noindent\textbf{Proof for Theorem~\ref{theorm:1}.}
Theorem 1 in~\cite{demystifying} provides a PAC-Bayes subgroup generalization bound for GNNs with one-hop aggregation. In our work, we extend their theorem to arbitrary hop to investigate how different scope sizes affect the generalization of different subgroups. Besides that, both theorem follows the same setting. To avoid replicates, we only provide the proof for the different parts and direct readers to the specific sections of~\cite{demystifying} for the rest details and proofs.

Appendix F.4 in~\cite{demystifying} provides complete proof for the proposed PAC-Bayes bound. There exists $0<\alpha<\frac{1}{4}$, we have
\begin{equation}
    \begin{aligned}
        &\risk_m(\theta) - \hLg_S(\theta) \\
        &\leq \left(D_{m, 0}^{\gamma / 2}(P ; \lambda) - \ln 3 \right) + \frac{d\sum_{l=1}^{L'}\|\widetilde{W}_l\|_F^2}{(\gamma/8)^{2/L'}N_S^{\alpha}}(\epsilon_m)^{2/L'} \\
        &+\frac{1}{N_S^{2\alpha}} \left(\ln \frac{3}{\delta} + 2\right)  + \frac{1}{4N_S^{1-2\alpha}},
    \end{aligned}
\end{equation}
where $D_{m, 0}^{\gamma / 2}(P ; \lambda)$ is bounded by~\cite{demystifying}'s Lemma 4. 

Lemma 4 in~\cite{demystifying} further depends on the bound of~\cite{demystifying}'s Lemma 2 in Appendix E, which calls out the main difference compared with our theorem. By substituting~\cite{demystifying}'s Lemma 2 with our Lemma~\ref{lemma:1}, we complete the proof for our Theorem~\ref{theorm:1}
\begin{equation} \label{eq:complete_form}
    \begin{aligned}
        &\risk_m(\theta) - \hLg_S(\theta) \\
        &\leq \frac{2K\left(\rho + k\sigma\right)}{\sigma^2} \left(\epsilon_m+\rho\left(p_{S}-p_{m}\right) \mathbb{E}_{\Pr(o)}\left[p_{o}-q_{o}\right]^{L-1}\right) \\
        &\quad + \frac{d\sum_{l=1}^{L'}\|\widetilde{W}_l\|_F^2}{(\gamma/8)^{2/L'}N_S^{\alpha}}(\epsilon_m)^{2/L'} +\frac{1}{N_S^{2\alpha}} \left(\ln \frac{3}{\delta} + 2\right)  + \frac{1}{4N_S^{1-2\alpha}}.
    \end{aligned}
\end{equation}

For the first term, $K=2$ is the number of classes, $k$ is a small constant, and $\sigma<1$. We denote $\mathbb{E}_{\Pr(o)}\left[p_{o}-q_{o}\right]^{L-1}$ as $\Gamma_{L-1}$.
For the rest of the terms, $b$ and $\gamma$ are constants. We denote $\|W\|^2:=\sum_{l=1}^{L^{\prime}}\|\widetilde{W_l}\|_F^2$. 
Since $0<\alpha<\frac{1}{4}$, we have $\frac{1}{4 N_S^{1-2 \alpha}}<1$.
We simplify Equation~\ref{eq:complete_form} as follows:

        

\begin{align}
&\risk_m(\theta)  - \hLg_S(\theta) \nonumber\\
&\le 
  \mathcal{O}\Bigl(
      \frac{\rho}{\sigma^2}
      \Bigl(
        \epsilon_m 
        \;+\; 
        \rho \,(p_{S}-p_{m}) \,\Gamma_{L-1}
      \Bigr)
  \Bigr)
  + \mathcal{O}\Bigl(
      \frac{\|W\|^2\,(\epsilon_m)^{2/L'}}{N_S^\alpha}
  \Bigr)
  + \mathcal{O}\Bigl(
      \frac{\ln(1/\delta)}{N_S^{2\alpha}}
  \Bigr).
\end{align}


\section{Supplementary Preliminaries} \label{sec: append_preliminary}

\subsection{Scope and Depth}
The scope of model $\mathcal{M}$ for a node $v$ is a latent subgraph $\G_{[v]}^{\mathcal{M}}$ that contains all the nodes $\V[{[v]}]\subseteq\V$ and edges $\E[{[v]}]\subseteq\E$ used by $\mathcal{M}$ when predicting $v$. 
\begin{definition}[Size of scope] \label{def:1}
Assume $\G_{[v]}^{\mathcal{M}}$ is connected. The size of scope $\left|\G_{[v]}^{\mathcal{M}}\right| = \max_{u\in\V[{[v]}]}d\paren{u,v}$, where $d\paren{u,v}$ denotes the shortest path distance from $u$ to $v$. 
\end{definition}
We focus on GNN architectures where the model depth equals the size of the scope for every node in this paper.

\subsection{Homophily Metrics}
The homophily/heterophily metrics are widely used as graph properties to measure the probability of nodes with the same class connected to each other. 
This paper uses a node-wise homophily metric called \textit{node homophily}~\cite{geomgcn}. Node homophily defines the fraction of neighbors that have the same class for each node:
\begin{equation}
    \text{h}_{\text{node}}[v] = {\left|\left\{u \in \N[v]: y_u=y_v\right\}\right|}/{\left|\N[v]\right|}
\end{equation}
In Table~\ref{tab:main}, we also report the average node homophily for each dataset: 
\begin{equation}
    H_{\text{node}}^{\G} = \frac{1}{|\V|}\sum_v \text{h}_{\text{node}}[v].
\end{equation}
There are also some other commonly used homophily metrics in the literature. \textit{Edge homophily}~\cite{mixhop, h2gcn} measures the fraction of edges that connect nodes of the same class. \textit{Class homophily}~\cite{linkx, acmgcn} further addresses the sensitivity issue of edge homophily on graphs with imbalanced classes. 
For instance, {\genius} dataset~\cite{linkx} has a majority class for roughly 80\% of nodes, which can mislead edge homophily into classifying it as a homophilous graph ($\hedge=0.618$). In contrast, class homophily accurately identifies it as a heterophilous graph ($\hclass=0.080$). However, class homophily does not satisfy some desired properties, such as asymptotic constant baseline and empty class tolerance.~\cite{platonov2024characterizing} proposed a variant called \textit{adjusted homophily} to address this issue.

\subsection{GNN Basics.} 
As a pioneer work, graph convolutional network (GCN)~\cite{gcn} provides a layer-wise architecture that stacks feature propagation with linear transformation to approximate spectral graph convolutions. The $(l+1)-$th layer of GCN is defined as:
\begin{equation}
    \boldsymbol{h}_v^{(\ell+1)} = \func[ReLU]{\sum\nolimits_{u \in \N[v] \cup \{v\}}\Asym_{v, u}\boldsymbol{h}_u^{(\ell)}\W^{(\ell)}},
\end{equation}
where $\mathbf{h}_u^{(0)}=\x_u$. Subsequent GNNs typically modify GCN in terms of aggregator and backbone. Some works develop more expressive aggregators~\cite{gat, pna, atp}. For example, GAT~\cite{gat} substituting the degree-normalized coefficients in GCN's aggregator with learnable attention scores.
Another line of works alters the backbone by decoupling aggregation from transformation~\cite{sgc, appnp, sign, gamlp, fsgnn} or adding skip connections between layers~\cite{jknet, resgcn, gcnii, degradation}. For instance, SGC~\cite{sgc} eliminates the $\func[ReLU]{\cdot}$ function in the GCN layer to allow precomputing feature propagation. JKNet~\cite{jknet} directly maps the concatenated outputs of each layer $[\mathbf{h}_v^{(0)}; \cdots; \mathbf{h}_v^{(L)}]$ to the prediction. 



\subsection{Personalized Scoping.}

Real-world graphs often exhibit a mix of homophilous and heterophilous patterns~\cite{linkx, global, demystifying}. An ideal way to incorporate sufficient homophilous information is to set a \textit{personalized scope}, allowing different nodes to have distinct scope sizes.

\begin{definition}[Model with personalized scoping] \label{def:2}
Assume the input graph $\G$ is connected with radius exceed $\max_{v\in\V}\left|\G_{[v]}^{\mathcal{M}}\right|$. The model $\mathcal{M}$ has personalized scoping if $\exists u, v \in \V$, where $\left|\G_{[u]}^{\mathcal{M}}\right| \neq \left|\G_{[v]}^{\mathcal{M}}\right|$.
\end{definition}
\noindent The above definitions assume the graph's radius is larger than each node's scope size, which is true for most nodes in large graphs. 
According to the definition, an $L$-layer GCN does not support personalized scoping because it forces the scope size to be uniformly $L$ for every node. To break this limit, recent works (e.g., ACMGCN, GAMLP, NW-GNN, GNN-G$^2$) propose additional gating modules that learn the weight $\alpha_v^{(\ell)}$ for each hop on a node-dependent basis:
\begin{equation}
\boldsymbol{h}_v^{\text{out}} = \sum\nolimits_{\ell=0}\nolimits^{L} \alpha_v^{(\ell)} \boldsymbol{h}_v^{(\ell)} \text{, where } \sum\nolimits_{\ell=0}\nolimits^{L} \alpha_v^{(\ell)} = 1
\end{equation}
By setting the weights of larger scopes to 0 for some nodes, these methods allow different nodes to have different scope sizes.
In this paper, we adopt a broader definition: we consider a model to support personalized scoping if it can generate node-adaptive weights for each scope embedding.
Notably, our proposed method \AH also enables personalized scoping for any base GNN architecture through an attention-based gating mechanism.

\begin{align}
\label{eq:main}
\risk_m(\theta)  - \hLg_S(\theta)
\le 
  \mathcal{O}\Bigl(
      \frac{\rho}{\sigma^2}
      \Bigl(
        \epsilon_m 
        \;+\; 
        \rho \,(p_{S}-p_{m}) \,\Gamma_{L-1}
      \Bigr)
  \Bigr)
  + \mathcal{O}\Bigl(
      \frac{\|W\|^2\,(\epsilon_m)^{2/L'}}{N_S^\alpha}
  \Bigr)
  + \mathcal{O}\Bigl(
      \frac{\ln(1/\delta)}{N_S^{2\alpha}}
  \Bigr)
\end{align}


\section{Supplementary Analysis for Deeper GNNs' Failure}\label{sec: append_depth_failure}
\noindent\textbf{The depth dilemma.} 
Increasing the GNN depth favorably increases the number of non-linear transformations and expands the scope of each node to exponentially incorporate more neighboring information.
Existing studies also show that deeper GNNs are provably more expressive by using distance-based metrics~\cite{oono2019graph, huang2020tackling} and WL-based metrics~\cite{morris2019weisfeiler, chen2020can}.
In practice, however, performance degradation is widely observed when going deep. 
Many novel architectures have been proposed to alleviate this issue, yet they achieve only marginal gains over shallow variants, which is undesirable given deeper GNNs' superior expressive power and substantially higher computational costs. 
How to effectively leverage the depth remains an open question.

\noindent\textbf{Limitation of existing solutions.} 
While useful, current techniques for deeper GNNs face inherent trading-offs between these three types of errors. 
Residual connections (e.g., ResGCN) address the vanishing gradient issue but result in a larger generalization gap~\cite{provable}. Learnable cross-layer gating modules (e.g., G$^2$-GNN) achieve personalized scoping to alleviate over-smoothing, but they lead to an increase in training difficulties and the risk of overfitting. Meanwhile, GCNII attempts to prevent overfitting by mixing initial embeddings and adding identity mappings to weight matrices, but this comes at the cost of reduced expressivity. 

\section{Compare Expert-Gate Joint Training MoEs with Independent Train-then-Merge MoEs}\label{sec: append_moe}
Expert-gate joint training is widely used in current MoE methods for LLMs. Instead, \AH employs a separate "train-then-merge" strategy which is optimized for GNN experts.
Our work is motivated by the observation that directly transferring MoE designs from LLMs to GNNs faces inherent challenges. In LLMs, the gating module and experts are co-trained using Top-K sparse gating to reduce training and inference time while maintaining performance. However, for GNNs, no clear scaling law exists, and sparse gating plus more parameters do not necessarily translate into accuracy improvements. In our experiments with \AH, we found that Top-K sparse gating (especially with small K) often results in an accuracy drop compared to dense gating. Recent work~\cite{chen2025mixture} also reveals that sparse gating can lead to performance losses on heterophilous datasets.
Our independent-train-then-merge approach for GNN MoEs offers distinct advantages over expert-gate joint training:

\begin{itemize}
    \item \textbf{Diverse Generalization Capabilities:} GNN experts, which vary in scope and architecture, exhibit substantial disparities in generalization capability, which has been largely overlooked. Graph data spans a wide range of domains from different perspectives (e.g., homophily, centrality). By training experts independently, we allow each to specialize and become robust domain generalizers.
    \item \textbf{Avoiding Harmful Regularization:} Standard joint training requires regularization to balance node distribution among experts and prevent collapse. However, graph domains are often unevenly distributed, typically following a power-law. Imposing such regularization can diminish the expressive power of the GNN experts. Our experimental results (Table~\ref{tab:main}) demonstrate that existing GNN MoEs underperform even well-tuned single GNNs, whereas \AH successfully learns meaningful gating weights (see Figures~\ref{fig: appendix_gating_attn_2} and~\ref{fig: appendix_gating_attn}) without compromising the underlying GNN training.
\end{itemize}

In summary, we highlight the limitations of directly applying LLM MoE training strategies to GNNs and demonstrate how \AH provides a more promising alternative that better leverages the unique properties of graph data. We believe \AH will provide valuable insights to the community and open new directions for advancing GNN capabilities.

\begin{table}[t]
\small
\caption{Hyperparameter searching space for GNNs.} \label{tab: hyper}
\centering
\resizebox{0.6\textwidth}{!}{
\begin{tabular}{l|c}
    \toprule
    Hyperparameters & Range \\
    \midrule
    \midrule
    learning rate & \{ 0.05, 0.01, 0.005, 0.001, 5e-4, 1e-4, 5e-5 \}\\
    \cmidrule(lr){1-2}
    normalization & \{ layer~\cite{layer_norm}, batch~\cite{batch_norm}, - \}\\
    \cmidrule(lr){1-2}
    hidden dimension & \{ 32, 64, 128, 256, 512 \}\\
    \cmidrule(lr){1-2}
    dropout & \{ 0, 0.1, 0.3, 0.5, 0.7 \}\\
    \cmidrule(lr){1-2}
    number of convolution layers & \{ 1, 2, 3, 4, 5, 6 \}\\
    \bottomrule
\end{tabular}
}
\end{table}
\begin{table}[t]
\caption{Hyperparameter Searching space for \AH.} \label{tab: hyper_range}
\centering
\resizebox{1.0\textwidth}{!}{
\begin{tabular}{clcc}
    \toprule
    & Hyperparameters & Range  & Notes\\
    \midrule
    \midrule
    \multirow{4}{*}{ \multirow{2}{*}{\shortstack[c]{\AH specific \\ hyperparameters}}} & maximum scope size $L_{\text{max}}$ & \{ 5, 6 \} &  increasing $L_{\text{max}}>6$ can slightly improve accuracy\\
    \cmidrule(lr){2-4}
    \cmidrule(lr){2-4}
    & validation ratio $\eta$ & \{ 0, 0.1 \} & dataset specific parameter\\
    \cmidrule(lr){2-4}
    & The ${\V}_{\text{tr-het}}$ masked ratio $\gamma$ & \{ 0, 0.3, 0.5, 0.9, 1, - \} & “-” indicates $\V_{\text{exp}}$ is not used for \AH training \\
    \cmidrule(lr){2-4}
    & mask wrong & \{ True, False \} & - \\
    \midrule
    \midrule
    \multirow{5}{*}{\multirow{2}{*}{\shortstack[c]{hyperparameters for \\ MLP in \AH}}} & learning rate & \{ 0.005, 0.001, 5e-4, 1e-4, 5e-5 \} & - \\
    \cmidrule(lr){2-4}
    & normalization & \{ layer, batch, - \} & dataset specific parameter \\
    \cmidrule(lr){2-4}
    & hidden dimension & \{ 64, 128, 256 \} & fix to 256 for larger datasets (if not OOM) \\
    \cmidrule(lr){2-4}
    & dropout & \{ 0 \} & fix dropout to 0 works the best \\
    \cmidrule(lr){2-4}
    & number of layers & \{ 3 \} & 3-layer MLP is sufficient for all settings \\
    \bottomrule
\end{tabular}
}
\end{table}

\begin{table*}[h]
\centering
\caption{Values of the hyperparameter $\alpha$ used in the \AHs setting reported in Table~\ref{tab:improv}.}
\label{tab: hyper_alpha}
\resizebox{1.0\textwidth}{!}{
\begin{tabular}{l|cccccccc}
\toprule
 & \squirrel & \patents & \arxiv & \flickr & \amazon & \penn &  \genius & \ogbnarxiv \\
\midrule
\midrule
\ah{SGC}$\ast$ & 0.6 & 0.65 & 0.65 & 0.85 & 0.9  & 0.9  & 0.9   & 0.85 \\
\ah{GCN}$\ast$ & 0.55 & 0.65 & 0.65 & 0.85 & 0.9  & 0.85 & 0.9   & 0.9 \\
\ah{GAT}$\ast$ & 0.55 & 0.65 & 0.65 & 0.85 & 0.9  & 0.85  & 0.9   & 0.85  \\
\ah{GCNII}$\ast$ & 0.6 & 0.65 & 0.85 & 0.85 & 0.9  & 0.85  & 0.9   & 0.85  \\
\ah{ACMGCN}$\ast$ & 0.55 & 0.65 & 0.65 & 0.85 & 0.9  & 0.85  & 0.9  & 0.9 \\
\bottomrule
\end{tabular}
}
\end{table*}

\begin{table}[h]
\centering
\caption{Base GNN model selected for \ah{GNN}$\ast$ on each dataset in Table~\ref{tab:main}.}
\label{tab:base_gnn}
\resizebox{0.92\textwidth}{!}{
\begin{tabular}{l|cccccccc}
\toprule
& \squirrel & \patents & \arxiv & \flickr & \amazon & \penn & \genius & \ogbnarxiv \\
\midrule
\midrule
\makecell{Base GNN\\for \ah{GNN}($\ast$)} & GCNII & MixHop & GAT & GAT & ACMGCN & MixHop & GCNII & GAT \\
\bottomrule
\end{tabular}
}
\end{table}

\begin{table*}[h!]
\centering
\caption{Comparison of \AH with other methods use validation set for model training.}
\label{tab: train_on_val}
\resizebox{0.95\textwidth}{!}{
\begin{tabular}{l|ccc||l|ccc}
\toprule
\textbf{Method} & \squirrel & \amazon & \penn 
& \textbf{Method} & \squirrel & \amazon & \penn \\
\midrule
\midrule
GCN      &  41.33 \std{1.46} & 48.55 \std{0.38} & 82.54 \std{0.43} 
    & GAT    & 39.36 \std{1.89} & 49.78 \std{0.47} & 81.81 \std{0.62} \\
GCN-ft    & 41.36 \std{2.49} & 48.34 \std{0.88} & 83.07 \std{0.32} 
    & GAT-ft   & 39.74 \std{2.21} & 49.92 \std{0.39} & 82.65 \std{0.75} \\
GCN-mlp   & 40.37 \std{1.76} & 48.79 \std{0.57} & 82.60 \std{0.50} 
    & GAT-mlp  & 38.96 \std{2.25} & 50.32 \std{0.40} & 81.87 \std{0.60} \\

\midrule

\ah{GCN}$\ast$   & 42.91 \std{1.96} & 51.02 \std{0.50} & 85.51 \std{0.42} 
    & \ah{GAT}$\ast$  & 43.10 \std{1.98} & 52.56 \std{0.40} & 85.41 \std{0.48}  \\
\ah{GCN}   & 43.55 \std{2.08} & 52.25 \std{0.69} & 85.76 \std{0.32} 
    & \ah{GAT}  & 43.10 \std{2.13} & 53.77 \std{0.61} & 85.35 \std{0.59} \\
\bottomrule
\end{tabular}
}
\end{table*}

\begin{table*}[htbp]
\small
\caption{Hyperparameter settings for \AH. We additionally include the original node features as the gating model input for all expert models in the {\amazon} dataset. We observe that node features are highly informative for the gating model prediction on the {\amazon} dataset, but they have negative effects or no impact on other datasets.} \label{tab: hyper_tab1}
\centering
\resizebox{1.0\textwidth}{!}{
\begin{tabular}{lcccccccccc}
    \toprule
    \multirow{2}{*}{Dataset} & \multirow{2}{*}{Model} & \multicolumn{4}{c}{\AH specific hyperparameters} & \multicolumn{3}{c}{Hyperparameters for MLP}\\
    \cmidrule(lr){3-6} \cmidrule(lr){7-9}
     & & $L_{\text{max}}$ & $\eta$ & $\gamma$ & mask wrong & learning rate & hidden dim. & norm \\
    \midrule
    \midrule
    \multirow{5}{*}{\squirrel} 
        & \multicolumn{1}{|c|}{\ah{SGC}}     & 6 & 0.1 & - & \multicolumn{1}{c|}{\ding{51}}           
            & 0.001 & 128 & - \\
        & \multicolumn{1}{|c|}{\ah{GCN}}     & 5 & 0.1 & - & \multicolumn{1}{c|}{\ding{51}}              
            & 0.005 & 128 & - \\
        & \multicolumn{1}{|c|}{\ah{{GAT}}}   & 5 & 0.1 & - & \multicolumn{1}{c|}{\ding{51}}             
            & 0.001 & 128 & - \\
        & \multicolumn{1}{|c|}{\ah{GCNII}}   & 5 & 0.1 & - & \multicolumn{1}{c|}{\ding{51}}             
            & 0.0005 & 128 & - \\
        & \multicolumn{1}{|c|}{\ah{ACM-GCN}} & 5 & 0.1 & - & \multicolumn{1}{c|}{-}                     
            & 0.001 & 128 & - \\
    \cmidrule(lr){1-9}
    \multirow{5}{*}{\amazon} 
        & \multicolumn{1}{|c|}{\ah{SGC}}     & 6 & 0.1 & 1 & \multicolumn{1}{c|}{\ding{51}}              
            & 0.00005 & 256 & batch & \\
        & \multicolumn{1}{|c|}{\ah{GCN}}     & 6 & 0.1 & 0.5 & \multicolumn{1}{c|}{-}                 
            & 0.00005 & 256 & batch & \\
        & \multicolumn{1}{|c|}{\ah{{GAT}}}   & 6 & 0.1 & 0.3 & \multicolumn{1}{c|}{\ding{51}}               
            & 0.0001 & 256 & batch & \\
        & \multicolumn{1}{|c|}{\ah{GCNII}}   & 6 & 0.1 & 0.9 & \multicolumn{1}{c|}{\ding{51}}                  
            & 0.0001 & 256 & batch & \\
        & \multicolumn{1}{|c|}{\ah{ACM-GCN}} & 6 & 0 & - & \multicolumn{1}{c|}{\ding{51}}             
            & 0.00005 & 256 & batch & \\
    \cmidrule(lr){1-9}
    \multirow{5}{*}{\penn} 
        & \multicolumn{1}{|c|}{\ah{SGC}}     & 6 & 0.1 & - & \multicolumn{1}{c|}{\ding{51}}            
            & 0.005 & 256 & batch \\
        & \multicolumn{1}{|c|}{\ah{GCN}}     & 6 & 0.1 & - & \multicolumn{1}{c|}{\ding{51}}              
            & 0.005 & 256 & batch \\
        & \multicolumn{1}{|c|}{\ah{{GAT}}}   & 6 & 0.1 & - & \multicolumn{1}{c|}{\ding{51}}               
            & 0.0001 & 256 & batch \\
        & \multicolumn{1}{|c|}{\ah{GCNII}}   & 6 & 0.1 & - & \multicolumn{1}{c|}{-}                 
            & 0.005 & 256 & batch \\
        & \multicolumn{1}{|c|}{\ah{ACM-GCN}} & 6 & 0.1 & - & \multicolumn{1}{c|}{\ding{51}}              
            & 0.00005 & 256 & layer \\
    \cmidrule(lr){1-9}
    \multirow{5}{*}{\flickr} 
        & \multicolumn{1}{|c|}{\ah{SGC}}     & 6 & 0 & - & \multicolumn{1}{c|}{-}                       
            & 0.001 & 256 & - \\
        & \multicolumn{1}{|c|}{\ah{GCN}}     & 6 & 0 & - & \multicolumn{1}{c|}{-}                         
            & 0.001 & 256 & - \\
        & \multicolumn{1}{|c|}{\ah{{GAT}}}   & 6 & 0 & - & \multicolumn{1}{c|}{\ding{51}}                
            & 0.001 & 256 & - \\
        & \multicolumn{1}{|c|}{\ah{GCNII}}   & 6 & 0.1 & - & \multicolumn{1}{c|}{-}                          
            & 0.0001 & 256 & - \\
        & \multicolumn{1}{|c|}{\ah{ACM-GCN}} & 6 & 0.1 & - & \multicolumn{1}{c|}{-}                      
            & 0.001 & 256 & - \\
    \cmidrule(lr){1-9}
    \multirow{5}{*}{\arxiv} 
        & \multicolumn{1}{|c|}{\ah{SGC}}     & 6 & 0 & - & \multicolumn{1}{c|}{-}                      
            & 0.005 & 256 & layer \\
        & \multicolumn{1}{|c|}{\ah{GCN}}     & 6 & 0.1 & - & \multicolumn{1}{c|}{-}                         
            & 0.005 & 256 & layer \\
        & \multicolumn{1}{|c|}{\ah{{GAT}}}   & 6 & 0 & - & \multicolumn{1}{c|}{-}                         
            & 0.001 & 256 & layer \\
        & \multicolumn{1}{|c|}{\ah{GCNII}}   & 6 & 0.1 & - & \multicolumn{1}{c|}{-}                         
            & 0.005 & 256 & layer \\
        & \multicolumn{1}{|c|}{\ah{ACM-GCN}} & 6 & 0 & - & \multicolumn{1}{c|}{-}                        
            & 0.005 & 256 & layer \\
    \cmidrule(lr){1-9}
    \multirow{5}{*}{\genius} 
        & \multicolumn{1}{|c|}{\ah{SGC}}     & 6 & 0.1 & 0 & \multicolumn{1}{c|}{-}                       
            & 0.0005 & 256 & batch \\
        & \multicolumn{1}{|c|}{\ah{GCN}}     & 6 & 0.1 & 0 & \multicolumn{1}{c|}{-}                          
            & 0.00005 & 256 & batch \\
        & \multicolumn{1}{|c|}{\ah{{GAT}}}   & 6 & 0.1 & 0 & \multicolumn{1}{c|}{-}                        
            & 0.00005 & 256 & layer \\
        & \multicolumn{1}{|c|}{\ah{GCNII}}   & 6 & 0.1 & 0 & \multicolumn{1}{c|}{-}                        
            & 0.00005 & 256 & layer \\
        & \multicolumn{1}{|c|}{\ah{ACM-GCN}} & 6 & 0.1 & 0 & \multicolumn{1}{c|}{-}                     
            & 0.00005 & 256 & layer \\
    \cmidrule(lr){1-9}
    \multirow{5}{*}{\patents} 
        & \multicolumn{1}{|c|}{\ah{SGC}}     & 6 & 0 & - & \multicolumn{1}{c|}{-}                      
            & 0.005 & 128 & batch \\
        & \multicolumn{1}{|c|}{\ah{GCN}}     & 6 & 0 & - & \multicolumn{1}{c|}{-}                        
            & 0.005 & 128 & batch \\
        & \multicolumn{1}{|c|}{\ah{{GAT}}}   & 6 & 0 & - & \multicolumn{1}{c|}{-}                       
            & 0.005 & 128 & batch \\
        & \multicolumn{1}{|c|}{\ah{GCNII}}   & 6 & 0 & - & \multicolumn{1}{c|}{-}                       
            & 0.005 & 128 & batch \\
        & \multicolumn{1}{|c|}{\ah{ACM-GCN}} & 6 & 0 & - & \multicolumn{1}{c|}{-}                         
            & 0.005 & 128 & batch \\
    \cmidrule(lr){1-9}
    \multirow{5}{*}{\ogbnarxiv} 
        & \multicolumn{1}{|c|}{\ah{SGC}}     & 6 & 0.1 & 1 & \multicolumn{1}{c|}{-}                       
            & 0.001 & 256 & layer \\
        & \multicolumn{1}{|c|}{\ah{GCN}}     & 6 & 0.1 & 0.9 & \multicolumn{1}{c|}{-}                       
            & 0.0005 & 256 & layer \\
        & \multicolumn{1}{|c|}{\ah{{GAT}}}   & 6 & 0.1 & 0.9 & \multicolumn{1}{c|}{\ding{51}}               
            & 0.0005 & 256 & layer \\
        & \multicolumn{1}{|c|}{\ah{GCNII}}   & 6 & 0.1 & 1 & \multicolumn{1}{c|}{-}                     
            & 0.0005 & 256 & layer \\
        & \multicolumn{1}{|c|}{\ah{ACM-GCN}} & 6 & 0.1 & 1 & \multicolumn{1}{c|}{\ding{51}}              
            & 0.0005 & 256 & layer \\
    \bottomrule
\end{tabular}
}
\end{table*}

\section{Experimental Settings}\label{sec: hyper}

\subsection{Datasets} \label{sec: hyper_dataset}
We evaluate \AH on datasets with various homophily ratios. 
We used the filtered version~\cite{critical} of {\squirrel} datasets, which removes all the duplicated nodes that share the same neighbors and labels. These duplicates exist widely in the train and test set and can cause data leakage. The {\amazon} dataset is from~\cite{critical}. For the {\penn}, {\arxiv}, {\genius} and {\patents} datasets, we follow the same settings as~\cite{lim2021large} with 50\%/25\%/25\% random splits for train/valid/test. 
We run 10 times on each of the 10 benchmark datasets. We follow the setup in~\cite{critical}, which does not convert the directed graphs to undirected graphs and does not use reverse edges since the outgoing neighbors might not be observed during real-world inference.

\subsection{\AHs and \AH} \label{sec: hyper_moscat}
\noindent\textbf{\AHs.} 
In this setting, we reserve the original validation set $\V_{\text{val}}$ exclusively for validating both the GNN experts and the gating model, ensuring $\V_{\text{val}}$ is not used during model training. We introduce a hyperparameter $\alpha\in(0, 1)$ to denote the ratio of data sampled from the original training set $\V_{\text{train}}$ for expert training $\V_{\text{exp}}$. The remaining $\V_{\text{train}}$ is designed as the holdout set $\V_{\text{hold}}$, reserved for gating training. To alleviate the experts' performance drop caused by training on only a subset of $\V_{\text{train}}$, we create a complementary split on $\V_{\text{train}}$ with the same $\alpha$. In this split, the expert-training set $\V_{\text{exp}}'$ completely overlaps with the holdout set $\V_{\text{hold}}$ from the first split. We then follow the same procedure to train another set of experts and gating models. At inference time, we average the outputs of the two gating models to form the final prediction. Adding the complementary split results in an average accuracy improvement of around 0.3, with also no validation set $\V_{\text{val}}$ used during training. In practice, \AHs achieves its best results when $\alpha$ is between 0.55 $\sim$ 0.9 and remains stable across this range. The configuration of $\alpha$ is shown in Table~\ref{tab: hyper_alpha}.

\noindent\textbf{\AH.}
In this setting, we follow prior work~\cite{policy-gnn, wang2021confident} to train the gating model on the original validation set. In our experiments, we sample 90\% of the original validation set $\V_{\text{val}}$ as $\V_{\text{hold}}$ and use the remaining 10\% for gating validation. 
In some cases we observe further gains by training the gating model on the full $\V_{\text{val}}$ and validating on $\V_{\text{train}}$.
Thereby, we introduce a binary hyperparameter $\eta \in\{0, 0.1\}$, where $\eta=0.1$ denotes the former split and $\eta=0$ the latter.


Importantly, the validation set serves as a natural holdout set for assessing generalization across training epochs and hyperparameters. Our method, \AH, can be seen as a fine-grained extension of this idea: rather than using $\V_{\text{val}}$ merely for early stopping, we leverage it to evaluate and weight different experts over node subgroups. In practice, \AH is a plug-and-play module that operates on pre-trained GNN checkpoints without any retraining of the base model; we only train a lightweight MLP gating model on the modestly sized validation set (which seldom includes filtered samples from the training set), making it both fast and memory-efficient.


To verify that our performance gains arise from expert mixing rather than simply from extra training data, we compare against two baselines in Table~\ref{tab: train_on_val} under the same validation split as \AH:
\begin{itemize}
    \item \textbf{GNN-ft:} fine-tuning the entire pre-trained GNN on $\V_{\text{val}}$.
    \item \textbf{GNN-mlp:} freezing the GNN backbone and training an auxiliary MLP classifier (taking GNN logits as input) on $\V_{\text{val}}$.
\end{itemize}
The results indicate that additional training on the validation set offers, at best, marginal accuracy improvements and sometimes leads to degradation compared to standard GNNs, likely due to the small size of $\V_{\text{val}}$ and the risk of overfitting. 
Together with the consistently superior gains of \AHs, these results confirm that our improvements stem from adaptive expert mixing rather than from naively adding more labeled training data.

\subsection{Baselines} \label{sec: hyper_baselines}
To evaluate the flexibility of \AH, we select three classic homophilous GNNs (GCN~\cite{gcn}, SGC~\cite{sgc}, GAT~\cite{gat}), and two state-of-the-art heterophilous GNNs (GCNII~\cite{gcnii}, ACM-GCN~\cite{acmgcn}) which cover comprehensive GNN architectural designs. 
We note that many scope mixing methods have assumptions about GNN architectures. 
For example,~\cite{nai} can only be used on decoupled GNNs like SGC, while~\cite{acmgcn, atp, alt} are incompatible with GNNs using learnable aggregators like GAT.
We further compare our methods with four models designed for node classification under heterophily: H2GCN~\cite{h2gcn}, GPRGCN~\cite{gprgnn}, FSGNN~\cite{fsgnn}, MixHop~\cite{mixhop}; three Graph Transformers: GraphGPS~\cite{graphgps}, SGFormer~\cite{sgformer}, Polynormer~\cite{polynormer}; and three Graph MoEs: GMoE~\cite{graphmoe}, Mowst~\cite{mowst}, DA-MoE~\cite{damoe};
We also include skip-connection methods designed for deeper GNNs: Jumping Knowledge~\cite{jknet}, GAMLP~\cite{gamlp} and G$^2$-GNN~\cite{g2gnn}.

\subsection{Hyperparameters} \label{sec: hyper_hyper}
In this section, we first clarify the hyperparameter settings for both GNNs and \AH. Then, we provide detailed instructions and analysis for \AH tuning.

Table~\ref{tab: hyper} shows the hyperparameter search range for baseline GNNs. We set them according to the original paper for other special hyperparameters. 
Unless stated, we do not tune GNNs with residual connections or jumping knowledge. We also include an extra dropout layer on the input node feature, using a dropout ratio different from the one after each hidden layer. For SGC, we use an MLP instead of a single linear layer. For every MLP, including the one used in \AH, we add residual connections and fix the number of layers to 3. 

We observe that normalization is important for GNNs, especially on larger datasets, and tuning the depth can provide substantial accuracy gains. With proper hyperparameter tuning, classic GNNs are strong baselines on heterophilous graphs, which aligns with recent works' observation~\cite{critical, nid}. When tuning the hyperparameters of the MLP in \AH, we notice it is not sensitive to the number of transformation layers. Setting the number of layers to 3 works well in every setting. We also find that \AH faces an accuracy drop when setting dropout or feature dropout to a value larger than 0. Therefore, we set the layers of MLP to 3 and dropout to zero for all settings, as shown in Table~\ref{tab: hyper_tab1}. 

We further investigate tuning the \AH-specific hyperparameters. Table~\ref{tab: hyper_range} displays each hyperparameter's explanation and search range. 
For the hidden dimension of \AH, we notice that \AH often works well when setting the dimension to 256. In smaller datasets such as {\chameleon}, we observe that a lower dimension is enough and enjoys better training. We limit the dimension for the {\patents} dataset to 128 due to the GPU memory constraint.
Note that we set the upper bound of the \AH maximum scope size $L_{\text{max}}$ to 6, rather than limiting it to a shallow 2 to 3 hop neighborhood~\cite{shadow} or expanding it to a global scope~\cite{global}. This is because (1) shallow scope does not contain sufficient homophily in heterophilous graphs~\cite{h2gcn}, and (2) according to the six degrees of separation theorem, a 6-hop neighborhood is already large enough for many Wikipedia, citation, and social networks. Further increasing the scope size will only provide marginal improvement. As shown in Table~\ref{tab: hyper_tab1}, setting $L_{\text{max}}$ to 6 works best in most cases. $L_{\text{max}}$ can be used as a crucial parameter to trade off accuracy and overhead (see Appendix~\ref{sec: depth-analysis} for more details).


Table~\ref{tab: hyper_tab1} shows the best hyperparameters for \AH results presented in Table~\ref{tab:improv}. \AHs uses the same hyperparameter configuration as \AH, with an additional hyperparameter $\alpha$ tuned separately (see Table~\ref{tab: hyper_alpha} for best configuration and Appendix~\ref{sec: hyper_moscat} for more details). 
We additionally include the original node features as one of the inputs for all models in the {\amazon} dataset (not for other datasets) since we observe that node features are highly informative for {\amazon} but have adverse or negligible effects on other datasets.

\subsection{Software and Hardware}\label{sec: append_software_hardware}
We implement \AH using Python 3.11, PyTorch 2.0.1, PyG 2.4.0, and CUDA 12.2. 
All the experiments are conducted on a machine with dual 96 Core AMD EPYC 9654 CPUs paired with 1.5TB ECC-DDR5 RAM and a single NVIDIA RTX 6000 Ada GPU with 48GB ECC-GDDR6 VRAM.

\section{Additional Experiments and Analysis}


\begin{table*}[t]
\small
\caption{Leaderboard comparison. ACM-GCN++ and GloGNN++ use MLPs to transform the entire adjacency matrix, which is only applicable to the transductive setting.} \label{tab: leaderboard}
\vspace{2mm}
\centering
\resizebox{0.8\textwidth}{!}{
\begin{tabular}{l|ccc}
    \toprule
    Rank & \amazon & \penn & \pubmed \\
    \midrule
    \midrule
    \multirow{2}{*}{\textbf{1}$^{\text{st}}$}
    & 55.54 \std{0.51} 
    & 86.18 \std{0.24}
    & 91.95 \std{0.19} \\
    & tuned-GAT~\cite{tuned}
    & PathMLP~\cite{pathmlp}
    & GNNDLD~\cite{gndld} \\
    \cmidrule(lr){1-4}
    \multirow{2}{*}{\textbf{2}$^{\text{nd}}$}
    & 54.92 \std{0.42} 
    & 86.09 \std{0.56} 
    & 91.56 \std{0.50} \\
    & NID~\cite{nid}
    & Dual-Net GNN~\cite{dualnet}
    & NHGCN~\cite{nhgcn} \\
    \cmidrule(lr){1-4}
    \multirow{2}{*}{\textbf{3}$^{\text{rd}}$}
    & 54.81 \std{0.49}
    & 86.08 \std{0.43}
    & 91.44 \std{0.59} \\
    & Polynormer~\cite{polynormer}
    & ACM-GCN++~\cite{acmgcn}
    & ACM-Snowball-3~\cite{acmgcn} \\
    \midrule
    \midrule
    \multirow{2}{*}{Ours}
    & \textbf{56.66} \std{0.57} 
    & \textbf{87.13} \std{0.45}
    & \textbf{92.40} \std{0.46} \\
    & \ah{SAGE} ($L_{\text{max}}=6$)
    & \ah{MixHop} ($L_{\text{max}}=16$)
    & \ah{GCNII} ($L_{\text{max}}=6$) \\
    \bottomrule
\end{tabular}
}
\end{table*}

\subsection{Leaderboard Comparison}\label{sec:append_leaderboard}

Since \AH is flexible for GNNs with various architectures, can \ah{GNN}s achieve state-of-the-art performance? To answer this, Table~\ref{tab: leaderboard} summarizes our comparison with the top 3 methods reported on {\amazon}, {\penn}, and {\pubmed} leaderboards from Paper With Code. 
To the best of our knowledge, we find two methods, NID~\cite{nid} and PathMLP~\cite{pathmlp}, have reported top performance but have not been included in the leaderboard. We also include this method in our comparison.
The result demonstrates the superior performance of \ah{GNN}. Note that we 
follow the standard setup for training the base GNN models and do not include any additional tricks. 

\subsection{Runtime Analysis}\label{sec: append_runtime}
Let $L'$ denote the number of MLP layers and $F_{\text{hid}}$ denote the hidden dimension. The time complexity of the gating model $\mathcal{F}$ is bounded by $O\left(CL_{\text{max}}|\V|F_{\text{hid}} + |\V|L_{\text{max}}^2 + L'|\V|F_{\text{hid}}^2\right)$.  


Although \ah{GNN} requires first individually training $L_{\text{max}}+1$ GNN experts and then training an additional gating model, it does not add much overhead during training. This is due to: 
\begin{enumerate}
    \item Model depth is one of the most critical hyperparameters for GNNs on heterophilous graphs, with deeper models typically achieving better performance. Regardless, training GNNs with 1 to 6 layers is necessary during the hyperparameter search for the optimal number of layers L. Table~\ref{tab:runtime-1} shows the accuracy when fixing the number of layers of the baseline GNNs to 2, as well as the accuracy when selecting the optimal number of layers through hyperparameter tuning.
    \item Since GNNs of different depths can be trained in parallel, the total (parallel) training time for \ah{GNN} has two components: the time to train the deepest GNN ($L=L_{\text{max}}$) and the gating model training time. The simplicity of our design enables efficient utilization of the GPU resources, while other personalized scoping methods require complex architecture (e.g., GNN-G$^2$employs additional GNNs for gating at each layer) or significantly longer training epochs (e.g., Mowst iteratively trains GNNs and the gating module). Table~\ref{tab:runtime-2} shows the parallel training time of \AH remains low compared to other methods.
    \item \AH can be flexibly applied to shallow GNNs only while maintaining significant accuracy improvements, reducing the computational overhead of training deeper GNNs. As shown in Figure~\ref{fig: append_depth_1} and~\ref{fig: append_depth_2}, the majority of accuracy gains from \AH occur with GNNs of depth ${L\leq4}$.
\end{enumerate}

\begin{table}[ht]
\caption{Performance comparison for GNNs with different depths.} \label{tab:runtime-1}
\centering
\resizebox{0.55\textwidth}{!}{
\begin{tabular}{ll|cccc}
\toprule
\multicolumn{2}{c|}{Model} & {\chameleon} & {\squirrel} & {\arxiv} & {\genius} \\
\midrule
\midrule
\multirow{2}{*}{SGC} & L=2 & 38.79 & 39.55 & 45.28 & 87.53 \\
& L=best & 39.72 (L=5) & 40.04 (L=1) & 45.88 (L=4) & 88.01 (L=1) \\
\midrule
\multirow{2}{*}{GCN}& L=2 & 39.75 & 40.56 & 43.76 & 89.15 \\
& L=best & 41.74 (L=6) & 41.33 (L=6) & 48.68 (L=5) & 90.22 (L=4) \\
\midrule
\multirow{2}{*}{GAT}& L=2 & 37.97 & 37.57 & 50.16 & 86.53 \\
&L=best & 39.93 (L=5) & 39.36 (L=4) & 52.77 (L=3) & 88.44 (L=4) \\
\bottomrule
\end{tabular}
}
\end{table}

\begin{table}[ht]
\caption{Parallel training time comparison.}\label{tab:runtime-2}
\centering
\resizebox{0.55\textwidth}{!}{
\begin{tabular}{l|ccc}
\toprule
\textbf{} & {\chameleon} & {\amazon} & {\penn} \\
\midrule
\midrule
GCN (L=6) & 1.59s ($\times$1.00) & 12.67s ($\times$1.00) & 7.27s ($\times$1.00) \\
\cmidrule(lr){1-4}
\AH & $\times$0.19 & $\times$0.16 & $\times$0.23 \\
\ah{GCN} & $\times$1.19 & $\times$1.16 & $\times$1.23 \\
\cmidrule(lr){1-4}
GCN-Attn & $\times$1.40 & $\times$2.63 & $\times$2.34 \\
GCN-G$^2$ & $\times$1.79 & $\times$13.6 & $\times$3.36 \\
GCN-Mowst & $\times$61.6 & $\times$41.9 & $\times$23.4 \\
\bottomrule
\end{tabular}
}
\end{table}

\subsection{Space Analysis} \label{sec: space-analysis}
\AH allows each expert and the gating model to be trained separately. Practitioners can flexibly adjust the parallelism of expert training to balance runtime and memory consumption. Let $L_{\max}$ denote the maximum depth of GNN experts, $F_{\text{hid}}$ denote the hidden dimension, and $\mathcal{V}_{\exp}$ and $\mathcal{V}_{\text{gate}}$ denote the training sets for experts and the gating model, respectively. Taking GCN as an example, we have two boundary cases:

\begin{itemize}
    \item If we prioritize minimal memory overhead, we can train each expert sequentially. 
    The space complexity is therefore bounded by training the deepest GNN expert: 
    $O\!\left(L_{\max}|\mathcal{V}_{\exp}|F_{\text{hid}} + L_{\max}F_{\text{hid}}^2\right).$

    \item If we prioritize minimal runtime, we can train all experts in parallel. 
    The space complexity then becomes proportional to the total number of layers $(1 + 2 + \cdots + L_{\max})$ across all experts:
    $O\!\left(L_{\max}^2|\mathcal{V}_{\exp}|F_{\text{hid}} + L_{\max}^2F_{\text{hid}}^2\right).$
\end{itemize}

The memory consumption of the gating model training is relatively small and won't become a bottleneck:
$O\!\left(L_{\max}|\mathcal{V}_{\text{gate}}|F_{\text{hid}} + L_{\max}F_{\text{hid}}^2\right)$,
since 
$F_{\text{hid}} \ll |\mathcal{V}_{\text{gate}}| \ll |\mathcal{V}_{\exp}|$. In conclusion, with comparable runtime, Moscat requires approximately $L_{\max}$ times more memory than a single $L_{\max}$-layer GNN during training.

\subsection{Depth Analysis}\label{sec: depth-analysis}
We investigate how \ah{GNN} performance changes when varying the maximum depth $L_{\text{max}}$ of GNN experts. Figure~\ref{fig: append_depth_1} and Figure~\ref{fig: append_depth_2} compare the performance of \ah{GNN} and corresponding GNN on {\amazon} dataset and {\arxiv} dataset, respectively.
As depth increases, GNN performance first increases and then decreases or remains unchanged, which is undesirable given deeper GNNs' superior expressive power~\cite{provable}. In contrast, \ah{GNN} leverages depth more effectively, showing a consistent upward trend in performance as expert depth increases. This indicates a promising characteristic of \ah{GNN}: it does not require careful tuning of $L_{\text{max}}$ tuning to achieve good performance. We found that $L_{\text{max}}=6$ is a good trade-off between accuracy and computational overhead for all datasets and all base GNNs. For cases with strict runtime requirements, setting $L_{\text{max}}=2$ for \ah{GNN} consistently achieves performance that is better than or comparable to the best GNN with depths between 1 and 10.

\begin{figure*}[t]
    \centering 
    \includegraphics[width=0.95\textwidth]{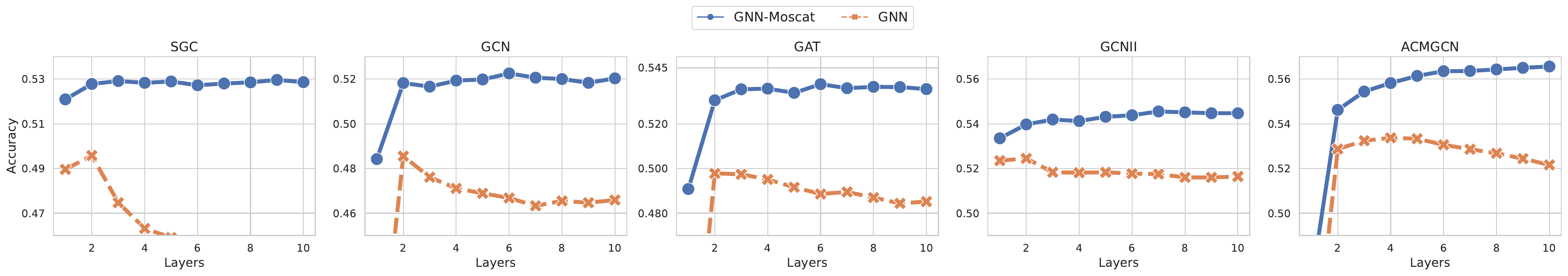}
    \caption{Performance comparison between GNN and \ah{GNN} on {\amazon} dataset. The x-axis represents the number of layers for GNN and $L_{\text{max}}$ for \ah{GNN}.}
     \label{fig: append_depth_1}
\end{figure*}

\begin{figure*}[t]
    \centering 
    \includegraphics[width=0.95\textwidth]{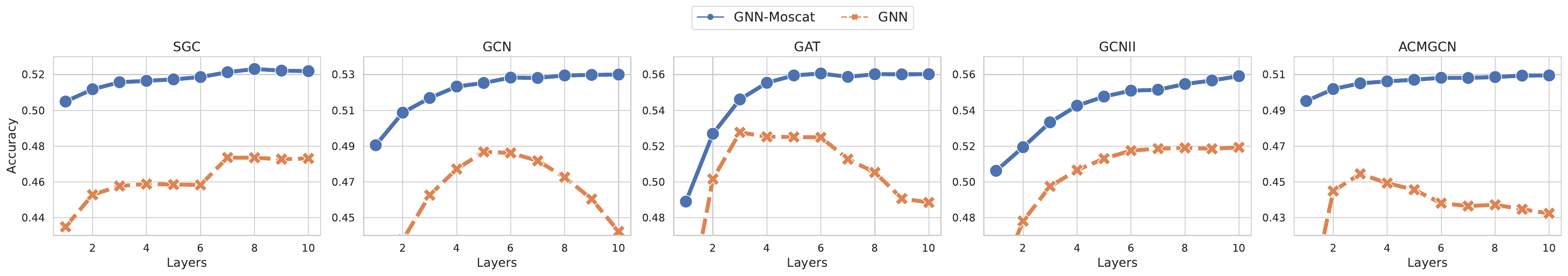}
    \caption{Performance comparison between GNN and \ah{GNN} on {\arxiv} dataset. The x-axis represents the number of layers for GNN and $L_{\text{max}}$ for \ah{GNN}.}
     \label{fig: append_depth_2}
\end{figure*}

\begin{figure}[t]
    \centering \hspace*{0mm}
    \includegraphics[width=0.6\textwidth]{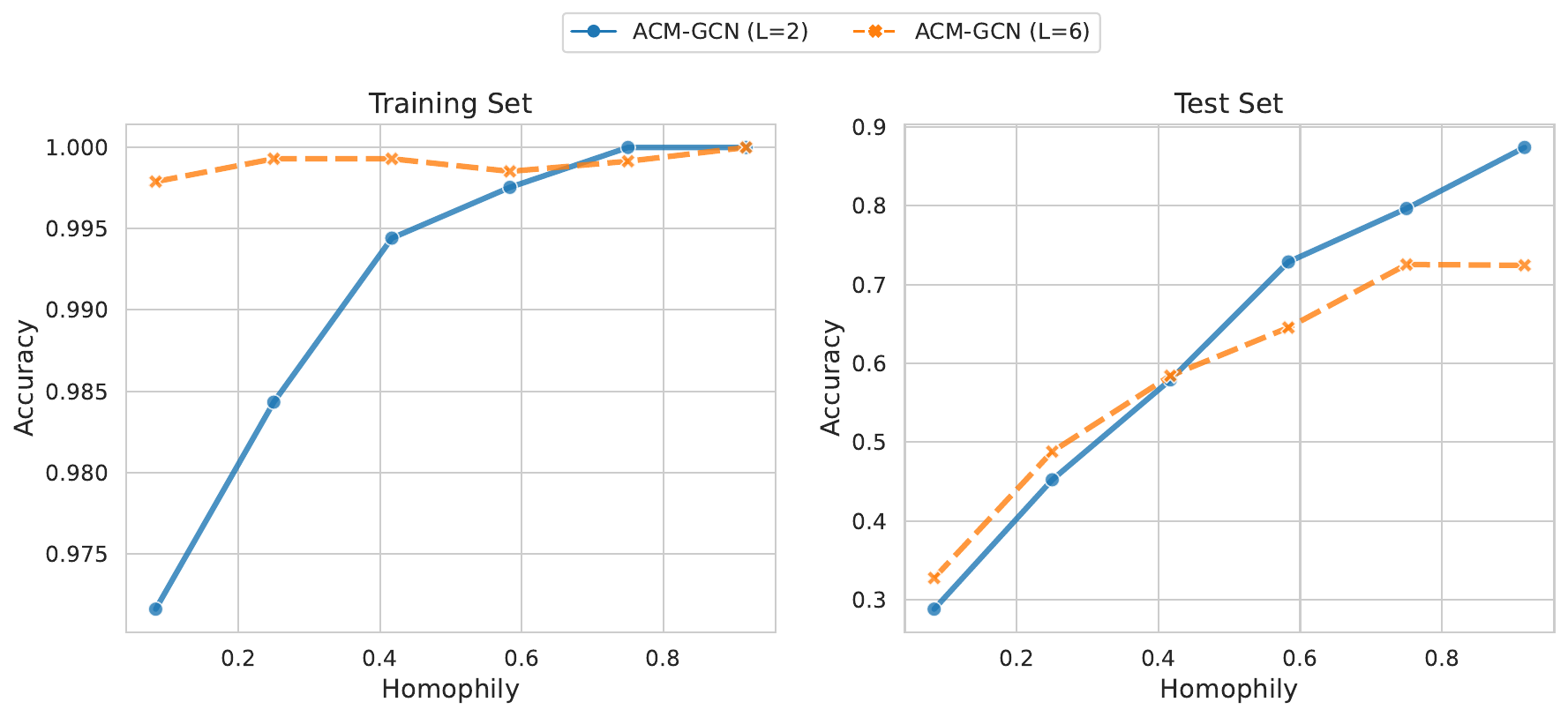}
    \caption{Training and test accuracy of the shallow ($L=2$) and deeper ($L=6$) soft scoping method ACMGCN on the {\amazon} dataset. Results indicate that deeper ACMGCN exhibits more pronounced overfitting, particularly on homophilous nodes.}
     \label{fig: appendix_soft_scoping_overfit}
\end{figure}

\subsection{Empirical Evidence of Deeper Soft Scoping Methods Suffer from Overfitting} \label{sec: append_soft_overfitting}

We plot the training and test accuracy of a representative soft scoping method ACMGCN~\cite{acmgcn} (see Figure~\ref{fig: appendix_soft_scoping_overfit}). The figure demonstrates that the 6-layer ACMGCN achieves higher training accuracy than the 2-layer version. However, in homophily regions, the 6-layer model exhibits lower test accuracy. This suggests that while deeper soft scoping methods offer greater expressive power, they are also prone to overfitting. Considering the architecture of soft scoping methods, we speculate this issue arises because shallow layers/experts in soft scoping models may overfit to noise from higher-hop neighbors.

\subsection{Supplementary Details for Heterophily-Biased Sample Filtering} \label{sec: append_hetero_filter}

Heterophily nodes exhibit more diverse neighborhood label patterns than homophily nodes, which makes them more challenging for experts to generalize from. As a result, experts typically show a larger generalization gap on heterophily nodes. When experts overfit $\V_{\text{exp-het}}$ (i.e., the heterophilous nodes in $\V_{\text{exp}}$), their logits on these samples show high prediction accuracy with high confidence. Consequently, the gating model may incorrectly assign large weights to these overfitted experts on heterophily nodes after training on $\V_{\text{exp-het}}$.

To further illustrate the effectiveness and reasoning behind our $\V_{\text{exp-het}}$ filtering technique, we demonstrate an example using GCN and SGC on the {\amazon} dataset (see Figure~\ref{fig: appendix_hetero_filter_gcn}). When comparing the homophily-accuracy curves for experts of depths 0 to 6 across the training, validation, and test sets, we observe that:
\begin{itemize}
    \item The curves for the validation and test sets are nearly identical.
    \item Although the training set’s curves align with those on the validation set in the homophily region, they show significant deviations in the heterophily region.    
\end{itemize}

These findings motivate us to exclude samples from $\V_{\text{exp-het}}$ when training the gating model.

\begin{figure}[t]
    \centering \hspace*{0mm}
    \includegraphics[width=0.95\textwidth]{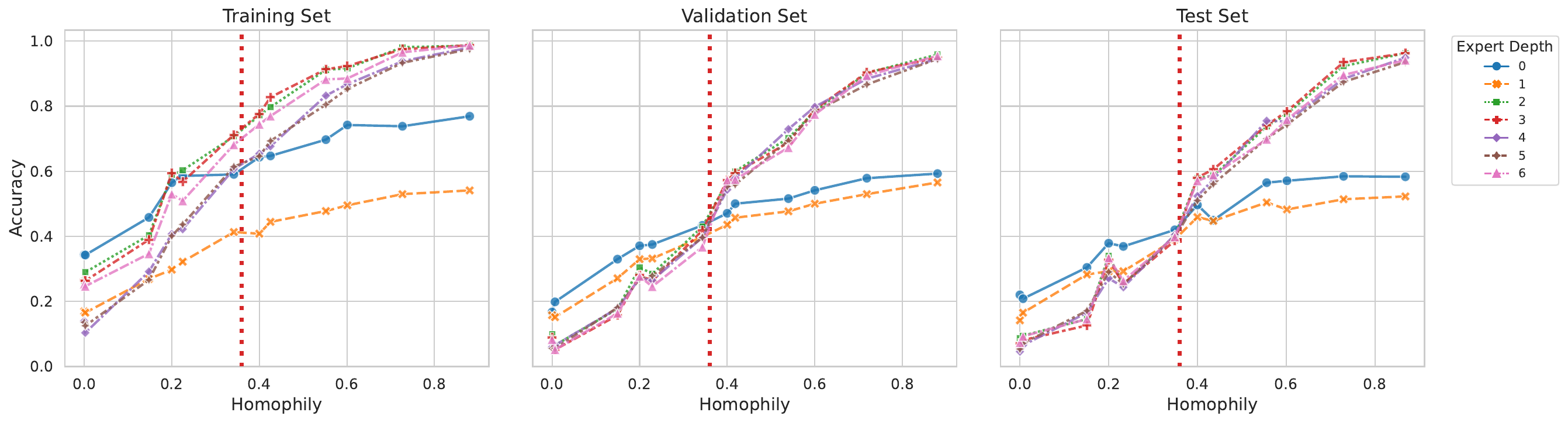}
    \vspace{-3mm}
    \caption{The accuracy of \textbf{GCN} experts on training/validation/test sets of the {\amazon} dataset. The red dotted line shows the average homophily ratio of the training set.}
     \label{fig: appendix_hetero_filter_gcn}
\end{figure}

\begin{figure}[t]
    \centering \hspace*{0mm}
    \includegraphics[width=0.95\textwidth]{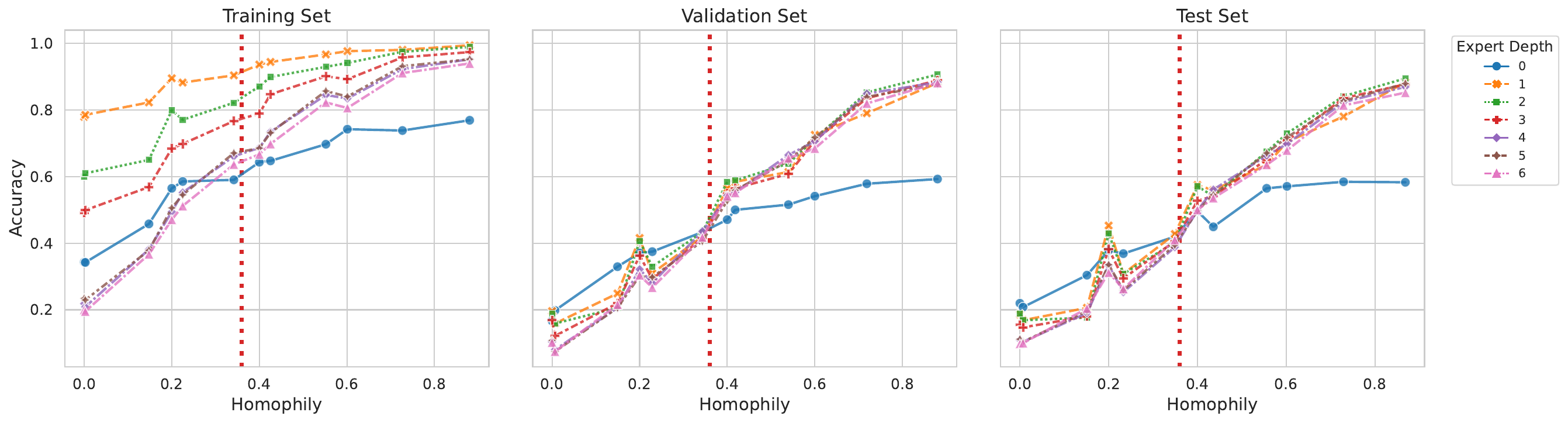}
    \vspace{-3mm}
    \caption{The accuracy of \textbf{SGC} experts on training/validation/test sets of the {\amazon} dataset. The red dotted line shows the average homophily ratio of the training set.}
     \label{fig: appendix_hetero_filter_sgc}
\end{figure}





\begin{table}[ht]
\centering
\caption{Test accuracy comparisons across different dropout ratios for GNN experts. We set $L_{\text{max}}=6$ and fix other GNNs and \AH hyperparameters. For GAT and GCNII, we report their best accuracy across configurations with 1-6 layers.} \label{tab: train_dp}
\resizebox{0.6\textwidth}{!}{
    \begin{tabular}{ccccc}
    \toprule
     \multicolumn{2}{c}{\textbf{Dropout}} & \textbf{0.5} & \textbf{0.3} & \textbf{0.1} \\
    \midrule
    \midrule
    \multirow{6}{*}{\ogbnarxiv} & GAT & 72.14 \std{0.17} & 72.17 \std{0.17} & 72.01 \std{0.22} \\
    \cmidrule(lr){2-5}
     & All wrong & \cellcolor{red!10}18.4\% & \cellcolor{red!20}18.0\% & \cellcolor{red!30}17.6\% \\
     & All correct & \cellcolor{red!10}56.5\% & \cellcolor{red!10}56.5\% & \cellcolor{red!20}56.0\% \\
     & Others & \cellcolor{green!10}25.1\% & \cellcolor{green!20}25.6\% & \cellcolor{green!30}26.4\% \\
    \cmidrule(lr){2-5}
     & \ah{GAT} & 73.69 \std{0.19} & 73.85 \std{0.20} & 73.91 \std{0.26} \\
    \cmidrule(lr){2-5}
     & $\Delta$ & +1.55 & +1.68 & +1.90 \\
    \midrule
    \midrule
    \multirow{6}{*}{{\pubmed}} & GCNII & 91.27 \std{0.51} & 91.18 \std{0.72} & 91.20 \std{0.65} \\
    \cmidrule(lr){2-5}
     & All wrong & \cellcolor{red!10}5.4\% & \cellcolor{red!30}4.6\% & \cellcolor{red!30}4.6\% \\
     & All correct & \cellcolor{red!10}86.1\% & \cellcolor{red!30}84.8\% & \cellcolor{red!30}84.7\% \\
     & Others & \cellcolor{green!10}8.5\% & \cellcolor{green!30}10.6\% & \cellcolor{green!30}10.7\% \\
    \cmidrule(lr){2-5}
     & \ah{GCNII} & 92.04 \std{0.33} & 92.32 \std{0.34} & 92.24 \std{0.37} \\
    \cmidrule(lr){2-5}
     & $\Delta$ & +0.83 & +1.14 & +1.04 \\
    \bottomrule
    \end{tabular}
}
\end{table}

\subsection{How Expert Training Affects \AH Performance} \label{sec: append_expert_train}
In this section, we analyze how expert training affects performance by varying the dropout ratio, hidden dimensions, and training epochs for each expert. We set $\gamma$ to “-” for all \ah{GNN} models in this analysis. Although we did not tune the GNN experts for enhanced \ah{GNN} performance in Table~\ref{tab:main}, the insights from this hyperparameter tuning suggest that further improvements in accuracy can be achieved for \ah{GNN} models.

Table~\ref{tab: train_dp} presents a performance comparison between GAT and GCNII, along with their \AH variants (\ah{GAT} and \ah{GCNII}), across various dropout ratios on two homophilous datasets: {\arxiv} and {\pubmed}. We observe that as the dropout ratio decreases, the accuracy of both GAT and GCNII experiences a slight decline, suggesting an increased risk of overfitting. In contrast, the \AH versions exhibit an improving accuracy trend.

This phenomenon is further explained by analyzing the distribution of nodes into three categories:
\begin{enumerate}
    \item “All wrong” --- nodes misclassified by all experts.
    \item “All correct” --- nodes correctly classified by all experts.
    \item “Others” --- nodes that do not fall into the above two categories
\end{enumerate}

\begin{table}[h!]
\centering
\caption{Test accuracy comparisons across different hidden dimensions for GNN experts. We set $L_{\text{max}}=6$ and fix other GNNs and \AH hyperparameters. For GCN and GAT, we report their best accuracy across configurations with 1-6 layers. \textcolor{gray}{(00.00 \std{0.00})} denotes the corresponding training accuracy.} \label{tab: train_dim}
\resizebox{0.65\textwidth}{!}{
\begin{tabular}{ccccc}
\toprule
\multicolumn{2}{c}{\textbf{Hidden Dimensions}} & \textbf{128} & \textbf{256} & \textbf{512} \\
\midrule
\midrule
\multirow{14}{*}{\amazon} &\multirow{2}{*}{GCN} & 47.33 \std{0.52} & 48.18 \std{0.66} & 48.55 \std{0.38} \\
 &   & \textcolor{gray}{(59.03 {\std{1.54}})} & \textcolor{gray}{(61.51 {\std{2.33}})} & \textcolor{gray}{(63.25 {\std{2.85}})} \\
\cmidrule(lr){2-5}
&All wrong & 30.1\% & 30.1\% & 30.0\% \\
&All correct & \cellcolor{green!10}16.8\% & \cellcolor{green!20}17.2\% & \cellcolor{green!20}17.4\% \\
&Others & \cellcolor{red!10}53.1\% & \cellcolor{red!20}52.7\% & \cellcolor{red!20}52.6\% \\
\cmidrule(lr){2-5}
&\ah{GCN} & 50.75 \std{0.87} & 51.07 \std{0.50} & 51.34 \std{0.68} \\
&$\Delta$ & +3.42 & +2.89 & +2.79 \\
\cmidrule(lr){2-5}
\vspace{-1.2mm}\\[-2.4ex]
\cmidrule(lr){2-5}
&\multirow{2}{*}{GAT} & 48.55 \std{0.51} & 49.42 \std{0.56} & 49.78 \std{0.47} \\
& & \textcolor{gray}{(59.19 {\std{1.56}})} & \textcolor{gray}{(65.20 {\std{0.88}})} & \textcolor{gray}{(72.42 {\std{0.99}})} \\
\cmidrule(lr){2-5}
&All wrong & \cellcolor{red!10}29.5\% & \cellcolor{red!20}28.8\% & \cellcolor{red!30}27.0\% \\
&All correct & 17.2\% & 17.2\% & 17.0\% \\
&Others & \cellcolor{green!10}53.3\% & \cellcolor{green!20}54.0\% & \cellcolor{green!30}56.0\% \\
\cmidrule(lr){2-5}
&\ah{GAT} & 50.68 \std{0.57} & 51.57 \std{0.53} & 52.43 \std{0.57} \\
&$\Delta$ & +2.13 & +2.15 & +2.65 \\
\bottomrule
\end{tabular}
}
\end{table}
\begin{table}[h!]
\centering
\caption{Test accuracy comparisons across different training epochs for GNN experts. We select checkpoints at 300, 1000, and 2000 training epochs for each expert, representing underfitting, well-fitting, and overfitting states, respectively. We set $L_{\text{max}}=6$, expert droput to 0, and fix other GNNs and \AH hyperparameters. For GCN and GAT, we report their best accuracy across configurations with 1-6 layers. \textcolor{gray}{(00.00 \std{0.00})} denotes the corresponding training accuracy.} \label{tab: train_epoch}
\resizebox{0.65\textwidth}{!}{
\begin{tabular}{ccccc}
\toprule
\multicolumn{2}{c}{\multirow{2}{*}{\textbf{Training Epochs}}} & \textbf{300} & \textbf{1000} & \textbf{2000} \\
& &  (underfitting) & (well-fitting) & (overfitting)\\
\midrule
\midrule
\multirow{14}{*}{\amazon}&\multirow{2}{*}{GCN} & 44.73 \std{0.90}  & 48.34 \std{0.49}  & 48.22 \std{0.63}  \\
&& \textcolor{gray}{(51.50 {\std{1.89}})} & \textcolor{gray}{(65.47 {\std{0.92}})} &\textcolor{gray}{(68.42 {\std{1.13}})} \\
\cmidrule(lr){2-5}
&\raggedleft All wrong & \cellcolor{red!10}29.8\% & \cellcolor{red!10}29.7\% & \cellcolor{red!30}26.9\% \\
&\raggedleft All correct & \cellcolor{green!10}11.7\% & \cellcolor{green!30}17.9\% & \cellcolor{green!20}16.0\% \\
&\raggedleft Others & \cellcolor{red!10}58.6\% & \cellcolor{red!30}52.4\% & \cellcolor{red!20}57.1\% \\
\cmidrule(lr){2-5}
&\ah{GCN} & 49.03 \std{0.72} & 51.11 \std{0.64} & 51.27 \std{0.74} \\
&$\Delta$ & +4.30 & +2.77 & +3.05 \\
\cmidrule(lr){2-5}
\vspace{-1.2mm}\\[-2.4ex]
\cmidrule(lr){2-5}
&\multirow{2}{*}{GAT} & 46.17 \std{0.57}  & 49.53 \std{0.58}  & 49.59 \std{0.97} \\
&& \textcolor{gray}{(54.56 {\std{1.57}})} & \textcolor{gray}{(67.44 {\std{0.91}})} & \textcolor{gray}{(78.39 {\std{1.15}})} \\
\cmidrule(lr){2-5}
&\raggedleft All wrong & \cellcolor{red!10}29.6\% & \cellcolor{red!20}27.0\% & \cellcolor{red!30}23.4\% \\
&\raggedleft All correct & \cellcolor{green!10}12.9\% & \cellcolor{green!30}16.8\% & \cellcolor{green!20}16.4\% \\
&\raggedleft Others & \cellcolor{green!10}57.5\% & \cellcolor{green!10}56.2\% & \cellcolor{green!30}60.2\% \\
\cmidrule(lr){2-5}
&\ah{GAT} & 48.49 \std{0.75} & 52.12 \std{0.64} & 53.34 \std{0.45} \\
&$\Delta$ & +2.32 & +2.59 & +3.75 \\
\bottomrule
\end{tabular}
}
\end{table}

Table~\ref{tab: train_dp} indicates that with decreasing dropout, the percentages of “all wrong” and “all correct” nodes both \colorbox{red!20}{decrease}, while the proportion of “others” \colorbox{green!20}{increased}. 
This shift creates a larger margin for possible performance gains when applying \AH. 
Moreover, both \ah{GAT} and \ah{GCNII} consistently outperform their standard GNN counterparts across different dropout levels. This suggests that incorporating \AH enhances robustness and stability across varying dropout settings.

We further compare the performance of GCN, GAT, and their \AH variants on the heterophilous dataset \amazon. Table~\ref{tab: train_dim} illustrates how test and training accuracies evolve as the hidden dimensions for all experts increase. For both GCN and GAT, test accuracy shows only modest improvements while training accuracy increases substantially, highlighting an increased generalization gap. In contrast, although the test accuracies of \ah{GCN} and \ah{GAT} also improve, the performance gains relative to their base models follow different trends. Specifically, the accuracy gains of \ah{GCN} over GCN gradually decrease with larger hidden dimensions, whereas those of \ah{GAT} over GAT continue to increase. This divergence is explained by the observation that, with increasing hidden dimensions, the percentage of “all correct” for GCN experts rises, while the percentage of “all wrong” for GAT experts falls---leading to distinct variations in the performance margins when applying \AH.

To investigate the effect of expert training on \AH performance, we selected experts from checkpoints at 300, 1000, and 2000 training epochs, corresponding to underfitting (i.e., low test and training accuracy), well-fitting (i.e., good test and training accuracy), and overfitting scenarios (i.e., similar test accuracy but much higher training accuracy), respectively. Figure~\ref{tab: train_epoch} shows that as training epochs increase, for both GCN and GAT experts, the frequency of “all wrong” predictions decreases, while “all correct” predictions initially rise and then decline. For \ah{GNN}, it is desirable for experts to fit the training data properly; however, the benefit of overfitting depends on the expert architecture (e.g., GAT benefits more from overfitting, whereas GCN does not).

Overall, these experiments suggest promising directions for tuning experts to enhance \AH performance further. Additionally, we note that GNN experts exhibit a higher percentage of “others” on heterophilous graphs than on homophilous graphs, which may explain why \AH achieves more significant accuracy improvements on heterophilous graphs.

\begin{figure}[htb]
    \centering \hspace*{-1mm}
    \includegraphics[width=0.6\textwidth]{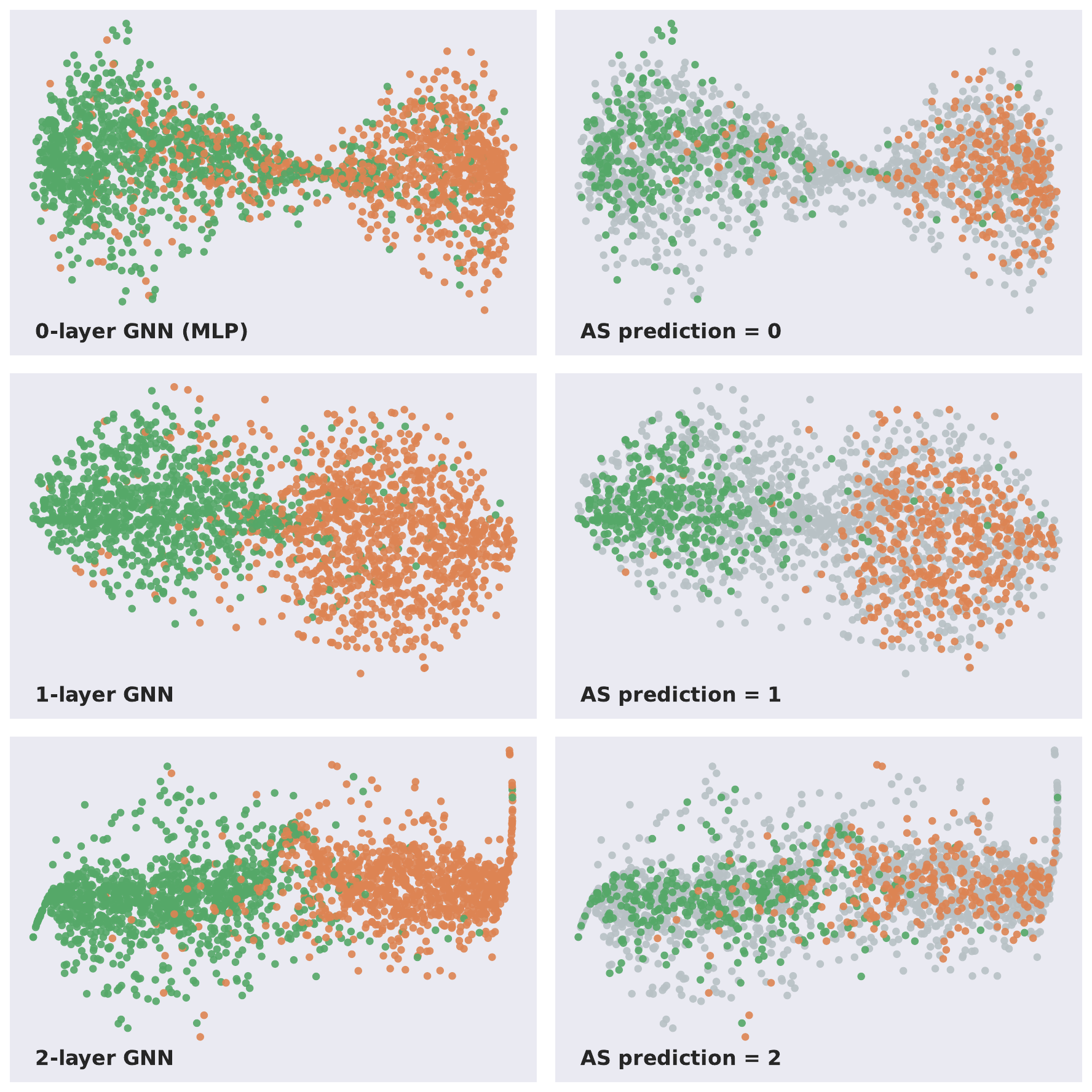}
    \caption{(Left) Visualization of t-SNE on GNN embeddings for \penn. Green and orange dots indicate nodes with different labels. (Right) In each embedding visualization, nodes that \AS (a variant of \AH) predicts to have the corresponding depth are highlighted.}
     \label{fig: interpret}
    \vspace{-3mm}
\end{figure}

\subsection{Interpretation and Visualization}~\label{sec: visual}
The above experiments have demonstrated the effectiveness of \AH in improving GNN generalization by mixing GNN models with different scopes. To further understand how expert mixing influences generalization, we introduce a variant of \AH called Adaptive Scope (\AS). In this variant, instead of predicting node labels, the gating model predicts the IDs of scope experts.

To analyze the behavior of \AS, we use t-SNE~\cite{tsne} to visualize the hidden embeddings from three scope experts, which are GAT models with layers ranging from 0 to 2, on the {\penn} dataset (Figure~\ref{fig: interpret} Left). Specifically, we add an output linear layer to each GAT model and visualize the hidden embeddings just before this layer. Since \AS predicts the optimal scope expert (i.e., Scope-0, Scope-1, or Scope-2) for each node, we highlight nodes according to the predicted scope expert in the embedding visualization (Figure~\ref{fig: interpret} Right).

From Figure~\ref{fig: interpret}, we make three key observations:
\begin{enumerate}
    \item \AS tends to select nodes that the model can differentiate more easily.
    \item The nodes chosen by \AS are located near the center of each cluster.
    \item An outlier in one model’s embedding can serve as the center in another model’s embedding.
\end{enumerate}

These findings support our hypothesis that GNNs with different depths can generalize better on different subsets of nodes.

\subsection{More Empirical Findings of Performance Disparity across Scope Experts}\label{sec: append_disparity}

Section~\ref{sec: empirical} empirically examines our theoretical findings through several experiments. In this section, we extend these experiments to include more GNN models and datasets.

\subsubsection{Overlapping Ratio}

Figure~\ref{fig: appendix_scope_overlap_2} and Figure~\ref{fig: appendix_scope_overlap} present heatmaps of the Jaccard overlap ratios (of correctly predicted test nodes) between pairs of scope experts (scope sizes 0–6) for five GNN architectures (SGC, GCN, GAT, GCNII, and ACMGCN) across multiple datasets. Darker cells denote lower overlap in each matrix, indicating that the corresponding expert pair correctly predicts largely distinct sets of test nodes, while lighter cells indicate higher overlap. Based on these two figures, we make several observations: 

\begin{figure}[b]
    \centering \hspace*{-1mm}
    \includegraphics[width=\textwidth]{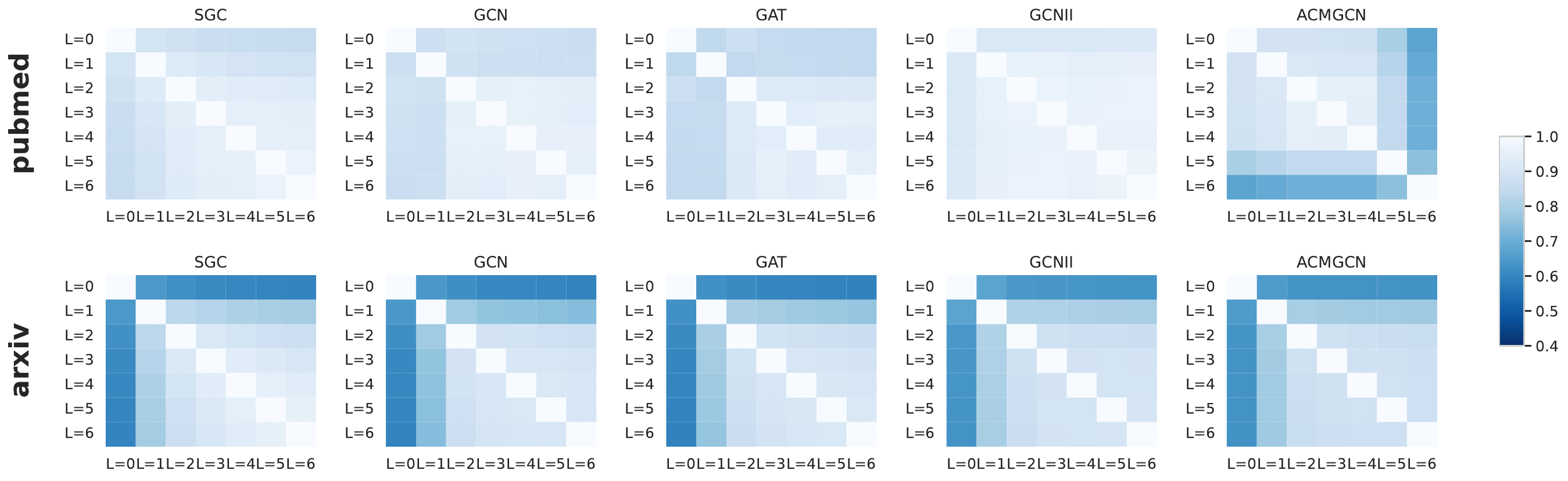}
    \caption{The overlapping ratio matrices for scope experts on homophilous datasets.}
     \label{fig: appendix_scope_overlap_2}
    \vspace{-5mm}
\end{figure}

\begin{enumerate}
    \item \textbf{Scope‑0 and Scope‑1 experts diverge most:} Across architectures and datasets, experts with scopes of 0 and 1 exhibit the lowest overlap with other experts, suggesting they capture unique information relative to larger‑scope variants.
    \item \textbf{Dataset homophily drives overlap:} Heterophilous datasets (e.g., {\chameleon}, {\squirrel}, and {\patents} with average node homophily less than 0.2) show consistently low overlaps (often $<$ 0.7), whereas homophilous datasets (e.g., {\arxiv}, {\pubmed}) yield high overlaps (often $>$ 0.7).
    \item \textbf{Architecture‑dependent diversity:} Attention‑based models (GAT, ACMGCN) produce lower overlaps and more varied patterns, particularly GAT on Penn94 and ACMGCN on Pubmed, indicating greater specialization among experts. In contrast, the decoupled SGC architecture exhibits uniformly higher overlaps, reflecting more redundant predictions among its experts.
\end{enumerate}

\begin{figure}[htb]
    \centering \hspace*{-1mm}
    \includegraphics[width=\textwidth]{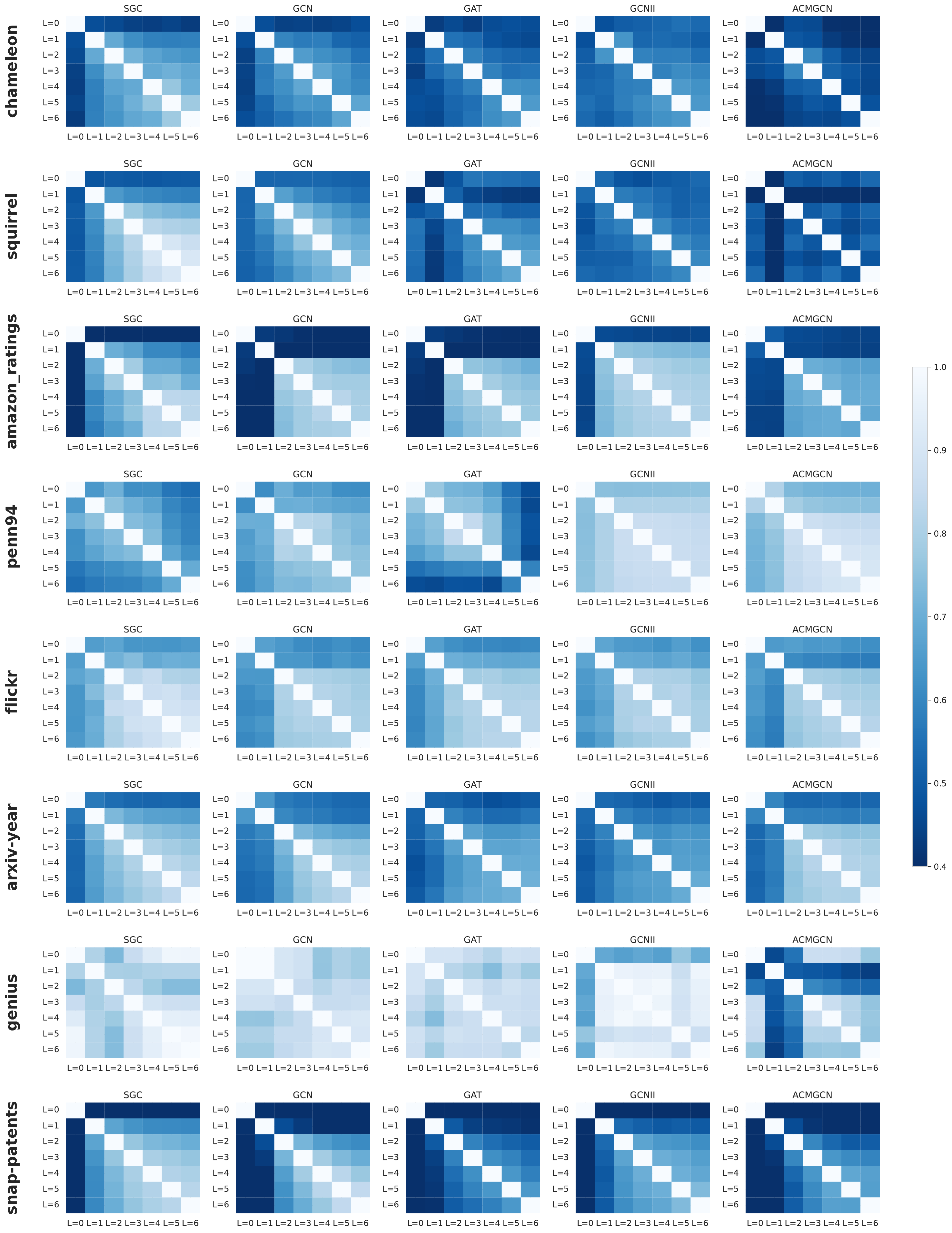}
    \vspace{-6mm}
    \caption{The overlapping ratio matrices for scope experts on heterophilous datasets.}
     \label{fig: appendix_scope_overlap}
    \vspace{-5mm}
\end{figure}

\subsubsection{Performance Disparity}
Theorem~\ref{theorm:1} reveals a clear generalization disparity between shallow and deeper experts across node homophily subgroups. Specifically, the disparity is related to the average node homophily ratio in the training set. Although our theoretical analysis is based on SGC, our experiments indicate that the observations also hold for other, more complex GNNs.

Figure~\ref{fig: appendix_homophily} illustrates the performance gap between deeper and shallow experts across several datasets. In the figure, a positive bar indicates that the deeper experts outperform the shallow ones. The results confirm our theory by showing significant performance differences between regions divided by the average homophily ratio.

We also notice distinct patterns across various GNN architectures:
\begin{itemize}
    \item SGC and GCN: In regions with low homophily (heterophilous regions), deeper experts always perform worse than shallow experts.
    \item GAT and GCNII: In these architectures, deeper experts sometimes outperform shallow experts even in heterophilous regions.
    \item ACMGCN: Here, deeper experts consistently outperform shallow experts in heterophilous regions.
\end{itemize}

A possible explanation for these differences is that GAT, GCNII, and ACMGCN incorporate specific designs for handling heterophilous information. For instance, GAT uses attention-based neighbor aggregation, GCNII applies inception residual connections, and ACMGCN integrates both high-pass and full-pass filters. In contrast, SGC and GCN lack these mechanisms. In particular, stacking multiple high-pass and full-pass filters appears to be highly effective for learning heterophily.

\begin{figure}[htb]
    \centering \hspace*{-1mm}
    \includegraphics[width=\textwidth]{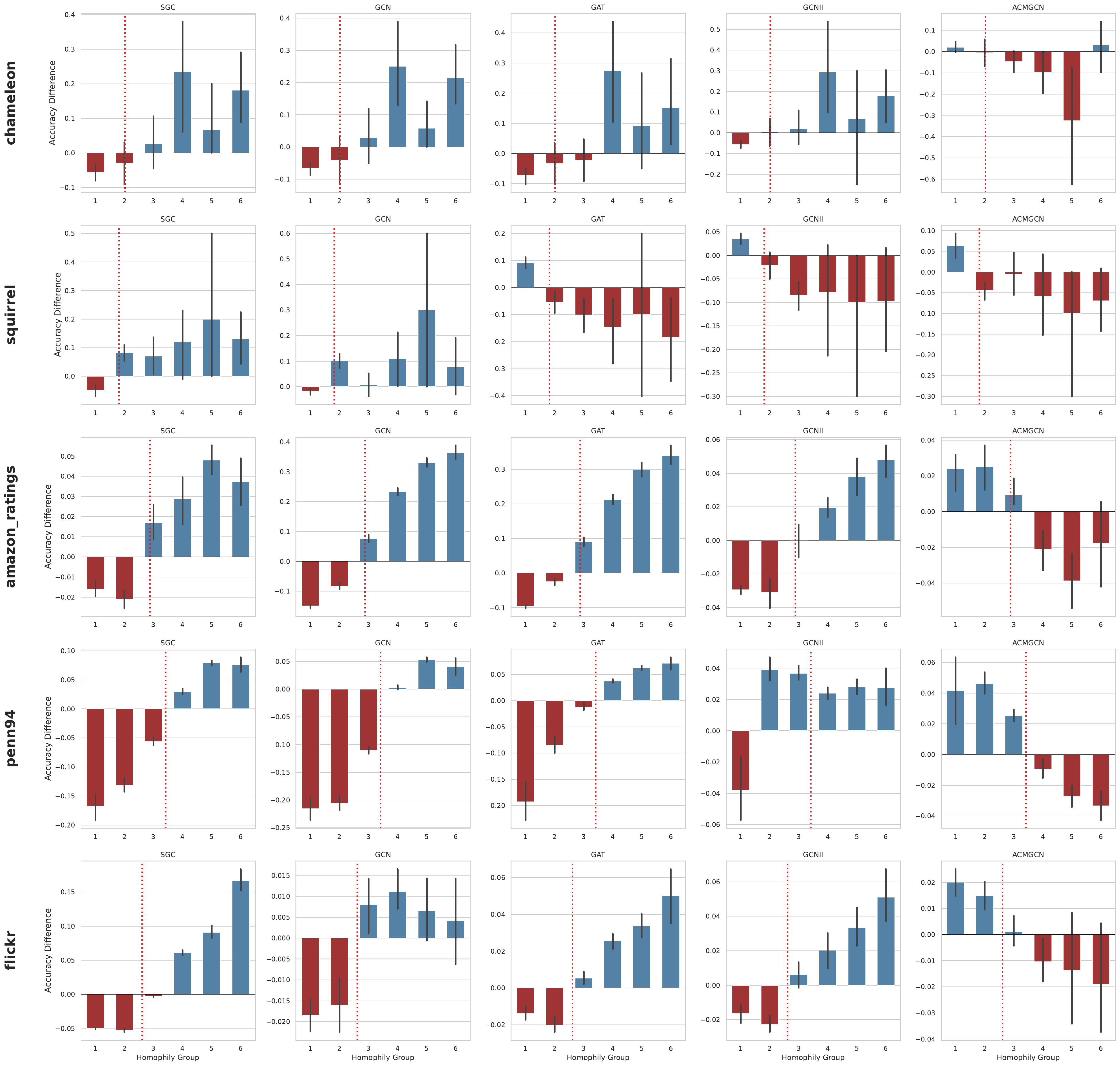}
    \caption{Test accuracy differences between deeper and shallow experts. Positive values (blue) indicate deeper experts perform better, while negative values show shallow experts perform better. The red dotted line shows the average homophily ratio of the training set. We have can observe the following trends. The deeper variants of homophilous GNNs (e.g., SGC, GCN) tend to perform worse than their shallow counterparts in heterophily regions. For GNNs designed for heterophily (e.g., GAT, GCNII, ACMGCN), in some cases, their deeper variants can outperform shallow ones in heterophily regions. Notably, the deeper variant of ACMGCN consistently outperforms its shallow variant across all datasets, which underscores the effectiveness of its high-pass filter in leveraging heterophily.}
     \label{fig: appendix_homophily}
    \vspace{-5mm}
\end{figure}

\begin{figure}[htb]
\vspace{5mm}
    \centering \hspace*{-1mm}
    \includegraphics[width=0.5\textwidth]{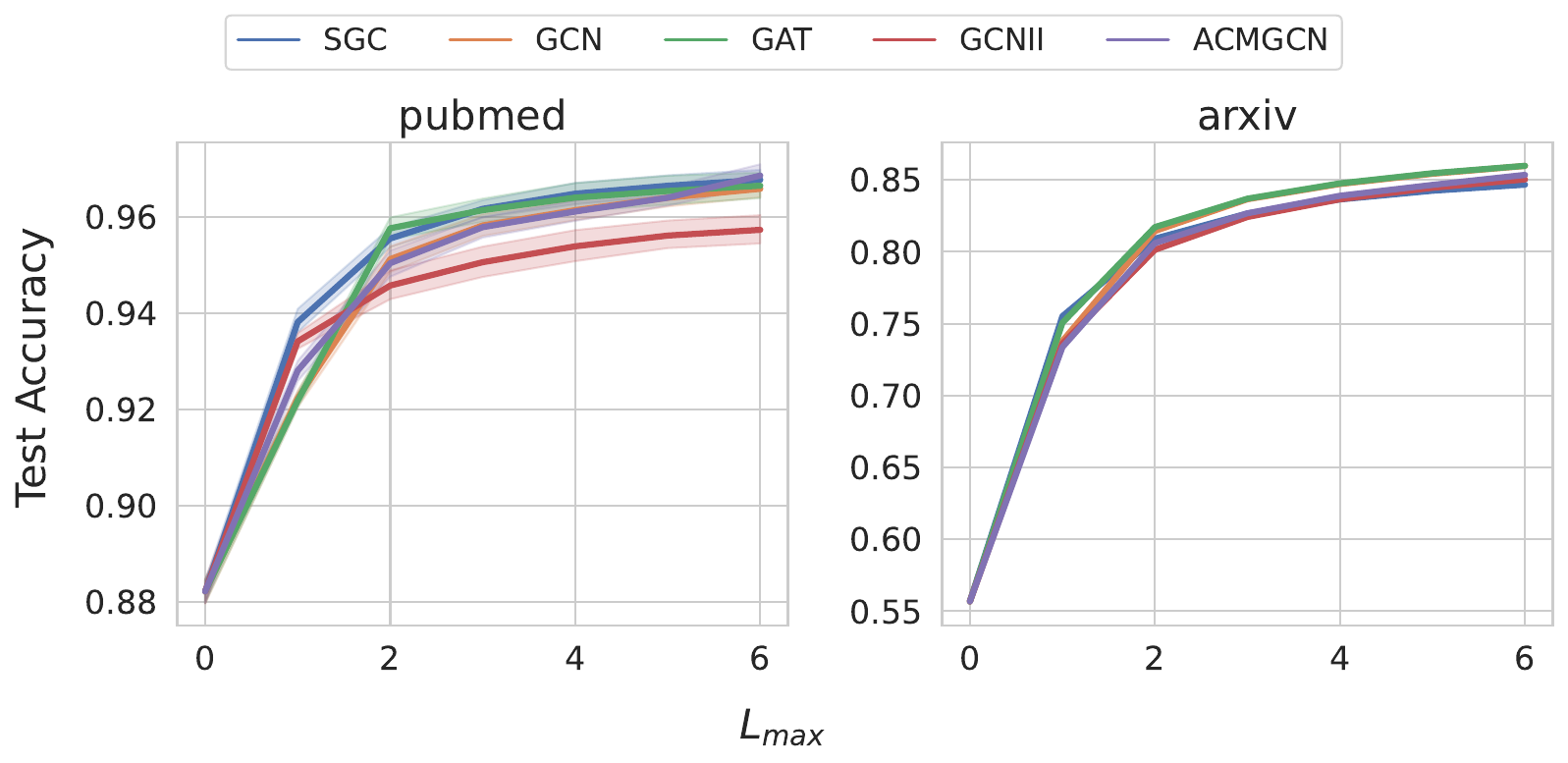}
    \vspace{-3mm}
    \caption{Upper-bound test accuracy achieved by scope expert ensemble on homophilous datasets. We report the percentage of nodes correctly predicted by at least one expert with the depth ranging from $L=0$ to $n$ (where $n \leq 6$).}
     \label{fig: appendix_pssc2_2}
     \vspace{-3mm}
\end{figure}

\begin{figure}[htbp]
    \centering \hspace*{-1mm}
    \includegraphics[width=\textwidth]{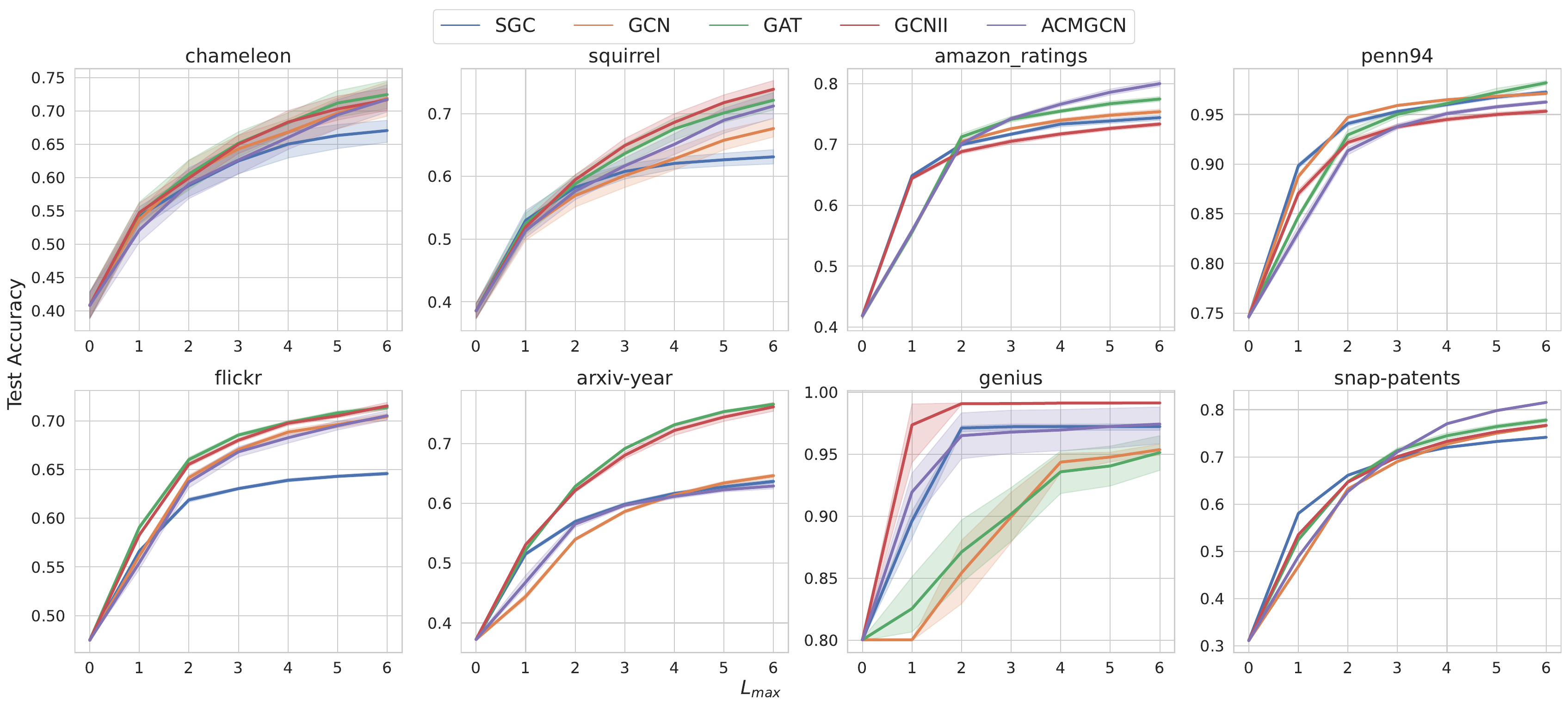}
    \caption{Upper-bound test accuracy achieved by scope expert ensemble on heterophilous datasets. We report the percentage of nodes correctly predicted by at least one expert with the depth ranging from $L=0$ to $n$ (where $n \leq 6$).}
     \label{fig: appendix_pssc2}
\end{figure}

\begin{figure}[htbp]
    \centering \hspace*{-1mm}
    \includegraphics[width=\textwidth]{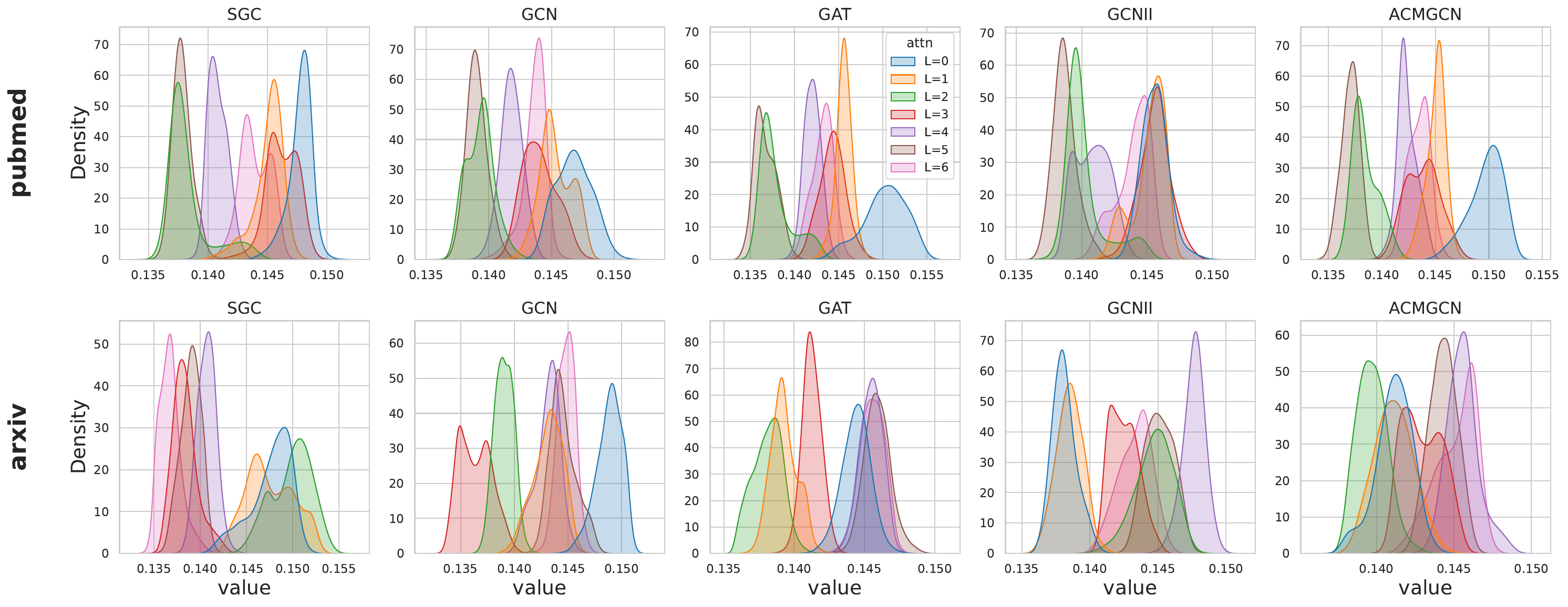}
    \caption{Learned attention-based gating weight distributions on homophilous datasets. Different color denotes the weight distributions for different scope experts.}
     \label{fig: appendix_gating_attn_2}
    \vspace{-5mm}
\end{figure}

\begin{figure}[htbp]
    \centering \hspace*{-1mm}
    \includegraphics[width=\textwidth]{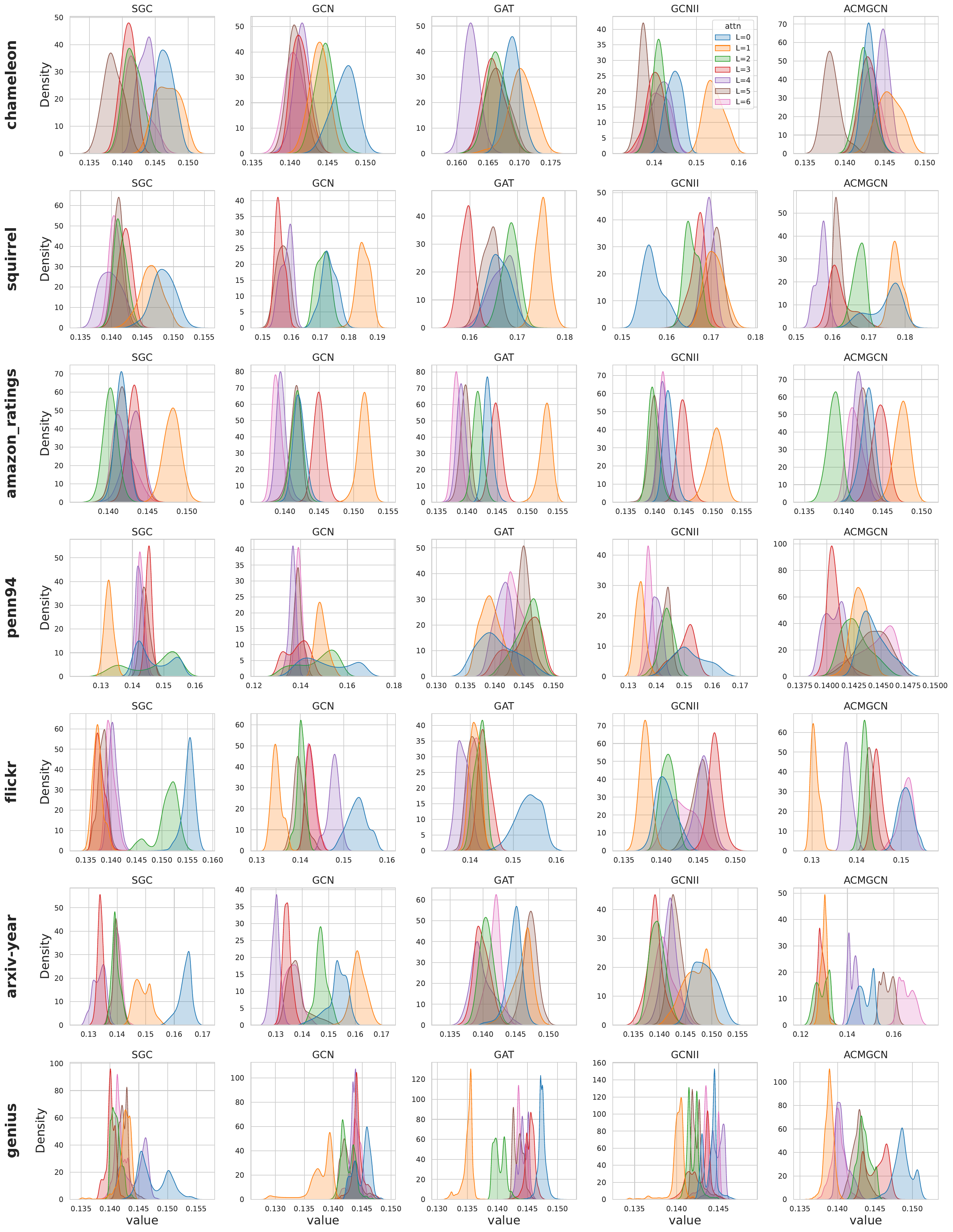}
    \caption{Learned attention-based gating weight distributions on heterophilous datasets. Different color denotes the weight distributions for different scope experts.}
     \label{fig: appendix_gating_attn}
    \vspace{-5mm}
\end{figure}

\subsubsection{Ensembling Upper-bound}
We further investigate the extent of the observed generalization disparity across different GNN architectures and datasets. Specifically, for a given set of scope experts, we determine the upper bound by calculating the percentage of nodes correctly predicted by at least one expert. Figures~\ref{fig: appendix_pssc2_2} and \ref{fig: appendix_pssc2} illustrate the upper bounds for homophilous and heterophilous datasets, respectively.

Across representative GNN architectures, all datasets exhibit sub-linear curves and notable accuracy improvements, ranging from approximately 10\% to 50\%, as $L_{\text{max}}$ increases. This trend suggests that test-time scaling is promising by increasing the number of experts for different scopes.

Furthermore, our findings indicate that GAT, GCNII, and ACMGCN are best performing architectures with sufficiently large $L_{\text{max}}$, whereas SGC tends to perform the worst. These results underscore the critical role of expert architectural design.

\subsection{Analysis of the Gating Weights} \label{sec: append_gating_weights}
Figure~\ref{fig: appendix_gating_attn_2} and Figure~\ref{fig: appendix_gating_attn} illustrate the distributions of gating weights for various experts in \AH . The results indicate that \AH effectively learns gating weights, thereby preventing the collapse issue (i.e., some experts receive extremely small weights across nodes) as noted in prior studies~\cite{graphgps, shazeer2017outrageously}. Moreover, the gating model tends to assign larger weights with greater variance to shallow experts (e.g., Scope-0 and Scope-1) compared to deeper ones. This behavior suggests that \AH learns more \textit{personalized} weights for shallow experts, while it adopts a more \textit{uniform} weighting scheme for deeper experts. One plausible explanation for this trend is that increasing the scope size leads to a higher likelihood of overlapping neighbors among different nodes. Therefore, the gating model tends to learn the same weight on deeper experts for these nodes.


\clearpage

\section*{NeurIPS Paper Checklist}

\begin{enumerate}

\item {\bf Claims}
    \item[] Question: Do the main claims made in the abstract and introduction accurately reflect the paper's contributions and scope?
    \item[] Answer: \answerYes{}
    \item[] Justification: Our abstract clearly reflects the contribution and scope of the paper.
    \item[] Guidelines:
    \begin{itemize}
        \item The answer NA means that the abstract and introduction do not include the claims made in the paper.
        \item The abstract and/or introduction should clearly state the claims made, including the contributions made in the paper and important assumptions and limitations. A No or NA answer to this question will not be perceived well by the reviewers. 
        \item The claims made should match theoretical and experimental results, and reflect how much the results can be expected to generalize to other settings. 
        \item It is fine to include aspirational goals as motivation as long as it is clear that these goals are not attained by the paper. 
    \end{itemize}

\item {\bf Limitations}
    \item[] Question: Does the paper discuss the limitations of the work performed by the authors?
    \item[] Answer: \answerYes{} 
    \item[] Justification: The proposed method might not be effective for datasets with high homophily ratios (e.g., with homophily score larger than 0.8). Please refer to Section~\ref{sec: empirical} for details.
    \item[] Guidelines:
    \begin{itemize}
        \item The answer NA means that the paper has no limitation while the answer No means that the paper has limitations, but those are not discussed in the paper. 
        \item The authors are encouraged to create a separate "Limitations" section in their paper.
        \item The paper should point out any strong assumptions and how robust the results are to violations of these assumptions (e.g., independence assumptions, noiseless settings, model well-specification, asymptotic approximations only holding locally). The authors should reflect on how these assumptions might be violated in practice and what the implications would be.
        \item The authors should reflect on the scope of the claims made, e.g., if the approach was only tested on a few datasets or with a few runs. In general, empirical results often depend on implicit assumptions, which should be articulated.
        \item The authors should reflect on the factors that influence the performance of the approach. For example, a facial recognition algorithm may perform poorly when image resolution is low or images are taken in low lighting. Or a speech-to-text system might not be used reliably to provide closed captions for online lectures because it fails to handle technical jargon.
        \item The authors should discuss the computational efficiency of the proposed algorithms and how they scale with dataset size.
        \item If applicable, the authors should discuss possible limitations of their approach to address problems of privacy and fairness.
        \item While the authors might fear that complete honesty about limitations might be used by reviewers as grounds for rejection, a worse outcome might be that reviewers discover limitations that aren't acknowledged in the paper. The authors should use their best judgment and recognize that individual actions in favor of transparency play an important role in developing norms that preserve the integrity of the community. Reviewers will be specifically instructed to not penalize honesty concerning limitations.
    \end{itemize}

\item {\bf Theory assumptions and proofs}
    \item[] Question: For each theoretical result, does the paper provide the full set of assumptions and a complete (and correct) proof?
    \item[] Answer: \answerYes{} 
    \item[] Justification: Our theory assumptions and proofs are mentioned in Section~\ref{sec: theory}. 
    \item[] Guidelines:
    \begin{itemize}
        \item The answer NA means that the paper does not include theoretical results. 
        \item All the theorems, formulas, and proofs in the paper should be numbered and cross-referenced.
        \item All assumptions should be clearly stated or referenced in the statement of any theorems.
        \item The proofs can either appear in the main paper or the supplemental material, but if they appear in the supplemental material, the authors are encouraged to provide a short proof sketch to provide intuition. 
        \item Inversely, any informal proof provided in the core of the paper should be complemented by formal proofs provided in appendix or supplemental material.
        \item Theorems and Lemmas that the proof relies upon should be properly referenced. 
    \end{itemize}

    \item {\bf Experimental result reproducibility}
    \item[] Question: Does the paper fully disclose all the information needed to reproduce the main experimental results of the paper to the extent that it affects the main claims and/or conclusions of the paper (regardless of whether the code and data are provided or not)?
    \item[] Answer: \answerYes{} 
    \item[] Justification:  In Section~\ref{sec: model} of the paper, we provide a detailed description of the method we propose. Experimental settings and hyperparameters are reported in the Appendix~\ref{sec: hyper}. 
    \item[] Guidelines:
    \begin{itemize}
        \item The answer NA means that the paper does not include experiments.
        \item If the paper includes experiments, a No answer to this question will not be perceived well by the reviewers: Making the paper reproducible is important, regardless of whether the code and data are provided or not.
        \item If the contribution is a dataset and/or model, the authors should describe the steps taken to make their results reproducible or verifiable. 
        \item Depending on the contribution, reproducibility can be accomplished in various ways. For example, if the contribution is a novel architecture, describing the architecture fully might suffice, or if the contribution is a specific model and empirical evaluation, it may be necessary to either make it possible for others to replicate the model with the same dataset, or provide access to the model. In general. releasing code and data is often one good way to accomplish this, but reproducibility can also be provided via detailed instructions for how to replicate the results, access to a hosted model (e.g., in the case of a large language model), releasing of a model checkpoint, or other means that are appropriate to the research performed.
        \item While NeurIPS does not require releasing code, the conference does require all submissions to provide some reasonable avenue for reproducibility, which may depend on the nature of the contribution. For example
        \begin{enumerate}
            \item If the contribution is primarily a new algorithm, the paper should make it clear how to reproduce that algorithm.
            \item If the contribution is primarily a new model architecture, the paper should describe the architecture clearly and fully.
            \item If the contribution is a new model (e.g., a large language model), then there should either be a way to access this model for reproducing the results or a way to reproduce the model (e.g., with an open-source dataset or instructions for how to construct the dataset).
            \item We recognize that reproducibility may be tricky in some cases, in which case authors are welcome to describe the particular way they provide for reproducibility. In the case of closed-source models, it may be that access to the model is limited in some way (e.g., to registered users), but it should be possible for other researchers to have some path to reproducing or verifying the results.
        \end{enumerate}
    \end{itemize}

\item {\bf Open access to data and code}
    \item[] Question: Does the paper provide open access to the data and code, with sufficient instructions to faithfully reproduce the main experimental results, as described in supplemental material?
    \item[] Answer: \answerYes{} 
    \item[] Justification: The code is available in an \href{https://anonymous.4open.science/r/adapt-hop-D7FC/README.md}{anonymous repository} and is included in the Supplemental Material.
    \item[] Guidelines:
    \begin{itemize}
        \item The answer NA means that paper does not include experiments requiring code.
        \item Please see the NeurIPS code and data submission guidelines (\url{https://nips.cc/public/guides/CodeSubmissionPolicy}) for more details.
        \item While we encourage the release of code and data, we understand that this might not be possible, so “No” is an acceptable answer. Papers cannot be rejected simply for not including code, unless this is central to the contribution (e.g., for a new open-source benchmark).
        \item The instructions should contain the exact command and environment needed to run to reproduce the results. See the NeurIPS code and data submission guidelines (\url{https://nips.cc/public/guides/CodeSubmissionPolicy}) for more details.
        \item The authors should provide instructions on data access and preparation, including how to access the raw data, preprocessed data, intermediate data, and generated data, etc.
        \item The authors should provide scripts to reproduce all experimental results for the new proposed method and baselines. If only a subset of experiments are reproducible, they should state which ones are omitted from the script and why.
        \item At submission time, to preserve anonymity, the authors should release anonymized versions (if applicable).
        \item Providing as much information as possible in supplemental material (appended to the paper) is recommended, but including URLs to data and code is permitted.
    \end{itemize}

\item {\bf Experimental setting/details}
    \item[] Question: Does the paper specify all the training and test details (e.g., data splits, hyperparameters, how they were chosen, type of optimizer, etc.) necessary to understand the results?
    \item[] Answer: \answerYes{} 
    \item[] Justification: Details of the experimental setup and hyperparameters are provided in the Appendix~\ref{sec: hyper}.
    \item[] Guidelines:
    \begin{itemize}
        \item The answer NA means that the paper does not include experiments.
        \item The experimental setting should be presented in the core of the paper to a level of detail that is necessary to appreciate the results and make sense of them.
        \item The full details can be provided either with the code, in appendix, or as supplemental material.
    \end{itemize}

\item {\bf Experiment statistical significance}
    \item[] Question: Does the paper report error bars suitably and correctly defined or other appropriate information about the statistical significance of the experiments?
    \item[] Answer: \answerYes{} 
    \item[] Justification: We report the standard deviation across multiple runs. 
    \item[] Guidelines:
    \begin{itemize}
        \item The answer NA means that the paper does not include experiments.
        \item The authors should answer "Yes" if the results are accompanied by error bars, confidence intervals, or statistical significance tests, at least for the experiments that support the main claims of the paper.
        \item The factors of variability that the error bars are capturing should be clearly stated (for example, train/test split, initialization, random drawing of some parameter, or overall run with given experimental conditions).
        \item The method for calculating the error bars should be explained (closed form formula, call to a library function, bootstrap, etc.)
        \item The assumptions made should be given (e.g., Normally distributed errors).
        \item It should be clear whether the error bar is the standard deviation or the standard error of the mean.
        \item It is OK to report 1-sigma error bars, but one should state it. The authors should preferably report a 2-sigma error bar than state that they have a 96\% CI, if the hypothesis of Normality of errors is not verified.
        \item For asymmetric distributions, the authors should be careful not to show in tables or figures symmetric error bars that would yield results that are out of range (e.g. negative error rates).
        \item If error bars are reported in tables or plots, The authors should explain in the text how they were calculated and reference the corresponding figures or tables in the text.
    \end{itemize}

\item {\bf Experiments compute resources}
    \item[] Question: For each experiment, does the paper provide sufficient information on the computer resources (type of compute workers, memory, time of execution) needed to reproduce the experiments?
    \item[] Answer: \answerYes{} 
    \item[] Justification: We include all the details in Section~\ref{sec: append_software_hardware}.
    \item[] Guidelines:
    \begin{itemize}
        \item The answer NA means that the paper does not include experiments.
        \item The paper should indicate the type of compute workers CPU or GPU, internal cluster, or cloud provider, including relevant memory and storage.
        \item The paper should provide the amount of compute required for each of the individual experimental runs as well as estimate the total compute. 
        \item The paper should disclose whether the full research project required more compute than the experiments reported in the paper (e.g., preliminary or failed experiments that didn't make it into the paper). 
    \end{itemize}
    
\item {\bf Code of ethics}
    \item[] Question: Does the research conducted in the paper conform, in every respect, with the NeurIPS Code of Ethics \url{https://neurips.cc/public/EthicsGuidelines}?
    \item[] Answer: \answerYes{} 
    \item[] Justification: Our research conform with the NeurIPS Code of Ethics.
    \item[] Guidelines:
    \begin{itemize}
        \item The answer NA means that the authors have not reviewed the NeurIPS Code of Ethics.
        \item If the authors answer No, they should explain the special circumstances that require a deviation from the Code of Ethics.
        \item The authors should make sure to preserve anonymity (e.g., if there is a special consideration due to laws or regulations in their jurisdiction).
    \end{itemize}

\item {\bf Broader impacts}
    \item[] Question: Does the paper discuss both potential positive societal impacts and negative societal impacts of the work performed?
    \item[] Answer:\answerNA{} 
    \item[] Justification: The paper proposes a general-purpose algorithmic method and does not target any specific application domain, so societal impacts are expected to be minimal and were not discussed.
    \item[] Guidelines:
    \begin{itemize}
        \item The answer NA means that there is no societal impact of the work performed.
        \item If the authors answer NA or No, they should explain why their work has no societal impact or why the paper does not address societal impact.
        \item Examples of negative societal impacts include potential malicious or unintended uses (e.g., disinformation, generating fake profiles, surveillance), fairness considerations (e.g., deployment of technologies that could make decisions that unfairly impact specific groups), privacy considerations, and security considerations.
        \item The conference expects that many papers will be foundational research and not tied to particular applications, let alone deployments. However, if there is a direct path to any negative applications, the authors should point it out. For example, it is legitimate to point out that an improvement in the quality of generative models could be used to generate deepfakes for disinformation. On the other hand, it is not needed to point out that a generic algorithm for optimizing neural networks could enable people to train models that generate Deepfakes faster.
        \item The authors should consider possible harms that could arise when the technology is being used as intended and functioning correctly, harms that could arise when the technology is being used as intended but gives incorrect results, and harms following from (intentional or unintentional) misuse of the technology.
        \item If there are negative societal impacts, the authors could also discuss possible mitigation strategies (e.g., gated release of models, providing defenses in addition to attacks, mechanisms for monitoring misuse, mechanisms to monitor how a system learns from feedback over time, improving the efficiency and accessibility of ML).
    \end{itemize}
    
\item {\bf Safeguards}
    \item[] Question: Does the paper describe safeguards that have been put in place for responsible release of data or models that have a high risk for misuse (e.g., pretrained language models, image generators, or scraped datasets)?
    \item[] Answer: \answerNA{} 
    \item[] Justification: The paper provides code but does not release any data or models with potential for misuse.
    \item[] Guidelines:
    \begin{itemize}
        \item The answer NA means that the paper poses no such risks.
        \item Released models that have a high risk for misuse or dual-use should be released with necessary safeguards to allow for controlled use of the model, for example by requiring that users adhere to usage guidelines or restrictions to access the model or implementing safety filters. 
        \item Datasets that have been scraped from the Internet could pose safety risks. The authors should describe how they avoided releasing unsafe images.
        \item We recognize that providing effective safeguards is challenging, and many papers do not require this, but we encourage authors to take this into account and make a best faith effort.
    \end{itemize}

\item {\bf Licenses for existing assets}
    \item[] Question: Are the creators or original owners of assets (e.g., code, data, models), used in the paper, properly credited and are the license and terms of use explicitly mentioned and properly respected?
    \item[] Answer: \answerYes{} 
    \item[] Justification: All of the creators or original owners of assets used in our paper are cited properly.
    \item[] Guidelines:
    \begin{itemize}
        \item The answer NA means that the paper does not use existing assets.
        \item The authors should cite the original paper that produced the code package or dataset.
        \item The authors should state which version of the asset is used and, if possible, include a URL.
        \item The name of the license (e.g., CC-BY 4.0) should be included for each asset.
        \item For scraped data from a particular source (e.g., website), the copyright and terms of service of that source should be provided.
        \item If assets are released, the license, copyright information, and terms of use in the package should be provided. For popular datasets, \url{paperswithcode.com/datasets} has curated licenses for some datasets. Their licensing guide can help determine the license of a dataset.
        \item For existing datasets that are re-packaged, both the original license and the license of the derived asset (if it has changed) should be provided.
        \item If this information is not available online, the authors are encouraged to reach out to the asset's creators.
    \end{itemize}

\item {\bf New assets}
    \item[] Question: Are new assets introduced in the paper well documented and is the documentation provided alongside the assets?
    \item[] Answer: \answerYes{} 
    \item[] Justification: The paper introduces a new algorithm and provides code with basic documentation.
    \item[] Guidelines:
    \begin{itemize}
        \item The answer NA means that the paper does not release new assets.
        \item Researchers should communicate the details of the dataset/code/model as part of their submissions via structured templates. This includes details about training, license, limitations, etc. 
        \item The paper should discuss whether and how consent was obtained from people whose asset is used.
        \item At submission time, remember to anonymize your assets (if applicable). You can either create an anonymized URL or include an anonymized zip file.
    \end{itemize}

\item {\bf Crowdsourcing and research with human subjects}
    \item[] Question: For crowdsourcing experiments and research with human subjects, does the paper include the full text of instructions given to participants and screenshots, if applicable, as well as details about compensation (if any)? 
    \item[] Answer: \answerNA{} 
    \item[] Justification: The paper does not involve crowdsourcing nor research with human subjects.
    \item[] Guidelines:
    \begin{itemize}
        \item The answer NA means that the paper does not involve crowdsourcing nor research with human subjects.
        \item Including this information in the supplemental material is fine, but if the main contribution of the paper involves human subjects, then as much detail as possible should be included in the main paper. 
        \item According to the NeurIPS Code of Ethics, workers involved in data collection, curation, or other labor should be paid at least the minimum wage in the country of the data collector. 
    \end{itemize}

\item {\bf Institutional review board (IRB) approvals or equivalent for research with human subjects}
    \item[] Question: Does the paper describe potential risks incurred by study participants, whether such risks were disclosed to the subjects, and whether Institutional Review Board (IRB) approvals (or an equivalent approval/review based on the requirements of your country or institution) were obtained?
    \item[] Answer: \answerNA{} 
    \item[] Justification: The paper does not involve human subjects or crowdsourcing.
    \item[] Guidelines:
    \begin{itemize}
        \item The answer NA means that the paper does not involve crowdsourcing nor research with human subjects.
        \item Depending on the country in which research is conducted, IRB approval (or equivalent) may be required for any human subjects research. If you obtained IRB approval, you should clearly state this in the paper. 
        \item We recognize that the procedures for this may vary significantly between institutions and locations, and we expect authors to adhere to the NeurIPS Code of Ethics and the guidelines for their institution. 
        \item For initial submissions, do not include any information that would break anonymity (if applicable), such as the institution conducting the review.
    \end{itemize}

\item {\bf Declaration of LLM usage}
    \item[] Question: Does the paper describe the usage of LLMs if it is an important, original, or non-standard component of the core methods in this research? Note that if the LLM is used only for writing, editing, or formatting purposes and does not impact the core methodology, scientific rigorousness, or originality of the research, declaration is not required.
    \item[] Answer: \answerNA{} 
    \item[] Justification: LLMs were only used for minor editing.
    \item[] Guidelines:
    \begin{itemize}
        \item The answer NA means that the core method development in this research does not involve LLMs as any important, original, or non-standard components.
        \item Please refer to our LLM policy (\url{https://neurips.cc/Conferences/2025/LLM}) for what should or should not be described.
    \end{itemize}

\end{enumerate}

\end{document}